\declaretheorem[name=Theorem,numberwithin=section]{thr}
\declaretheorem[name=Definition,numberwithin=section]{defi}
\declaretheorem[name=Proposition,numberwithin=section]{prop}
\declaretheorem[name=Lemma,numberwithin=section]{lemma}
\declaretheorem[name=Identification Rule,numberwithin=section]{test}
\newcommand{\Renyi}{R\'{e}nyi }
\DeclareRobustCommand{\eg}{e.g.,\@\xspace}                         
\DeclareRobustCommand{\ie}{i.e.,\@\xspace}                         
\DeclareRobustCommand{\wrt}{w.r.t.\@\xspace}                       
\newcommand{\mathbr}[1]{\bm{\mathbf{#1}}}
\DeclareRobustCommand{\quotes}[1]{``#1''}
\DeclareMathOperator*{\E}{\mathbb{E}}
\DeclareMathOperator*{\Cov}{\mathbb{C}\mathrm{ov}}
\DeclareMathOperator*{\argmin}{arg\,min}
\DeclareMathOperator*{\argmax}{arg\,max}
\DeclareMathOperator*{\argsup}{arg\,sup}
\newcommand{\interval}{\{1,...,d\}}
\newcommand{\BigO}{\mathcal{O}}
\newcommand{\citet}[1]{\citeauthor{#1} \shortcite{#1}}
\newcommand{\citep}{\cite}
\newcommand{\citealp}[1]{\citeauthor{#1} \citeyear{#1}}
\title{Policy Space Identification in Configurable Environments}
\author{
Alberto Maria Metelli, Guglielmo Manneschi, Marcello Restelli\\
Dipartimento di Elettronica, Informazione e Bioingegneria\\
Politecnico di Milano\\
Piazza Leonardo da Vinci, 32, 20133, Milano, Italy\\
\small {albertomaria.metelli@polimi.it}, {guglielmo.manneschi@mail.polimi.it}, {marcello.restelli@polimi.it}
}
\begin{document}
\setlength{\abovedisplayskip}{2pt}
\setlength{\belowdisplayskip}{2pt}

\maketitle
\begin{abstract}
\begin{quote}
We study the problem of identifying the policy space of a learning agent, having access to a set of demonstrations generated by its optimal policy. We introduce an approach based on statistical testing to identify the set of policy parameters the agent can control, within a larger parametric policy space. After presenting two identification rules (combinatorial and simplified), applicable under different assumptions on the policy space, we provide a probabilistic analysis of the simplified one in the case of linear policies belonging to the exponential family. To improve the performance of our identification rules, we frame the problem in the recently introduced framework of the Configurable Markov Decision Processes, exploiting the opportunity of configuring the environment to induce the agent revealing which parameters it can control. Finally, we provide an empirical evaluation, on both discrete and continuous domains, to prove the effectiveness of our identification rules.
\end{quote}
\end{abstract}

\section{Introduction}\label{sec:intro}
Reinforcement Learning (RL,~\citealp{sutton2018reinforcement}) deals with sequential decision--making problems in which an
artificial agent interacts with an environment by sensing \emph{perceptions} and performing \emph{actions}. The agent's goal is to find an optimal policy, \ie a prescription of actions that maximizes the (possibly discounted) cumulative reward collected during its interaction with the environment. The performance of an agent in 
an environment is constrained by its perception and actuation possibilities, along with the ability in \emph{mapping} observations to actions. These three elements define the agent's \emph{policy space}. Agents with different policy spaces could display different optimal behaviors, even in the same environment. Therefore, the notion of optimality is necessarily connected to the agent's policy space. While in tabular RL we typically assume access to the full (and finite) space of Markovian stationary policies, in continuous control, the policy space needs to be limited. In policy search methods~\cite{deisenroth2013survey}, the policy is explicitly modeled considering a parametric functional space~\cite{sutton1999policy,peters2008reinforcement} or a kernel space~\cite{deisenroth2011pilco,levine2013guided}; but also in value--based RL, a function approximator induces a set of representable (greedy) policies. 

The knowledge of the agent's policy space could be of crucial importance when the learning process involves the presence of an external supervisor. Recently, the notion of Configurable Markov Decision Process (Conf--MDP,~\citealp{metelli2018configurable}) has been introduced to account for the real--world scenarios in which it is possible to exercise a, maybe partial, control over the environment, by means of a set of environmental parameters (\eg \citealp{silva2018what}; \citealp{silva2019atheoretical}). This activity, called \emph{environment configuration}, can be carried out by the agent itself or by an external supervisor. While previous works focused on the former case (\eg~\citealp{metelli2018reinforcement}), in this paper, we explicitly consider the presence of a supervisor who acts on the environment with the goal of finding the most suitable configuration for the agent. 

Intuitively, the best environment configuration is intimately related to the possibilities of the agent in terms of policy space. For instance, in a car racing problem, the best car configuration depends on the car driver and has to be selected, by a track engineer (the supervisor), according to the driver's skills. Thus, the external supervisor has to be aware of the agent's policy space. Besides the Conf--MDPs, there are other contexts in which knowing the policy space can be beneficial, such as Imitation Learning, \ie the framework in which an agent learns by observing an expert~\cite{osa2018analgorithmic}. In behavioral cloning, where recovering an imitating policy is cast as a supervised learning problem~\cite{argall2009asurvey}, knowing the expert's policy space means knowing a suitable hypothesis space, preventing possible over/underfitting phenomena. However, also Inverse Reinforcement Learning algorithms (IRL,~\citealp{ng2000algorithms}), whose goal is to retrieve a reward function explaining the expert's behavior, can gain some advantages. In particular, the IRL approaches based on the policy gradient (\eg~\citealp{pirotta2016inverse}; \citealp{metelli2017compatible}; \citealp{tateo2017gradient}) require a parametric representation of the expert's policy, whose choice might affect the quality of the recovered reward function.
%

In this paper, motivated by the examples presented above, we study the problem of \emph{identifying} the agent's policy space in a Conf--MDP, by observing the agent's behavior and, possibly, exploiting the \emph{configuration} opportunities of the environment. We consider the case in which the policy space of the agent is a subset of a known super--policy space $\Pi_{\Theta}$ induced by a parameter space $\Theta \subseteq \mathbb{R}^d$. Thus, any policy $\pi_{\mathbr{\theta}}$ is determined by a $d$--dimensional parameter vector $\mathbr{\theta} \in \Theta$. However, the agent has control over a smaller number $d^* < d$ of parameters (which are unknown), while the remaining ones have a fixed value, namely zero.\footnote{The choice of zero, as fixed value, might appear arbitrary, but it is rather a common case in practice. For instance, in a linear policy the fact that the agent does not observe a state feature is equivalent to set the corresponding parameters to zero. In a neural network, removing a neuron is equivalent to neglect all its connections, which in turn can be realized by setting the relative weights to zero.} Our goal is to identify the parameters that the agent can control (and possibly change) by observing demonstrations of the optimal policy $\pi^*$. It is worth noting that the formulation based on the identification of the \emph{parameters} effectively covers the limitations of the policy space related to perception, actuation, and mapping.
To this end, we formulate the problem as deciding whether each parameter $\theta_i$ for $i \in \interval$ is zero, and we address it by means of a statistical test. In other words, we check whether there is a statistically significant difference between the likelihood of the agent's behavior with the full set of parameters and the one in which $\theta_i$ is set to zero. In such case, we conclude that $\theta_i$ is not zero and, consequently, the agent can control it. On the contrary, either the agent cannot control the parameter or zero is the value consciously chosen by the agent. 

Indeed, there could be parameters that, given the peculiarities of the environment, are useless for achieving the optimal behavior or whose optimal value is actually zero, while they could prove to be essential in a different environment. For instance, in a grid world where the goal is to reach the right edge, the vertical position of the agent is useless, while if the goal is to reach the upper right corner both horizontal and vertical positions become relevant.
In this spirit, configuring the environment can help the supervisor in identifying whether a parameter set to zero is actually uncontrollable by the agent or just useless in the current environment. Thus, the supervisor can change the environment configuration $\mathbr{\omega} \in \Omega$, so that the agent will adjust its policy, possibly changing the parameter value and revealing whether it can control such parameter. Thus, the new configuration should induce an optimal policy in which the considered parameters have a value significantly different from zero. We formalize this notion as the problem of finding the new environment configuration that maximizes the \emph{power} of the statistical test and we propose a surrogate objective for this purpose. 

It is worth emphasizing that we use the Conf--MDP notion for two purposes. First, we propose the problem of learning the optimal configuration in a Conf--MDP as a motivating example in which the knowledge of the policy space is valuable. Second, we use the environment configurability as a tool to improve the identification of the policy space. 

The paper is organized as follows. In Section~\ref{sec:preliminaries}, we introduce the necessary background. The \emph{identification rules} to perform parameter identification in a fixed environment are presented in Section~\ref{sec:fixedEnv} and analyzed in Section~\ref{sec:analysis}. Section~\ref{sec:confEnv} shows how to improve them by exploiting the environment configurability. Finally, the experimental evaluation, on discrete and continuous domains, is provided in Section~\ref{sec:experiments}. The proofs of all the results can be found in Appendix~\ref{apx:proofs}.

%
%

\section{Preliminaries}\label{sec:preliminaries}
In this section, we report the essential background that will be used in the subsequent sections.

\paragraph{(Configurable) Markov Decision Processes} A discrete--time Markov Decision Process (MDP,~\citealp{puterman2014markov}) is defined by the tuple $\mathcal{M} = \left(\mathcal{S}, \mathcal{A}, p, \mu, r, \gamma \right)$, where $\mathcal{S}$ and $\mathcal{A}$ are the state space and the action space respectively, $p$ is the transition model that provides, for every state-action pair $(s,a) \in \mathcal{S \times A}$, a probability distribution over the next state $p(\cdot|s,a)$, $\mu$ is the distribution of the initial state, $r$ is the reward model defining the reward collected by the agent $r(s,a)$ when performing action $a\in \mathcal{A}$ in state $s\in\mathcal{S}$, and $\gamma \in [0,1]$ is the discount factor. The behavior of an agent is defined by means of a policy $\pi$ that provides a probability distribution over the actions $\pi(\cdot|s)$ for every state $s \in \mathcal{S}$. An MDP $\mathcal{M}$ paired with a policy $\pi$ induces a $\gamma$--discounted stationary distribution over the states~\cite{sutton1999policy}, defined as $d_{\mu}^{\pi}(s) = (1-\gamma) \sum_{t=0}^{+\infty} \gamma^t \Pr \left(s_t = s | \mathcal{M},\,\pi \right)$. We limit the scope to parametric policy spaces $\Pi_\Theta = \left\{ \pi_{\mathbr{\theta}} : \mathbr{\theta} \in \Theta \right\}$, where $\Theta \subseteq \mathbb{R}^d$ is the parameter space. The goal of the agent is to find an optimal policy, \ie any policy parametrization that maximizes the \emph{expected return}: 
\begin{equation} 
	\mathbr{\theta}^* \in \argmax_{\mathbr{\theta} \in \Theta} J_{\mathcal{M}}(\mathbr{\theta}) = \frac{1}{1-\gamma}\E_{\substack{s \sim d_{\mu}^{\pi_{\mathbr{\theta}}} \\ a \sim \pi_{\mathbr{\theta}} (\cdot|s)}} \left[r(s,a) \right].
\end{equation}
In this paper, we consider a slightly modified version of the Conf--MDPs~\cite{metelli2018configurable}.
\begin{defi}
A Configurable Markov Decision Process (Conf--MDP) induced by the configuration space $\Omega \subseteq \mathbb{R}^p$ is defined as the set of MDPs $\mathcal{C}_{\Omega} = \left\{ \mathcal{M}_{\mathbr{\omega}} = \left( \mathcal{S}, \mathcal{A}, p_{\mathbr{\omega}}, \mu_{\mathbr{\omega}}, r, \gamma \right)\,:\, \mathbr{\omega} \in \Omega \right\}$.
\end{defi}
The main differences \wrt the original definition are: i) we allow the configuration of the initial state distribution $\mu_{\mathbr{\omega}}$, in addition to the transition model $p_{\mathbr{\omega}}$; ii) we restrict to the case of parametric configuration spaces $\Omega$; iii) we do not consider the policy space $\Pi_\Theta$ as a part of the Conf--MDP.  


\paragraph{Generalized Likelihood Ratio Test}
The Generalized Likelihood Ratio test~(GLR, \citealp{barnard1959control}; \citealp{casella2002statistical}) aims at testing the goodness of fit of two statistical models. Given a parametric model having density function $p_{\mathbr{\theta}}$ with $\mathbr{\theta} \in \Theta$, we aim at testing the null hypothesis $\mathcal{H}_0 : \mathbr{\theta}^* \in \Theta_0$, where $\Theta_0 \subset \Theta$ is a subset of the parametric space, against the alternative $\mathcal{H}_1 : \mathbr{\theta}^* \in \Theta \setminus \Theta_0$. Given a dataset $\mathcal{D} = \left\{ X_i \right\}_{i=1}^n$ sampled independently from $p_{\mathbr{\theta}^*}$, where $\mathbr{\theta}^*$ is the true parameter, the GLR statistic is:
\begin{equation}
	\Lambda = \frac{\sup_{\mathbr{\theta} \in \Theta_0} \widehat{\mathcal{L}}(\mathbr{\theta}) }{\sup_{\mathbr{\theta} \in \Theta}  \widehat{\mathcal{L}}(\mathbr{\theta})},
\end{equation}
where $\widehat{\mathcal{L}}(\mathbr{\theta}) = \prod_{i=1}^n p_{\mathbr{\theta}}(X_i)$ is the likelihood function. We denote with $\widehat{\ell}(\mathbr{\theta}) = -\log \widehat{\mathcal{L}}(\mathbr{\theta})$ the negative log--likelihood function, $\widehat{\mathbr{\theta}} \in \argsup_{\mathbr{\theta} \in \Theta} \widehat{\mathcal{L}}(\mathbr{\theta}) $ and $\widehat{\mathbr{\theta}}_0 \in \argsup_{\mathbr{\theta} \in \Theta_0}  \widehat{\mathcal{L}}(\mathbr{\theta})$, \ie the maximum likelihood solutions in $\Theta$ and $\Theta_0$ respectively. Moreover, we define the expectation of the likelihood under the true parameter: $\ell(\mathbr{\theta}) = \E_{X_i \sim p_{\mathbr{\theta}^*}} [\widehat{\ell}(\mathbr{\theta})]$. As the maximization is carried out employing the same dataset $\mathcal{D}$ and recalling that $\Theta_0 \subset \Theta$, we have that $\Lambda \in [0,1]$. 
It is usually convenient to consider the logarithm of the GLR statistic: $\lambda = -2 \log \Lambda = 2 (\widehat{\ell}(\widehat{\mathbr{\theta}}_0) - \widehat{\ell}(\widehat{\mathbr{\theta}}) )$. Therefore, $\mathcal{H}_0$ is rejected for large values of $\lambda$, \ie when the maximim likelihood parameter searched in the restricted set $\Theta_0$ significantly underfits the data $\mathcal{D}$, compared to $\Theta$. Wilk's theorem provides the asymptomatic distribution of $\lambda$ when $\mathcal{H}_0$ is true~\cite{wilks1938large,casella2002statistical}.

\begin{restatable}[\citealp{casella2002statistical}, Theorem 10.3.3]{thr}{}\label{thr:wilks}
	Let $d = \mathrm{dim}(\Theta)$ and $d_0 = \mathrm{dim}(\Theta_0) < d$. Under suitable regularity conditions (see~\citet{casella2002statistical} 10.6.2), if $\mathcal{H}_0$ is true, then when $n \rightarrow +\infty$, the distribution of $\lambda$ tends to a $\chi^2$ distribution with $d-d_0$ degrees of freedom.
\end{restatable}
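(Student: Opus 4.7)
The plan is to follow the classical approach based on a second-order Taylor expansion of the log-likelihood around the unrestricted maximum likelihood estimator, combined with the asymptotic normality of the MLE. Under $\mathcal{H}_0$ the true parameter $\mathbr{\theta}^*$ lies in $\Theta_0$, so the regularity conditions guarantee that both $\widehat{\mathbr{\theta}}$ and $\widehat{\mathbr{\theta}}_0$ are consistent for $\mathbr{\theta}^*$, the Fisher information $I(\mathbr{\theta}^*)$ is well defined and positive definite, and the score satisfies the usual central limit theorem.

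First I would expand $\widehat{\ell}(\widehat{\mathbr{\theta}}_0)$ to second order around $\widehat{\mathbr{\theta}}$. Because $\widehat{\mathbr{\theta}}$ is an interior unrestricted maximizer, the first-order term vanishes and
\[
\lambda = 2\bigl(\widehat{\ell}(\widehat{\mathbr{\theta}}_0) - \widehat{\ell}(\widehat{\mathbr{\theta}})\bigr) = (\widehat{\mathbr{\theta}}_0 - \widehat{\mathbr{\theta}})^\top \nabla^2 \widehat{\ell}(\widetilde{\mathbr{\theta}}) (\widehat{\mathbr{\theta}}_0 - \widehat{\mathbr{\theta}})
\]
for some intermediate $\widetilde{\mathbr{\theta}}$. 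A uniform law of large numbers combined with the consistency of $\widehat{\mathbr{\theta}}$ and $\widehat{\mathbr{\theta}}_0$ gives $\frac{1}{n}\nabla^2 \widehat{\ell}(\widetilde{\mathbr{\theta}}) \to I(\mathbr{\theta}^*)$ in probability, so the statistic reduces to $n(\widehat{\mathbr{\theta}} - \widehat{\mathbr{\theta}}_0)^\top I(\mathbr{\theta}^*) (\widehat{\mathbr{\theta}} - \widehat{\mathbr{\theta}}_0) + o_P(1)$. Next I would, without loss of generality, locally reparametrize $\Theta_0$ as the coordinate subspace in which the last $d - d_0$ components of $\mathbr{\theta}$ are fixed at their true values, and write the score and the Hessian in the corresponding block form.

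Applying the standard asymptotic expansion to the two MLEs, namely $\sqrt{n}(\widehat{\mathbr{\theta}} - \mathbr{\theta}^*) \xrightarrow{d} \mathcal{N}(\mathbf{0}, I(\mathbr{\theta}^*)^{-1})$ and its analogue for $\widehat{\mathbr{\theta}}_0$ restricted to the tangent directions of $\Theta_0$, a partitioned-inverse computation identifies $\sqrt{n}\, I(\mathbr{\theta}^*)^{1/2}(\widehat{\mathbr{\theta}} - \widehat{\mathbr{\theta}}_0)$ as a Gaussian supported on a $(d - d_0)$-dimensional subspace whose covariance is the orthogonal projector onto that subspace. Its squared norm is therefore $\chi^2_{d - d_0}$ in the limit, which is exactly the asserted distribution of $\lambda$.

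The main technical obstacle is the partitioned-matrix bookkeeping that isolates the \emph{free} from the \emph{constrained} directions and yields precisely $d - d_0$ independent standard normals; this is where every regularity condition of Casella and Berger (10.6.2) is used, specifically that $\mathbr{\theta}^*$ is interior to $\Theta_0$, that $\Theta_0$ is locally a smooth submanifold of $\Theta$, that the density is sufficiently smooth in $\mathbr{\theta}$, that the Fisher information is positive definite, and that one may legitimately interchange differentiation and integration. Since the statement is a textbook result reported verbatim, I would ultimately appeal to the cited reference rather than write out this machinery in full.
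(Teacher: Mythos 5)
The paper does not prove this statement: it is Wilks' theorem, imported verbatim from Casella and Berger (Theorem 10.3.3) and used as a black box to set the critical values $c(l)$ in the experiments. Your sketch is the standard textbook argument from that reference --- second-order expansion of $\widehat{\ell}$ around the unrestricted MLE, convergence of the scaled Hessian to the Fisher information, and the partitioned-inverse identification of the limiting quadratic form as a $\chi^2_{d-d_0}$ --- and it is correct in outline, so it matches the (cited) proof the paper relies on.
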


The \emph{significance} of a test $\alpha \in [0,1]$, or \emph{type I error} probability, is the probability to reject $\mathcal{H}_0$ when $\mathcal{H}_0$ is true, while the \emph{power} of a test $1-\beta \in [0,1]$ is the probability to reject $\mathcal{H}_0$ when $\mathcal{H}_0$ is false, $\beta$ is the \emph{type II error} probability.

\section{Policy Space Identification in a Fixed Environment}\label{sec:fixedEnv}
As we introduced in Section~\ref{sec:intro}, we aim at identifying the agent's policy space, by observing a set of demonstrations coming from the optimal policy $\pi^* \in \Pi_{\Theta}$\footnote{We do not explicitly report the dependence on the agent's parameter $\mathbr{\theta}^* \in \Theta$ as, in the general case, there might exist multiple parameters yielding the same policy $\pi^*$.} only, \ie $\mathcal{D} = \{(s_i,a_i)\}_{i=1}^n$ where $s_i \sim d_{\mu}^{\pi^*}$ and $a_i \sim \pi^*(\cdot|s_i)$ sampled independently. In particular, we assume that the agent has control over a limited number of parameters $d^* < d$ whose value can be changed during learning, while the remaining $d-d^*$ are kept fixed to zero.\footnote{The extension of the identification rules to (known) fixed values different from zero is straightforward.} Given a set of indexes $I \subseteq \interval$ we define the subset of the parameter space:  $\Theta_I = \left\{ \mathbr{\theta} \in \Theta : \theta_i = 0,\, \forall i \in I \setminus \interval \right\}$. Thus, the set $I$ represents the indexes of the parameters that can be changed if the agent's parameter space were $\Theta_I$. Our goal is to find a set of parameter indexes $I^*$ that are \emph{sufficient} to explain the agent's policy, \ie $\pi^* \in \Pi_{\Theta_{I^*}}$ but also \emph{necessary}, in the sense that when removing any $i \in I^*$ the remaining ones are insufficient to explain the agent's policy, \ie $\pi^* \notin\Pi_{\Theta_{I^* \setminus \{i\}}}$. We formalize these notions in the following definition.
\begin{restatable}[Correctness]{defi}{}\label{eq:problemOriginal}
	Let $\pi^* \in \Pi_{\Theta}$. A set of parameter indexes $I^* \subseteq \interval$ is \emph{correct} \wrt $\pi^*$ if:
	\begin{equation*}
	\pi^* \in \Pi_{\Theta_{I^*}} \, \wedge \,\forall i \in {I^*} : \pi^* \notin \Pi_{\Theta_{I^* \setminus \{i\}}}.
	\end{equation*}
	We denote with $\mathcal{I}^*$ the set of all correct $I^*$.
\end{restatable}

The uniqueness of $I^*$ is guaranteed under the assumption that each policy admits a unique representation in $\Pi_\Theta$.
\begin{restatable}[Identifiability]{ass}{}\label{ass:identifiability}
	The policy space $\Pi_{\Theta}$ is \emph{identifiable}, \ie for all $\mathbr{\theta},\mathbr{\theta}' \in \Theta$, we have:
	\begin{equation*}
		\pi_{\mathbr{\theta}} = \pi_{\mathbr{\theta}'}\; \text{almost surely} \implies \mathbr{\theta} = \mathbr{\theta}'.
	\end{equation*}
\end{restatable}

The identifiability property allows rephrasing Definition~\ref{eq:problemOriginal} in terms of the policy parameters only.

\begin{restatable}[]{lemma}{lemmaRuleIdent}\label{lemma:lemmaRuleIdent}
	Under Assumption~\ref{ass:identifiability}, let $\mathbr{\theta}^* \in \Theta$ be the unique parameter such that $\pi_{\mathbr{\theta}^*} = \pi^*$ almost surely. Then, there exists a unique set of parameter indexes $I^* \subseteq \interval$ (\ie $\mathcal{I}^* = \{I^*\}$) that is correct \wrt $\pi^*$:
	\begin{equation*}
	I^* = \left\{ i \in \interval \,:\, \theta_i^* \neq 0 \right\}.
	\end{equation*}
\end{restatable}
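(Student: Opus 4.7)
The plan is to take the candidate set $I^* = \{i \in \interval : \theta^*_i \neq 0\}$ and verify both requirements of correctness (sufficiency and necessity), then show any other correct set coincides with it. Throughout, the key lever is Assumption~\ref{ass:identifiability}, which turns a statement about policies into a statement about their unique parameter representatives.

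For sufficiency I would argue directly: by construction $\theta^*_i = 0$ for every $i \notin I^*$, so $\mathbr{\theta}^* \in \Theta_{I^*}$, and therefore $\pi^* = \pi_{\mathbr{\theta}^*} \in \Pi_{\Theta_{I^*}}$. For necessity, I would fix an arbitrary $i \in I^*$ and argue by contradiction. If $\pi^* \in \Pi_{\Theta_{I^* \setminus \{i\}}}$, pick a witness $\mathbr{\theta}' \in \Theta_{I^* \setminus \{i\}}$ with $\pi_{\mathbr{\theta}'} = \pi^*$ almost surely. Identifiability forces $\mathbr{\theta}' = \mathbr{\theta}^*$, but $\theta'_i = 0$ by definition of $\Theta_{I^*\setminus\{i\}}$, contradicting $\theta^*_i \neq 0$. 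So $I^*$ is correct.

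For uniqueness, let $I'$ be any set correct w.r.t.\ $\pi^*$. From $\pi^* \in \Pi_{\Theta_{I'}}$ together with identifiability, the unique parameter representing $\pi^*$ must satisfy $\theta^*_j = 0$ for all $j \notin I'$, hence $I^* \subseteq I'$. To obtain the reverse inclusion, suppose there were some $i \in I' \setminus I^*$. Then $\theta^*_i = 0$ and also $\theta^*_j = 0$ for all $j \notin I'$ (since $I^* \subseteq I'$), which together say $\mathbr{\theta}^* \in \Theta_{I' \setminus \{i\}}$, giving $\pi^* \in \Pi_{\Theta_{I'\setminus\{i\}}}$ and contradicting the necessity clause of correctness for $I'$. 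Hence $I' = I^*$, proving $\mathcal{I}^* = \{I^*\}$.

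I do not expect any real obstacle; the only subtlety is being careful about the direction of the set containments $\Theta_{I^*\setminus\{i\}} \subset \Theta_{I^*}$ and how identifiability is invoked (it must be applied to the pair $(\mathbr{\theta}^*, \mathbr{\theta}')$ rather than to policy spaces directly). Everything else is bookkeeping on the support of the parameter vector $\mathbr{\theta}^*$.
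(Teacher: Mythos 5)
Your proof is correct and takes essentially the same route as the paper: identifiability collapses the policy-level conditions $\pi^* \in \Pi_{\Theta_I}$ into parameter-level support conditions on $\mathbr{\theta}^*$, which the paper writes as a single chain of equivalences and you unpack into explicit sufficiency, necessity, and uniqueness steps. No gaps.
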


The following two subsections are devoted to the presentation of the \emph{identification rules} based on the application of Definition~\ref{eq:problemOriginal} (Section~\ref{sec:identif}) and Lemma~\ref{lemma:lemmaRuleIdent} (Section~\ref{sec:identifSimp}) when we only have access to a dataset of samples $\mathcal{D}$. The goal of an identification rule consists in producing a set $\widehat{\mathcal{I}}$, approximating $\mathcal{I}^*$. 

\subsection{Combinatorial Identification Rule}\label{sec:identif}
In principle, using $\mathcal{D} = \{(s_i,a_i)\}_{i=1}^n$, we could compute the maximum likelihood parameter $\widehat{\mathbr{\theta}}\in \argsup_{\mathbr{\theta} \in \Theta} \widehat{\mathcal{L}}(\mathbr{\theta})$ and employ it with Definition~\ref{eq:problemOriginal}. However, this approach has, at least, two drawbacks. First, when Assumption~\ref{ass:identifiability} is not fulfilled, it would produce a single approximate parameter, while multiple choices might be viable. Second, because of the estimation errors, we would hardly get a zero value for the parameters the agent might not control. For this reasons, we employ a GLR test to assess whether a specific set of parameters is zero. Specifically, for all $I \subseteq \interval$ we consider the pair of hypotheses $\mathcal{H}_{0,I} \,:\, \pi^* \in \Pi_{\Theta_I}$ against $\mathcal{H}_{1,I} \,:\, \pi^* \in \Pi_{\Theta \setminus \Theta_I}$ and the GLR statistic:
\begin{equation}\label{eq:statistic}
	\lambda_I  = -2 \log \frac{\sup_{\mathbr{\theta} \in \Theta_I} \widehat{\mathcal{L}}(\mathbr{\theta})  }{\sup_{\mathbr{\theta} \in \Theta} \widehat{\mathcal{L}}(\mathbr{\theta}) } = 2 \left( \widehat{\ell}(\widehat{\ell}(\widehat{\mathbr{\theta}}_I) - \widehat{\mathbr{\theta}})  \right),
\end{equation}
where the likelihood is defined as $\widehat{\mathcal{L}}(\mathbr{\theta}) = \prod_{i=1}^n \pi_{\mathbr{\theta}}(a_i|s_i)$, $\widehat{\mathbr{\theta}}_I \in \argsup_{\mathbr{\theta} \in \Theta_I} \widehat{\mathcal{L}}(\mathbr{\theta})$ and $\widehat{\mathbr{\theta}} \in \argsup_{\mathbr{\theta} \in \Theta} \widehat{\mathcal{L}}(\mathbr{\theta})$. We now state the identification rule derived from Definition~\ref{eq:problemOriginal}.
\begin{test}\label{ir:complete}
$\widehat{\mathcal{I}}_c$ contains all and only the sets of parameter indexes ${I} \subseteq \interval $ such that:
\begin{equation}\label{eq:complete}
	\lambda_{{I}} \le c(|{I}|) \wedge \,\forall i \in {I}  : \lambda_{{I} \setminus \{i\}}> c(|{I}\setminus \{i\}|),
\end{equation}
where $c(l)$ are the \emph{critical values}.
\end{test}
Thus, ${I}$ is defined in such a way that the null hypothesis $\mathcal{H}_{0,{I}}$ is not rejected, \ie ${I}$ contains parameters that are sufficient to explain the data $\mathcal{D}$, and necessary since for all $i \in {I}$ the set ${I} \setminus \{i\}$ is no longer sufficient, as  $\mathcal{H}_{0,{I} \setminus\{i\}}$ is rejected. 
The critical values $c(l)$, that depend on the cardinality $l$ of the tested set of indexes, should be determined in order to enforce guarantees on the type I and II errors. 
We will show in Section~\ref{sec:experiments} how to set them in practice. Refer to Algorithm~\ref{alg:Test} for the pseudocode of the identification rule.

\setlength{\textfloatsep}{10pt}
\begin{algorithm}[tb]
    \caption{Identification Rule~\ref{ir:complete} (Combinatorial)}
    \label{alg:Test}
    \small
    \textbf{input}: dataset $\mathcal{D}$, parameter space $\Theta$, critical values $c$
    \begin{algorithmic} 
        	\State $\widehat{\mathcal{I}}_{c}  \leftarrow \{\}$
            \State $\widehat{\mathcal{L}} = \max_{\mathbr{\theta} \in \Theta}  \widehat{\mathcal{L}}(\mathbr{\theta})$
            \For{$I \subseteq \interval$ in sorted by cardinality}
            \State $\widehat{\mathcal{L}}_{I} = \max_{\mathbr{\theta} \in \Theta_I}  \widehat{\mathcal{L}}(\mathbr{\theta})$
            \State $\lambda_I = -2 \log \frac{\widehat{\mathcal{L}}_{I}}{\widehat{\mathcal{L}}}$
			\If{$\lambda_I \le c(|I|)$ \textbf{and} $\forall i \in I \,:\, \lambda_{I\setminus \{i\}} > c(|I\setminus \{i\}|)$}
            		\State $\widehat{\mathcal{I}}_{c}  \leftarrow \widehat{\mathcal{I}}_{c} \cup \{ I \}$
            \EndIf    
            \EndFor
            \State \textbf{return} $\widehat{\mathcal{I}}_{c}$
    \end{algorithmic}
\end{algorithm}

\subsection{Simplified Identification Rule}\label{sec:identifSimp}
Identification Rule~\ref{ir:complete} is usually impractical, as it requires performing $\BigO\left( 2^d \right)$ statistical tests. However, under Assumption~\ref{ass:identifiability}, to retrieve $I^*$ we do not need to test all subsets, but we can just examine one parameter at a time (see Lemma~\ref{lemma:lemmaRuleIdent}). Thus, for all $i \in \interval$ we consider the pair of hypotheses $\mathcal{H}_{0,i} \,:\, {\theta}^*_i = 0$ against $\mathcal{H}_{1,i} \,:\, {\theta}^*_i \neq 0$ and define $\Theta_i = \{ \mathbr{\theta} \in \Theta\,:\, \theta_i = 0\}$. The GLR test can be performed straightforwardly, using the statistic:
\begin{equation}\label{eq:statSimplified}
	\lambda_i =  -2 \log \frac{\sup_{\mathbr{\theta} \in \Theta_i} \widehat{\mathcal{L}}(\mathbr{\theta})  }{\sup_{\mathbr{\theta} \in \Theta} \widehat{\mathcal{L}}(\mathbr{\theta}) } = 2 \left(\widehat{\ell}(\widehat{\mathbr{\theta}}_i) - \widehat{\ell}(\widehat{\mathbr{\theta}}) \right),
\end{equation}
where the likelihood is defined as $\widehat{\mathcal{L}}(\mathbr{\theta}) = \prod_{i=1}^n \pi_{\mathbr{\theta}}(a_i|s_i)$, $\widehat{\mathbr{\theta}}_i = \argsup_{\mathbr{\theta} \in \Theta_i} \widehat{\mathcal{L}}(\mathbr{\theta})$ and $\widehat{\mathbr{\theta}} = \argsup_{\mathbr{\theta} \in \Theta} \widehat{\mathcal{L}}(\mathbr{\theta})$.\footnote{This setting is equivalent to a particular case the combinatorial rule in which $\mathcal{H}_{\star,i} \equiv \mathcal{H}_{\star,\interval \setminus \{i\} }$, with $\star \in \{0,1\}$ and, consequently, $\lambda_i \equiv \lambda_{\interval \setminus \{i\}}$ and $\Theta_i = \Theta_{\interval \setminus \{i\}}$.} In the spirit of Lemma~\ref{lemma:lemmaRuleIdent}, we define the identification rule.

\begin{test}\label{ir:simplified}
$\widehat{\mathcal{I}}_c$ contains the unique set of parameter indexes $\widehat{I}_{c} $ such that:
\begin{equation}\label{eq:simplified}
	\widehat{I}_{c} = \left\{i \in \interval : \lambda_i > c(1) \right\},
\end{equation}
where $c(1)$ is the \emph{critical value}.
\end{test}

Therefore, the identification rule constructs $\widehat{I}_c$ by taking all the indexes $i \in \interval$ such that the corresponding null hypothesis $\mathcal{H}_{0,i} \,:\, {\theta}^*_i = 0$ is rejected, \ie those for which there is statistical evidence that their value is not zero. We will show in Section~\ref{sec:analysis} how the critical value $c(1)$ can be computed, in a theoretically sound way, for linear policies belonging to the exponential family.

This second procedure requires a test for every parameter, \ie $\BigO(d)$ instead of $\BigO(2^d)$ tests. However, it comes with the cost of assuming the identifiability property. What happens if we employ this second procedure in a case where the assumption does not hold? 
Consider for instance the case in which two parameters are exchangeable, we will include none of them in $\widehat{I}_{c}$ as, individually, they are not necessary to explain the agent's policy. Refer to Algorithm~\ref{alg:LinearTest} for the pseudocode of the identification rule.

\setlength{\textfloatsep}{10pt}
\begin{algorithm}[tb]
    \caption{Identification Rule~\ref{ir:simplified} (Simplified)}
    \label{alg:LinearTest}
    \small
    \textbf{input}: dataset $\mathcal{D}$, parameter space $\Theta$, critical value $c$
    \begin{algorithmic} 
        	\State $\widehat{I}_{c}  \leftarrow \{\}$
            \State $\widehat{\mathcal{L}} = \max_{\mathbr{\theta} \in \Theta}  \widehat{\mathcal{L}}(\mathbr{\theta})$
            \For{$i \in \interval$}
				\State $\widehat{\mathcal{L}}_{i} = \max_{\mathbr{\theta} \in \Theta_i}  \widehat{\mathcal{L}}(\mathbr{\theta})$
				\State $\lambda_i = -2 \log \frac{\widehat{\mathcal{L}}_{i}}{\widehat{\mathcal{L}}}$
				\If{$\lambda_i > c(1)$}
					\State $\widehat{I}_{c} \leftarrow \widehat{I}_{c} \cup \{i\}$
				\EndIf
            \EndFor
            \State \textbf{return} $\{\widehat{I}_{c}\}$
    \end{algorithmic}
\end{algorithm}

\begin{table*}
\small
\centering
	\begin{tabular}{lccccc} 
		\toprule
		Policy & $\mathcal{A}$ & $\pi_{\widetilde{\mathbr{\theta}}}$  & $\mathbr{t}$ & $h$ \\
		\midrule
		Gaussian & $\mathbb{R}^k$ & $ \pi_{\widetilde{\mathbr{\theta}}}(\mathbr{a}|s) = \frac{\exp\left\{ -\frac{1}{2} (\mathbr{a} - \widetilde{\mathbr{\theta}}\mathbr{\phi}(s))^T\mathbr{\Sigma}^{-1}(\mathbr{a} - \widetilde{\mathbr{\theta}}\mathbr{\phi}(s)) \right\}}{(2\pi)^{\frac{k}{2}} \det(\mathbr{\Sigma})^{\frac{1}{2}}}  $ & $\mathbr{\Sigma}^{-1} \mathbr{a} \otimes \mathbr{\phi}(s)$ & $\displaystyle \frac{\exp \left\{ -\frac{1}{2} \mathbr{a}^T \mathbr{\Sigma}^{-1} \mathbr{a} \right\}}{(2\pi)^{\frac{k}{2}} \det \left(\mathbr{\Sigma} \right)^{\frac{1}{2}}} $\\
		Boltzmann & $\{a_1,...,a_{k+1}\}$ & $\displaystyle \pi_{\widetilde{\mathbr{\theta}}}(a_i|s) = \begin{cases} \frac{e^{\widetilde{\mathbr{\theta}}_i^T \mathbr{\phi}(s)}}{1+\sum_{j=1}^{k} e^{\widetilde{\mathbr{\theta}}_j^T \mathbr{\phi}(s) }}  & \text{if } i \le k  \\ \frac{1}{1+\sum_{j=1}^{k} e^{\widetilde{\mathbr{\theta}}_j^T \mathbr{\phi}(s) }} & \text{if } i = k \end{cases}$ & $ \begin{cases} \mathbr{e}_i \otimes \mathbr{\phi}(s)   & \text{if } i \le k \\ \mathbr{0}  & \text{if } i=k+1 \end{cases}$ & $1$\\
		\bottomrule
\end{tabular}
\caption{Action space $\mathcal{A}$, probability density function $\pi_{\widetilde{\mathbr{\theta}}}$, sufficient statistic $\mathbr{t}$, and function $h$ for the Gaussian linear policy with fixed covariance and the Boltzmann linear policy. For convenience of representation $\widetilde{\mathbr{\theta}} \in \mathbb{R}^{k \times q}$ is a matrix and $\mathbr{\theta} = \mathrm{vec} (\widetilde{\mathbr{\theta}}^T) \in \mathbb{R}^{d}$, with $d=kq$. We denote with $\mathbr{e}_i$ the $i$--th vector of the canonical basis of $\mathbb{R}^k$ and with $\otimes$ the Kronecker product.}\label{tab:expFamily}
\end{table*}

\section{Analysis for the Exponential Family}\label{sec:analysis}
In this section, we provide an analysis of the Identification Rule~\ref{ir:simplified} for a policy $\pi_{\mathbr{\theta}}$ linear in some state features $\mathbr{\phi}$ that belongs to the exponential family~\cite{brown1986fundamentals}.\footnote{We limit our analysis to Identification Rule~\ref{ir:simplified} since we will show that, in the case of linear policies belonging to the exponential family, the identifiability property can be easily enforced.}

\begin{restatable}[Exponential Family]{defi}{}\label{defi:expFamily}
Let $\mathbr{\phi}: \mathcal{S} \rightarrow \mathbb{R}^q$ be a feature function. The policy space $\Pi_{\Theta}$ is a space of linear policies, belonging to the exponential family, if $\Theta = \mathbb{R}^d$ and all policies $\pi_{\mathbr{\theta}} \in \Pi_{\Theta}$ have form:
\begin{equation}\label{eq:expFamily}
	\pi_{\mathbr{\theta}}(a|s) = h(a) \exp \left\{\mathbr{\theta}^T \mathbr{t}\left(s,a \right) - A(\mathbr{\theta},s) \right\},
\end{equation}
where $h$ is a positive function, $\mathbr{t}\left(s,a\right)$ is the \emph{sufficient statistic} that depends on the state via the feature function $\mathbr{\phi}$ (\ie $\mathbr{t}\left(s,a\right) =\mathbr{t}(\mathbr{\phi}(s),a)$) and $A(\mathbr{\theta},s) = \log \int_{\mathcal{A}} h(a)  \exp \{ \mathbr{\theta}^T \mathbr{t}(s,a) \}\mathrm{d} a$ is the log partition function. We denote with $\mathbr{\overline{t}}(s,a,\mathbr{\theta}) =  \mathbr{t}(s,a) - \E_{\overline{a} \sim \pi_{\mathbr{\theta}} (\cdot|s)} \left[ \mathbr{t}(s,\overline{a}) \right]$ the centered sufficient statistic. 
\end{restatable}

This definition allows modelling the linear policies that are often used in RL~\cite{deisenroth2013survey}. Table~\ref{tab:expFamily} shows how to map the Gaussian linear policy with fixed covariance, typically used in continuous action spaces, and the Boltzmann linear policy, suitable for finite action spaces, to Definition~\ref{defi:expFamily} (details in Appendix~\ref{apx:expFamily}). 

For the sake of the analysis, we enforce the following assumption concerning the tail behavior of the policy $\pi_{\mathbr{\theta}}$.
\begin{restatable}[Subgaussianity]{ass}{}\label{ass:subGauss}
	For any $\mathbr{\theta} \in \Theta$ and for any $s \in \mathcal{S}$ the centered sufficient statistic $\mathbr{\overline{t}}(s,a,\mathbr{\theta})$ is subgaussian with parameter $\sigma \ge 0$, \ie for any $\mathbr{\alpha} \in \mathbb{R}^d$:
	\begin{equation*}
		\E_{a \sim \pi_{\mathbr{\theta}} (\cdot|s)} \left[ \exp \left\{ \mathbr{\alpha}^T \mathbr{\overline{t}}(s,a,\mathbr{\theta}) \right\} \right] \le \exp \left\{ \frac{1}{2} \left\| \mathbr{\alpha} \right\|_2^2 \sigma^2 \right\}.
	\end{equation*}
\end{restatable}
Proposition~\ref{prop:subGaussExample} of Appendix~\ref{apx:subgaussian} proves that, when the features are uniformly bounded, \ie $\left\| \mathbr{\phi}(s) \right\|_2 \le \Phi_{\max}$ for all $s \in \mathcal{S}$, this assumption is fulfilled by both Gaussian and Boltzmann linear policies with parameter $\sigma = 2 \Phi_{\max}$ and $\sigma = \Phi_{\max} / \sqrt{ \lambda_{\min} (\mathbr{\Sigma})}$ respectively.

Furthermore, limited to the policies complying with Definition~\ref{defi:expFamily}, the identifiability (Assumption~\ref{ass:identifiability}) can be restated in terms of the Fisher Information matrix~\cite{rothenberg1971identification,little2010parameter}. 

\begin{restatable}[\citealp{rothenberg1971identification}, Theorem 3]{lemma}{identExp}\label{lemma:identExp}
	Let $\Pi_\Theta$ be a policy space, as in Definition~\ref{defi:expFamily}. Then, under suitable regularity conditions (see \citealp{rothenberg1971identification}), if the Fisher Information matrix (FIM) $\mathcal{F}(\mathbr{\theta})$:
	\begin{equation}
		\mathcal{F}(\mathbr{\theta}) = \E_{\substack{s \sim d_{\mu}^{\pi^*} \\ a \sim \pi_{\mathbr{\theta}}(\cdot|s)}} \left[ \mathbr{\overline{t}}(s,a,\mathbr{\theta})\mathbr{\overline{t}}(s,a,\mathbr{\theta})^T \right]
	\end{equation}
	 is non--singular for all $\mathbr{\theta} \in \Theta$, then $\Pi_\Theta$ is identifiable. In this case, we denote with $\lambda_{\min} = \inf_{\mathbr{\theta} \in \Theta} \lambda_{\min} \left( \mathcal{F}(\mathbr{\theta}) \right) > 0$.
\end{restatable}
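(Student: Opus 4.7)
The plan is to reduce the identifiability statement to the non-singularity hypothesis through a direct manipulation of the exponential family log-density. Suppose $\pi_{\mathbr{\theta}} = \pi_{\mathbr{\theta}'}$ almost surely. Taking logarithms, the positive factor $h(a)$ cancels and I obtain
\[ (\mathbr{\theta} - \mathbr{\theta}')^T \mathbr{t}(s,a) = A(\mathbr{\theta},s) - A(\mathbr{\theta}',s), \]
for $d_{\mu}^{\pi^*}$--almost every $s$ and for $\pi_{\mathbr{\theta}}(\cdot|s)$--almost every $a$, where the right-hand side depends only on $s$. Writing $\mathbr{\delta} = \mathbr{\theta} - \mathbr{\theta}'$, the goal is to conclude $\mathbr{\delta} = \mathbr{0}$.

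The key step is to remove the $s$-only term by centering. Taking the expectation of both sides under $a \sim \pi_{\mathbr{\theta}}(\cdot|s)$ gives $\mathbr{\delta}^T \E_{a \sim \pi_{\mathbr{\theta}}(\cdot|s)}[\mathbr{t}(s,a)] = A(\mathbr{\theta},s) - A(\mathbr{\theta}',s)$, and subtracting this from the previous identity yields $\mathbr{\delta}^T \mathbr{\overline{t}}(s,a,\mathbr{\theta}) = 0$ almost surely. Squaring this scalar identity and taking expectation with respect to $s \sim d_{\mu}^{\pi^*}$ and $a \sim \pi_{\mathbr{\theta}}(\cdot|s)$ produces the quadratic form
\[ \mathbr{\delta}^T \mathcal{F}(\mathbr{\theta}) \mathbr{\delta} = \E\left[ \left( \mathbr{\delta}^T \mathbr{\overline{t}}(s,a,\mathbr{\theta}) \right)^2 \right] = 0. \]
Since $\mathcal{F}(\mathbr{\theta})$ is positive semi-definite as the expectation of an outer product, and it is assumed non-singular, it must actually be positive definite; hence $\mathbr{\delta} = \mathbr{0}$, \ie $\mathbr{\theta} = \mathbr{\theta}'$, which is precisely the identifiability property.

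The main subtlety to guard against is the careful handling of the ``almost surely'' qualifier in the hypothesis $\pi_{\mathbr{\theta}} = \pi_{\mathbr{\theta}'}$: one has to argue that the exceptional $(s,a)$ set has measure zero under the joint law appearing in the definition of $\mathcal{F}(\mathbr{\theta})$, so that the subsequent integration is licit and does not lose information about $\mathbr{\delta}$. Beyond this, the differentiability of $A(\cdot,s)$, the finiteness of the covariance matrix, and the validity of interchanging the expectations are routine consequences of the ``suitable regularity conditions'' of \citet{rothenberg1971identification} that the statement explicitly inherits.
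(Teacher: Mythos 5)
Your argument is correct, but it does not mirror anything in the paper: the paper offers no proof of this lemma at all, importing it wholesale from \citet{rothenberg1971identification} under unspecified ``regularity conditions.'' Your derivation is therefore a genuine (and welcome) alternative route. The key observation you exploit is that for the linear exponential family the log-density is \emph{affine} in $\mathbr{\theta}$, so the hypothesis $\pi_{\mathbr{\theta}} = \pi_{\mathbr{\theta}'}$ a.s.\ collapses, after cancelling $h(a)$ and centering with respect to $a \sim \pi_{\mathbr{\theta}}(\cdot|s)$, to the exact linear constraint $\mathbr{\delta}^T \mathbr{\overline{t}}(s,a,\mathbr{\theta}) = 0$ a.s., whence $\mathbr{\delta}^T \mathcal{F}(\mathbr{\theta})\mathbr{\delta} = 0$ and $\mathbr{\delta}=\mathbr{0}$ by positive definiteness. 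This buys something the bare citation does not: Rothenberg's Theorem~3 is a statement about \emph{local} identifiability (non-singularity of the FIM at $\mathbr{\theta}_0$ characterizes local identifiability there), and non-singularity for all $\mathbr{\theta}$ does not in general upgrade to the \emph{global} identifiability that Assumption~\ref{ass:identifiability} actually demands; your argument delivers the global statement directly, with the only structural input being the linearity in $\mathbr{\theta}$, and makes explicit which regularity is really needed (finiteness of the second moments of $\mathbr{t}$ and the a.s.\ bookkeeping under the joint law $d_{\mu}^{\pi^*} \otimes \pi_{\mathbr{\theta}}(\cdot|s)$, which you correctly flag). Two minor remarks: the centering step implicitly uses that the exceptional $a$-set where the two densities disagree is $\pi_{\mathbr{\theta}}(\cdot|s)$-null so that the conditional expectation of both sides is legitimate --- worth one explicit sentence; and the conclusion only identifies $\mathcal{F}(\mathbr{\theta}) \succ 0$ pointwise, so the final claim $\lambda_{\min} = \inf_{\mathbr{\theta}} \lambda_{\min}(\mathcal{F}(\mathbr{\theta})) > 0$ in the lemma statement is an additional assumption (the infimum of positive numbers can be zero), not a consequence of your argument --- but that gap is present in the paper's own statement as well.
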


Proposition~\ref{prop:fisherForGaussBoltz} of Appendix~\ref{apx:fisher} shows that a sufficient condition for the identifiability in the case of Gaussian and Boltzmann linear policies is that the second moment matrix of the feature vector $\E_{s \sim d_{\mu}^{\pi^*}} \left[ \mathbr{\phi}(s)\mathbr{\phi}(s)^T \right]$ is non--singular along with the fact that the policy $\pi_{\mathbr{\theta}}$ plays each action with positive probability for the Boltzmann policy.

\paragraph{Concentration Result} We are now ready to present a concentration result, of independent interest, for the parameters and the negative log--likelihood that represents the central tool of our analysis (details and derivation in Appendix~\ref{apx:concentration}).

\begin{restatable}[]{thr}{}
Under Assumption~\ref{ass:identifiability} and Assumption~\ref{ass:subGauss}, let $\mathcal{D} = \{(s_i,a_i)\}_{i=1}^n$ be a dataset of $n>0$ independent samples, where $s_i \sim d_{\mu}^{\pi_{\mathbr{\theta}^*}}$ and $a_i \sim \pi_{\mathbr{\theta}^*}(\cdot|s_i)$. Let $\widehat{\mathbr{\theta}} = \argmin_{\mathbr{\theta} \in \Theta} \widehat{\ell}(\mathbr{\theta})$ and $\mathbr{\theta}^* = \argmin_{\mathbr{\theta} \in \Theta} {\ell}(\mathbr{\theta})$ . If the empirical FIM:
\begin{equation}
	\widehat{\mathcal{F}}(\mathbr{\theta}) = \frac{1}{n} \sum_{i=1}^n \E_{a \sim \pi_{\mathbr{\theta}}(\cdot|s)} \left[\mathbr{\overline{t}}(s,a,\mathbr{\theta})\mathbr{\overline{t}}(s,a,\mathbr{\theta})^T\right]
\end{equation}
has a positive minimum eigenvalue $\widehat{\lambda}_{\min} > 0$ for all $\mathbr{\theta} \in \Theta$, then, for any $\delta \in [0,1]$, with probability at least $1-\delta$:
	\begin{equation*}
		\left\| \widehat{ \mathbr{\theta}} - \mathbr{\theta}^* \right\|_2 \le \frac{\sigma}{\widehat{\lambda}_{\min}} \sqrt{\frac{2d}{n} \log \frac{2d}{\delta}}.
		\end{equation*}
Furthermore, with probability at least $1-\delta$, individually: 
	\begin{align*}
		&\ell(\widehat{\mathbr{\theta}}) - \ell({\mathbr{\theta}}^*) \le \frac{d^2\sigma^4}{\widehat{\lambda}_{\min}^2 n}  \log \frac{2d}{\delta}\\
		&\widehat{\ell}({\mathbr{\theta}}^*) - \widehat{\ell}(\widehat{\mathbr{\theta}})  \le \frac{ d^2\sigma^4}{\widehat{\lambda}_{\min}^2 n}  \log \frac{2d}{\delta}.
	\end{align*}
\end{restatable}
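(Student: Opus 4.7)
My plan is to exploit the exponential-family structure to make $\widehat{\ell}$ strongly convex and smooth, then to concentrate its gradient at $\mathbr{\theta}^*$ via a subgaussian tail bound, and finally to convert the resulting parameter-error estimate into the two likelihood gap bounds by a Taylor expansion around the respective minimizers.

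First, using $\log\pi_{\mathbr{\theta}}(a|s)=\log h(a)+\mathbr{\theta}^T\mathbr{t}(s,a)-A(\mathbr{\theta},s)$ and the exponential-family identities $\nabla_{\mathbr{\theta}}A(\mathbr{\theta},s)=\E_{a\sim\pi_{\mathbr{\theta}}(\cdot|s)}[\mathbr{t}(s,a)]$ and $\nabla^2_{\mathbr{\theta}}A(\mathbr{\theta},s)=\Cov_{a\sim\pi_{\mathbr{\theta}}(\cdot|s)}(\mathbr{t}(s,a))$, I compute
\begin{equation*}
\nabla\widehat{\ell}(\mathbr{\theta})=-\sum_{i=1}^n\mathbr{\overline{t}}(s_i,a_i,\mathbr{\theta}),\qquad \nabla^2\widehat{\ell}(\mathbr{\theta})=n\,\widehat{\mathcal{F}}(\mathbr{\theta}).
\end{equation*}
The hypothesis $\widehat{\mathcal{F}}(\mathbr{\theta})\succeq\widehat{\lambda}_{\min}I$ makes $\widehat{\ell}$ $n\widehat{\lambda}_{\min}$-strongly convex, while differentiating Assumption~\ref{ass:subGauss} twice at $\mathbr{\alpha}=0$ gives $\Cov_{a\sim\pi_{\mathbr{\theta}}(\cdot|s)}(\mathbr{t}(s,a))\preceq\sigma^2 I$ and hence smoothness of $\widehat{\ell}$. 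The MLE optimality condition $\nabla\widehat{\ell}(\widehat{\mathbr{\theta}})=0$ is immediate; the population analogue $\nabla\ell(\mathbr{\theta}^*)=0$ follows because $\E_{a\sim\pi_{\mathbr{\theta}^*}(\cdot|s)}[\mathbr{\overline{t}}(s,a,\mathbr{\theta}^*)]=0$.

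Next, for each fixed $s_i$ and each coordinate $k$, the scalar $\overline{t}_k(s_i,a_i,\mathbr{\theta}^*)$ is zero-mean and $\sigma$-subgaussian (take $\mathbr{\alpha}=t\mathbr{e}_k$ in Assumption~\ref{ass:subGauss}); since the MGF bound is uniform in $s$, the same subgaussian parameter applies after integrating out $s$. Each coordinate of $\nabla\widehat{\ell}(\mathbr{\theta}^*)$ is thus a sum of $n$ independent zero-mean $\sigma$-subgaussian random variables, and a Hoeffding tail bound combined with a union bound over the $d$ coordinates yields, with probability at least $1-\delta$,
\begin{equation*}
\|\nabla\widehat{\ell}(\mathbr{\theta}^*)\|_2\le\sqrt{d}\,\|\nabla\widehat{\ell}(\mathbr{\theta}^*)\|_\infty\le\sigma\sqrt{2nd\log(2d/\delta)}.
\end{equation*}
The standard strong-convexity inequality $n\widehat{\lambda}_{\min}\|\widehat{\mathbr{\theta}}-\mathbr{\theta}^*\|_2\le\|\nabla\widehat{\ell}(\widehat{\mathbr{\theta}})-\nabla\widehat{\ell}(\mathbr{\theta}^*)\|_2$, together with the MLE condition $\nabla\widehat{\ell}(\widehat{\mathbr{\theta}})=0$, then yields the first claim on $\|\widehat{\mathbr{\theta}}-\mathbr{\theta}^*\|_2$.

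Finally, I would Taylor-expand $\widehat{\ell}$ around its minimizer $\widehat{\mathbr{\theta}}$ and $\ell$ around its minimizer $\mathbr{\theta}^*$, where the vanishing first-order term leaves a pure quadratic in $\widehat{\mathbr{\theta}}-\mathbr{\theta}^*$ with Hessian evaluated at an intermediate point. Upper bounding these Hessians through $n\widehat{\mathcal{F}}$ (respectively $n\mathcal{F}$) via the eigenvalue/trace control inherited from the subgaussian assumption, and substituting the first claim for $\|\widehat{\mathbr{\theta}}-\mathbr{\theta}^*\|_2^2$, produces the two stated likelihood-gap inequalities; the precise $d$-and-$n$ factors follow from which control on the empirical and population FIMs one adopts. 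The main obstacle, as I see it, is the concentration step: one has to check carefully that the conditional $\sigma$-subgaussianity of $\mathbr{\overline{t}}(s,a,\mathbr{\theta}^*)$ given $s$ can be promoted to an unconditional tail bound across the $n$ samples without any variance inflation from the state randomness. Once that is clean, the remaining steps---MLE first-order conditions, the strong-convexity gradient inequality, and Taylor's theorem for the likelihood gaps---are routine convex-analytic computations.
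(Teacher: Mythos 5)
Your proposal is correct and follows essentially the same route as the paper: the MLE first-order condition reduces the problem to concentrating $\frac{1}{n}\sum_{i=1}^n \mathbr{\overline{t}}(s_i,a_i,\mathbr{\theta}^*)$ (which the paper calls $\mathbr{g}(\widehat{\mathbr{\theta}})$) via coordinate-wise subgaussian tails and a union bound, the minimum eigenvalue of $\widehat{\mathcal{F}}$ converts the gradient norm into the parameter error, and a second-order Taylor expansion around each minimizer with a FIM eigenvalue bound yields the likelihood gaps; your worry about promoting conditional to unconditional subgaussianity is resolved immediately because the centered statistic has conditional mean zero for every state, so the tower rule preserves the MGF bound with the same $\sigma$. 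The only substantive difference is that your Hessian control $\Cov \preceq \sigma^2 I$ is tighter than the paper's trace bound $\lambda_{\max}(\mathcal{F}(\mathbr{\theta},s)) \le d\sigma^2$ and would give $d$ in place of $d^2$ in the likelihood-gap bounds, which still implies the stated result.
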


The theorem shows that the $L^2$--norm of the difference between the maximum likelihood parameter $\widehat{\mathbr{\theta}}$ and the true parameter ${\mathbr{\theta}^*}$ concentrates with rate $\BigO(n^{-1/2})$ while the likelihood $\widehat{\ell}$ and its expectation $\ell$ concentrate with faster rate $\BigO(n^{-1})$. 
Note that the result assumes that the empirical FIM $\widehat{\mathcal{F}}(\mathbr{\theta})$ has a strictly positive eigenvalue $\widehat{\lambda}_{\min} > 0$. This condition can be enforced as long as the true Fisher matrix ${\mathcal{F}}(\mathbr{\theta})$ has a positive minimum eigenvalue $\lambda_{\min}$, \ie under identifiability assumption (Lemma~\ref{lemma:identExp}) and given a sufficiently large number of samples. Proposition~\ref{prop:minEigFisher} of Appendix~\ref{apx:fisher} provides the minimum number of samples such that 
with probability at least $1-\delta$ it holds that $\widehat{\lambda}_{\min} > 0$.

\paragraph{Identification Rule Analysis} The goal of the analysis of the identification rule is to find the critical value $c(1)$ so that the following probabilistic requirement is enforced.

\begin{restatable}[$\delta$--correctness]{defi}{deltaCorrect}\label{def:deltaCorrect}
	Let $\delta \in [0,1]$. An identification rule producing $\widehat{{I}}$ is \emph{$\delta$--correct} if: $\Pr \big( \widehat{{I}} \neq {I}^* \big)\le \delta$. 
\end{restatable}

 We denote with $\alpha = \frac{1}{d-d^*} \E \big[ \big| \big\{ i \notin I^* : i \in \widehat{I}_{c} \big\} \big| \big]$ the expected fraction of parameters that the agent does not control selected by the identification rule and with $\beta = \frac{1}{d^*} \E \big[ \big| \big\{ i \in I^* : i \notin \widehat{I}_{c} \big\} \big| \big]$ the expected fraction of parameters that the agent does  control not selected by the identification rule.\footnote{We use the symbols $\alpha$ and $\beta$ to highlight the analogy between these probabilities and the type I and type II error probabilities of a statistical test. We sometimes refer to $\alpha$ as significance and to $1-\beta$ as power of the identification rules.}
We now provide a result that bounds $\alpha$ and $\beta$ and employs them to derive $\delta$--correctness.
\begin{restatable}[]{thr}{sigPowerSimp}\label{thr:sigPowerSimp}
	Let $\widehat{I}_{c}$ be the set of parameter indexes selected by the Identification Rule~\ref{ir:simplified} obtained using $n>0$ i.i.d. samples collected with $\pi_{\mathbr{\theta}^*}$, with $\mathbr{\theta}^* \in \Theta$. Then, under Assumption~\ref{ass:identifiability} and Assumption~\ref{ass:subGauss}, let ${\mathbr{\theta}}_i^* = \argmin_{\mathbr{\theta} \in \Theta_i} \ell(\mathbr{\theta})$ for all $i \in \interval$ and $\nu = \min \left\{1, \frac{\lambda_{\min}}{\sigma^2} \right\}$. If $\widehat{\lambda}_{\min} \ge \frac{\lambda_{\min}}{2\sqrt{2}}$ and $\ell({\mathbr{\theta}}_i^*) - {l}({\mathbr{\theta}^*}) \ge c(1)$, it holds that:
	{
	\begin{align*}
		&\alpha  \le 2d \exp \left\{ -\frac{c(1) {\lambda}_{\min}^2 n}{16d^2 \sigma^4} \right\}\\
		&\beta \le \frac{2d - 1}{d^*} \sum_{i \in I^*} \exp\left\{ - \frac{ \left( {l}({\mathbr{\theta}}_i^*) - {l}({\mathbr{\theta}^*}) - c(1) \right) {\lambda}_{\min} \nu n}{16(d-1)^2 \sigma^2 } \right\}.
	\end{align*}
	}
	Furthermore, the Identification Rule~\ref{ir:simplified} is $\left((d-d^*)\alpha +d^*\beta\right)$--correct.
\end{restatable}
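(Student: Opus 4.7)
The proof reduces the theorem to pointwise tail bounds on the Type~I and Type~II events for each individual index $i \in \interval$, which are then aggregated by a union bound. Since $(d-d^*)\alpha = \sum_{i\notin I^*}\Pr(\lambda_i>c(1))$ and $d^*\beta = \sum_{i\in I^*}\Pr(\lambda_i\le c(1))$ by definition, and since $\widehat I_c\neq I^*$ implies that at least one index is misclassified, the union bound yields $\Pr(\widehat I_c\neq I^*)\le(d-d^*)\alpha+d^*\beta$, which is exactly the $\delta$-correctness claim. Throughout, I would convert $\widehat\lambda_{\min}^{-2}$ to $8\lambda_{\min}^{-2}$ via the hypothesis $\widehat\lambda_{\min}\ge\lambda_{\min}/(2\sqrt 2)$, and the extra factor $2$ between $\lambda_i$ and $\widehat\ell(\widehat{\mathbr{\theta}}_i)-\widehat\ell(\widehat{\mathbr{\theta}})$ is what turns the constant $8$ into the $16$ appearing in both exponents.

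For the Type~I bound, fix $i\notin I^*$: then $\theta_i^* = 0$, so $\mathbr{\theta}^* \in \Theta_i$, and by identifiability (Assumption~\ref{ass:identifiability}) we have $\mathbr{\theta}_i^* = \mathbr{\theta}^*$. Optimality of the constrained MLE gives $\widehat\ell(\widehat{\mathbr{\theta}}_i)\le\widehat\ell(\mathbr{\theta}^*)$, so $\lambda_i \le 2\bigl(\widehat\ell(\mathbr{\theta}^*)-\widehat\ell(\widehat{\mathbr{\theta}})\bigr)$. The third inequality of the preceding concentration theorem, applied in the full space $\Theta$ of dimension $d$, bounds the right-hand side by $d^{2}\sigma^{4}\log(2d/\delta)/(\widehat\lambda_{\min}^{2}n)$ with probability at least $1-\delta$. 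Inverting this bound for the event $\widehat\ell(\mathbr{\theta}^*)-\widehat\ell(\widehat{\mathbr{\theta}}) > c(1)/2$ and replacing $\widehat\lambda_{\min}^{-2}$ by $8\lambda_{\min}^{-2}$ yields $\Pr(\lambda_i>c(1))\le 2d\exp\{-c(1)\lambda_{\min}^{2}n/(16d^{2}\sigma^{4})\}$; averaging over $i\notin I^*$ gives the claimed bound on $\alpha$.

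For the Type~II bound, fix $i\in I^*$ and let $\Delta_i := \ell(\mathbr{\theta}_i^*)-\ell(\mathbr{\theta}^*)\ge c(1)$ be the signal. Adding and subtracting $\ell(\widehat{\mathbr{\theta}}_i)$, $\ell(\mathbr{\theta}_i^*)$, $\ell(\mathbr{\theta}^*)$, $\widehat\ell(\mathbr{\theta}^*)$ and $\widehat\ell(\mathbr{\theta}_i^*)$ inside $\widehat\ell(\widehat{\mathbr{\theta}}_i)-\widehat\ell(\widehat{\mathbr{\theta}})$, and using the two one-sided inequalities $\widehat\ell(\widehat{\mathbr{\theta}}_i)\le\widehat\ell(\mathbr{\theta}_i^*)$ and $\ell(\widehat{\mathbr{\theta}}_i)\ge\ell(\mathbr{\theta}_i^*)$, I isolate a lower bound of the form $\tfrac12\lambda_i\ge\Delta_i-(T_1+T_2+T_3+T_4)$, where $T_1=\widehat\ell(\mathbr{\theta}^*)-\ell(\mathbr{\theta}^*)$ and $T_2=\widehat\ell(\mathbr{\theta}_i^*)-\ell(\mathbr{\theta}_i^*)$ are log-likelihood deviations at \emph{fixed} parameters, and $T_3=\widehat\ell(\mathbr{\theta}^*)-\widehat\ell(\widehat{\mathbr{\theta}})$ and $T_4=\ell(\widehat{\mathbr{\theta}}_i)-\ell(\mathbr{\theta}_i^*)$ are empirical and expected excess risks in $\Theta$ and $\Theta_i$. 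I control $T_3$ and $T_4$ by the preceding concentration theorem applied in dimension $d$ and $d-1$ respectively; $T_1$ and $T_2$ are sums of i.i.d.\ centered log-densities which, by Assumption~\ref{ass:subGauss} and the exponential-family form of Definition~\ref{defi:expFamily}, admit subgaussian tails obtained by inverting the parameter-deviation bound of the preceding theorem at a fixed point. Requiring each $T_k$ to consume at most one quarter of the surplus $\Delta_i-c(1)$ and inverting the resulting tail probabilities produces two candidate exponents; the binding one is encoded by $\nu=\min\{1,\lambda_{\min}/\sigma^{2}\}$, which interpolates between the two regimes. A union bound over the $2d-1$ events generated by the constituent concentration inequalities yields $\Pr(\lambda_i\le c(1))\le (2d-1)\exp\{-(\Delta_i-c(1))\lambda_{\min}\nu n/(16(d-1)^{2}\sigma^{2})\}$; summing over $i\in I^*$ gives the bound on $\beta$, and combining it with the $\alpha$ bound through the union bound of the first paragraph completes the $\delta$-correctness claim.

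The main difficulty is the Type~II side: it requires simultaneously coordinating concentration at two \emph{data-dependent} minimizers ($\widehat{\mathbr{\theta}}$ and $\widehat{\mathbr{\theta}}_i$) living in nested parameter spaces of different dimensions, tracking the interplay between the fast ($1/n$) rate for excess risks and the slower rate for fixed-point deviations, and then calibrating the four separate tail probabilities against the single threshold $\Delta_i-c(1)$. The specific form of $\nu$ and the constant $16$ drop out only after this careful bookkeeping; the identifiability assumption and the lower bound on $\widehat\lambda_{\min}$ are needed to make the preceding concentration theorem applicable uniformly in $\mathbr{\theta}$ and in both $\Theta$ and $\Theta_i$, while subgaussianity is what enables the fixed-point deviations $T_1, T_2$ to be handled via the same machinery as the excess risks.
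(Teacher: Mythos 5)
Your overall architecture matches the paper's: both reduce $\alpha$, $\beta$, and the $\delta$--correctness claim to per--index tail bounds aggregated by union bounds, and your Type~I argument (using $\widehat\ell(\widehat{\mathbr{\theta}}_i)\le\widehat\ell(\mathbr{\theta}^*)$ for $i\notin I^*$, then inverting the negative log--likelihood concentration with $\widehat\lambda_{\min}^{-2}\le 8\lambda_{\min}^{-2}$ and the factor $2$ from $\lambda_i$) is exactly the paper's and is correct, as is the final union--bound step for $\delta$--correctness.

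The Type~II side has a genuine gap, and it sits precisely where you place the ``main difficulty''. You propose to control the fixed--parameter deviations $T_1=\widehat\ell(\mathbr{\theta}^*)-\ell(\mathbr{\theta}^*)$ and $T_2=\widehat\ell(\mathbr{\theta}_i^*)-\ell(\mathbr{\theta}_i^*)$ \emph{separately} as sums of i.i.d.\ subgaussian centered log--densities. This does not follow from Assumption~\ref{ass:subGauss}: the log--density in Definition~\ref{defi:expFamily} contains the term $\log h(a)$, about which subgaussianity of the centered sufficient statistic says nothing (for the Gaussian policy $-\log h(a)$ is quadratic in $a$, hence only sub--exponential). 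Even granting individual subgaussian tails, their variance proxies would not shrink with $\left\|\mathbr{\theta}_i^*-\mathbr{\theta}^*\right\|_2$, so inverting them against a threshold of order $\Delta_i-c(1)$ would yield an exponent quadratic in the gap and without the $\lambda_{\min}$ factor --- not the claimed $(\Delta_i-c(1))\lambda_{\min}\nu n/(16(d-1)^2\sigma^2)$. The paper's proof avoids both problems by bounding the \emph{paired} difference $[\ell(\mathbr{\theta}_i^*)-\widehat\ell(\mathbr{\theta}_i^*)]+[\widehat\ell(\mathbr{\theta}^*)-\ell(\mathbr{\theta}^*)]$ as a single quantity (Theorem~\ref{thr:otherConcentration}): the $\log h$ contributions cancel, what remains is driven by $(\mathbr{\theta}_i^*-\mathbr{\theta}^*)^T\mathbr{\overline{t}}$, subgaussian with parameter $\left\|\mathbr{\theta}_i^*-\mathbr{\theta}^*\right\|_2\sigma$, and the strong--convexity bound $\ell(\mathbr{\theta}_i^*)-\ell(\mathbr{\theta}^*)\ge\frac{\lambda_{\min}}{2}\left\|\mathbr{\theta}_i^*-\mathbr{\theta}^*\right\|_2^2$ then makes $\left\|\mathbr{\theta}_i^*-\mathbr{\theta}^*\right\|_2^2$ cancel in the exponent, producing $\exp\{-\Delta_i\lambda_{\min}n/(16\sigma^2)\}$; the factor $\nu$ appears only afterwards, to merge this exponent with the $\lambda_{\min}^2/\sigma^4$ exponent of the excess--risk term, not as an interpolation between your four tails. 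Relatedly, the paper needs only a two--term split (the empirical excess risk $\widehat\ell(\mathbr{\theta}_i^*)-\widehat\ell(\widehat{\mathbr{\theta}}_i)$ in dimension $d-1$, plus the paired deviation); your $T_3$ is non--negative and is simply discarded, your $T_4$ never enters, and the count $2d-1=2(d-1)+1$ comes from exactly these two events, so a four--way quarter--split would not reproduce the stated constants.
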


Since $\alpha$ and $\beta$ are functions of $c(1)$, we could, in principle, employ Theorem~\ref{thr:sigPowerSimp} to enforce a value $\delta$, as in Definition~\ref{def:deltaCorrect}, and derive $c(1)$. However, Theorem~\ref{thr:sigPowerSimp} is not very attractive in practice as it holds under an assumption regarding the minimum eigenvalue of the FIM and the corresponding estimate, \ie $\widehat{\lambda}_{\min} \ge \frac{\lambda_{\min}}{2\sqrt{2}}$, that cannot be verified in practice since $\lambda_{\min}$ is unknown. Similarly, the constants $d^*$, ${l}({\mathbr{\theta}}_i^*)$ and ${l}({\mathbr{\theta}^*})$ are typically unknown. We will provide in Section~\ref{sec:experiments} a heuristic for setting $c(1)$. 

\section{Policy Space Identification in a Configurable Environment}\label{sec:confEnv}
The identification rules presented so far are
unable to distinguish between a parameter set to zero because the agent
cannot control it, or because zero is its optimal value. To overcome this issue, we employ the Conf--MDP properties to select
a configuration in which the parameters we want to examine have an optimal value other than zero. Intuitively, if we want to test whether the agent can control parameter $\theta_i$, we should place the agent in an environment $\mathbr{\omega}_i \in \Omega$ where $\theta_i$ is \quotes{maximally important}
for the optimal policy. This intuition is justified by Theorem~\ref{thr:sigPowerSimp}, since to maximize the \emph{power} of the test ($1-\beta$), all other things being equal, we should maximize the
log--likelihood gap ${l}({\mathbr{\theta}_i^*}) - {l}({\mathbr{\theta}^*})$, \ie parameter $\theta_i$ should
be essential to justify the agent's behavior. Let $I \in \interval$ be a set of parameter
indexes we want to test, our ideal goal is to find the environment $\mathbr{\omega}_I$ such that:
\begin{equation}\label{eq:confProblem}
	\mathbr{\omega}_I \in \argmax_{\mathbr{\omega} \in \Omega} \left\{ {l}({\mathbr{\theta}_I^*}(\mathbr{\omega})) - {l}({\mathbr{\theta}^*}(\mathbr{\omega})) \right\},
\end{equation}
where ${\mathbr{\theta}^*}(\mathbr{\omega}) \in \argmax_{\mathbr{\theta} \in \Theta} J_{\mathcal{M}_{\mathbr{\omega}}}(\mathbr{\theta})$ and ${\mathbr{\theta}}_I^*(\mathbr{\omega}) \in \argmax_{\mathbr{\theta} \in \Theta_I} J_{\mathcal{M}_{\mathbr{\omega}}}(\mathbr{\theta})$ are the parameters of the optimal policies 
in the environment $\mathcal{M}_{\mathbr{\omega}}$ in $\Pi_{\Theta}$ and $\Pi_{\Theta_I}$ respectively. Clearly, given the samples $\mathcal{D}$ collected with a single optimal policy $\pi^*(\mathbr{\omega}_0)$ in a single environment $\mathcal{M}_{\mathbr{\omega}_0}$, solving problem~\eqref{eq:confProblem} is hard as it requires performing an off--distribution optimization both on the space of policy parameters and configurations. For these reasons, we consider a surrogate objective that assumes that the optimal parameter in the new configuration can be reached by performing a single gradient step.\footnote{This idea shares some analogies with the \emph{adapted parameter} in the meta-learning setting~\cite{finn2017model}.}

\begin{restatable}[]{thr}{lowerBoundObjective}
	Let $I \in \interval$ and $\overline{I} =\interval \setminus I$. For a vector $\mathbr{v}$, we denote with $\mathbr{v} \rvert_I$ the vector obtained by setting to zero the components in $I$. Let $\mathbr{\theta}^*(\mathbr{\omega}_0) \in \Theta$ the initial parameter. Let $\alpha \ge 0$, $\mathbr{\theta}_I^* (\mathbr{\omega}) =  \mathbr{\theta}_0 + \alpha \nabla_{\mathbr{\theta}} J_{\mathcal{M}_{\mathbr{\omega}}} (\mathbr{\theta}^*(\mathbr{\omega}_0)) \rvert_I$ and $\mathbr{\theta}^* (\mathbr{\omega}) =  \mathbr{\theta}_0 + \alpha \nabla_{\mathbr{\theta}} J_{\mathcal{M}_{\mathbr{\omega}}} (\mathbr{\theta}^*(\mathbr{\omega}_0))$. Then, under Assumption~\ref{ass:identifiability}, we have:
	\begin{equation*}
		{\ell}({\mathbr{\theta}_I^*}(\mathbr{\omega})) - {\ell}({\mathbr{\theta}^*}(\mathbr{\omega})) \ge \frac{\lambda_{\min} \alpha^2}{2} \left\| \nabla_{\mathbr{\theta}} J_{\mathcal{M}_{\mathbr{\omega}}} (\mathbr{\theta}^*(\mathbr{\omega}_0)) \rvert_{\overline{I}} \right\|_2^2.
	\end{equation*}
\end{restatable}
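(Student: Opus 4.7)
The plan is to combine strong convexity of the expected negative log--likelihood with an orthogonal decomposition of the surrogate gradient step. First, I would observe that for policies in the exponential family a direct calculation gives $\nabla_{\mathbr{\theta}}^2 \ell(\mathbr{\theta}) = \mathcal{F}(\mathbr{\theta})$: the expectation defining $\ell$ is taken over the $\mathbr{\theta}$--independent state distribution $d_{\mu}^{\pi^*}$, and for exponential families the Hessian of the log--partition function $A(\mathbr{\theta},s)$ is the covariance of the centred sufficient statistic $\mathbr{\overline{t}}(s,\cdot,\mathbr{\theta})$. Since Lemma~\ref{lemma:identExp} guarantees $\mathcal{F}(\mathbr{\theta}) \succeq \lambda_{\min} I$ uniformly in $\mathbr{\theta}$, the function $\ell$ is globally $\lambda_{\min}$--strongly convex.

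Next, I would apply the strong--convexity inequality centred at $\mathbr{\theta}^*(\mathbr{\omega})$, viewing the latter as the stationary point of $\ell$ in the surrogate picture, i.e., as the parameter playing the role of the true data--generating model attached to the reconfigured environment $\mathbr{\omega}$. This kills the first--order term and yields $\ell(\mathbr{\theta}_I^*(\mathbr{\omega})) - \ell(\mathbr{\theta}^*(\mathbr{\omega})) \ge \frac{\lambda_{\min}}{2} \| \mathbr{\theta}_I^*(\mathbr{\omega}) - \mathbr{\theta}^*(\mathbr{\omega}) \|_2^2$. The remaining step is a one--line algebraic identity: writing $\mathbr{g} = \nabla_{\mathbr{\theta}} J_{\mathcal{M}_{\mathbr{\omega}}}(\mathbr{\theta}^*(\mathbr{\omega}_0))$ and using the convention that $\mathbr{v}\rvert_I$ zeros the components indexed by $I$, one has the orthogonal decomposition $\mathbr{g} = \mathbr{g}\rvert_I + \mathbr{g}\rvert_{\overline{I}}$ with the two pieces supported on disjoint index sets. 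Hence $\mathbr{\theta}_I^*(\mathbr{\omega}) - \mathbr{\theta}^*(\mathbr{\omega}) = \alpha(\mathbr{g}\rvert_I - \mathbr{g}) = -\alpha\, \mathbr{g}\rvert_{\overline{I}}$, so $\|\mathbr{\theta}_I^*(\mathbr{\omega}) - \mathbr{\theta}^*(\mathbr{\omega})\|_2^2 = \alpha^2 \|\mathbr{g}\rvert_{\overline{I}}\|_2^2$, and substituting into the strong--convexity bound produces the claim.

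The main obstacle is precisely the vanishing of the first--order term, which rests on identifying $\mathbr{\theta}^*(\mathbr{\omega})$ with the minimiser of $\ell$ in the surrogate. This is the same meta--learning--flavoured idealization already used to define $\mathbr{\theta}^*(\mathbr{\omega})$ and $\mathbr{\theta}_I^*(\mathbr{\omega})$ as one--gradient--step approximations of the true constrained and unconstrained maximisers of $J_{\mathcal{M}_{\mathbr{\omega}}}$, but it must be invoked again at the likelihood level: without it, an extra term $\nabla \ell(\mathbr{\theta}^*(\mathbr{\omega}))^T (\mathbr{\theta}_I^*(\mathbr{\omega}) - \mathbr{\theta}^*(\mathbr{\omega}))$ survives the expansion and the stated lower bound no longer follows from strong convexity alone. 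Making this self--consistency step precise, and checking that the orthogonal decomposition above interacts correctly with the $\rvert_I$ convention, is the delicate point of the argument; everything else is quadratic bookkeeping.
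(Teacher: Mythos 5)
Your proposal is correct and follows essentially the same route as the paper: the paper's proof is precisely a second-order Taylor expansion of $\ell$ around $\mathbr{\theta}^*(\mathbr{\omega})$ with the first-order term killed by asserting $\nabla_{\mathbr{\theta}}\ell(\mathbr{\theta}^*(\mathbr{\omega})) = \mathbr{0}$, the Hessian bounded below by $\lambda_{\min}$, and the identity $\mathbr{\theta}_I^*(\mathbr{\omega}) - \mathbr{\theta}^*(\mathbr{\omega}) = -\alpha\,\nabla_{\mathbr{\theta}} J_{\mathcal{M}_{\mathbr{\omega}}}(\mathbr{\theta}^*(\mathbr{\omega}_0))\rvert_{\overline{I}}$, which is exactly your strong-convexity argument. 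The \emph{delicate point} you flag, namely that the stationarity of $\ell$ at $\mathbr{\theta}^*(\mathbr{\omega})$ is an idealization inherited from the surrogate one-gradient-step construction, is also present (and left implicit) in the paper's own proof.
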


Thus, we maximize the $L^2$--norm of the gradient components that correspond to the parameters we want to test. Since we have at our disposal only samples $\mathcal{D}$ collected with the current policy $\pi_{\mathbr{\theta}^*(\mathbr{\omega}_0)}$ and in the current environment $\mathbr{\omega}_0$, we have to perform an off--distribution optimization over $\mathbr{\omega}$. To this end, we employ an approach analogous to that of~\citet{metelli2018policy} where we optimize the empirical version of the objective with a penalization that accounts for the distance between the distribution over trajectories:
\begin{equation}\label{eq:obj}
\resizebox{0.88\linewidth}{!}{$
\displaystyle \mathcal{C}_I(\mathbr{\omega}/\mathbr{\omega}_0) = \left\| \widehat{\nabla}_{\mathbr{\theta}} J_{\mathcal{M}_{\mathbr{\omega}/\mathbr{\omega}_0}}(\mathbr{\theta}^*(\mathbr{\omega}_0)) \rvert_{\overline{I}} \right\|_2^2 - \zeta \sqrt{\frac{	\widehat{d}_2 (\mathbr{\omega} \| \mathbr{\omega}_0) }{n}},$}
\end{equation}
where $\widehat{\nabla}_{\mathbr{\theta}} J_{\mathcal{M}_{\mathbr{\omega}/\mathbr{\omega}_0}}(\mathbr{\theta}^*(\mathbr{\omega}_0))$ is an off-distribution estimator of the gradient $\nabla_{\mathbr{\theta}} J_{\mathcal{M}_{\mathbr{\omega}}} (\mathbr{\theta}^*(\mathbr{\omega}_0))$ using samples collected with $\mathbr{\omega}_0$, $\widehat{d}_2$ is the estimated 2-\Renyi divergence~\cite{erven2014renyi} that works as a penalization to discourage too large updates and $\zeta \ge 0$ is a regularization parameter. The expression of the estimated gradient, 2-\Renyi divergence and the pseudocode are reported in Appendix~\ref{apx:configuration}.

\begin{figure*}
\begin{minipage}[t]{0.24\textwidth}
	\centering
	\includegraphics[scale=0.92]{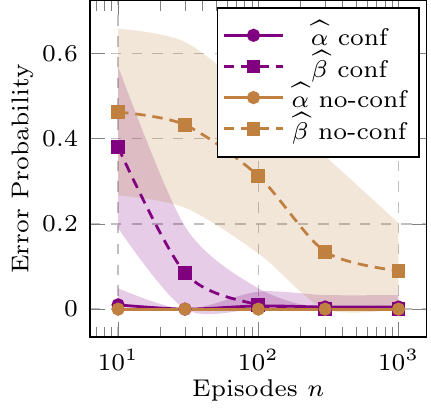}
	\captionof{figure}{\emph{Discrete Grid World}: $\widehat{\alpha}$ and $\widehat{\beta}$ error for \emph{conf} and \emph{no-conf} cases varying the number of episodes. 25 runs 95\% c.i.}\label{fig:gridworld}
\end{minipage}%
\hfill
\begin{minipage}[t]{0.48\textwidth}
\includegraphics[scale=0.92]{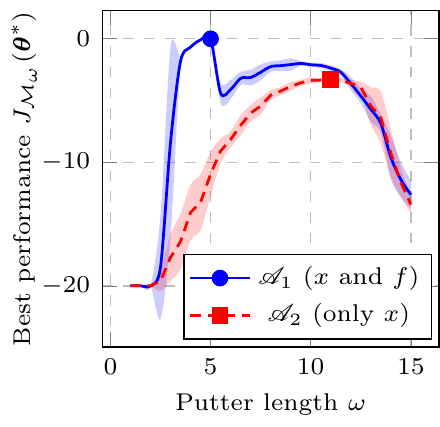}
\includegraphics[scale=0.92]{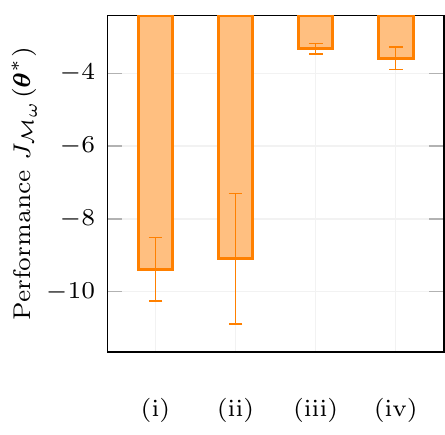}
\captionof{figure}{\emph{Mingolf}: Performance of the optimal policy varying the putter length $\omega$ for agents $\mathscr{A}_1$ and $\mathscr{A}_2$ (left) and performance of the optimal policy for agent $\mathscr{A}_2$ with four different strategies for selecting $\omega$ (right). 100 runs 95\% c.i.}\label{fig:minigolf}
\end{minipage}%
\hfill
\begin{minipage}[t]{0.24\textwidth}
	\centering
	\includegraphics[scale=0.92]{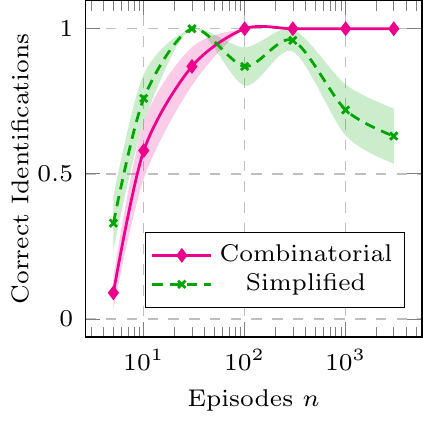}
	\captionof{figure}{\emph{Simulated Car Driving}: fraction of correct identifications varying the number of episodes. 100 runs 95\% c.i.}\label{fig:car}
\end{minipage}%
\end{figure*}
\section{Experimental Evaluation}\label{sec:experiments}
In this section, we present the experimental evaluation of the identification rules in three RL domains.
To set the values of $c(l)$ we resort to the Wilk's asymptotic approximation (Theorem~\ref{thr:wilks}) to enforce (asymptotic) guarantees on the type I error. For Identification Rule~\ref{ir:complete} we perform $2^d$ statistical tests by using the same dataset $\mathcal{D}$. Thus, we partition $\delta$ using Bonferroni correction and setting $c(l) = \chi^2_{l,1-{\delta}/{2^d}}$, where $\chi^2_{l,\xi}$ is the $\xi$--quintile of a chi square distribution with $l$ degrees of freedom. Instead, for Identification Rule~\ref{ir:simplified}, we perform $d$ statistical test, and thus, we set $c(1) = \chi^2_{1,1-{\delta}/{d}}$.

\subsection{Discrete Grid World}
The grid world environment is a simple representation of a two-dimensional world (5$\times$5 cells) in which an agent has to reach a target position by moving in the four directions. The goal of this set of experiments is to show the advantages of configuring the environment when performing the policy space identification using rule~\ref{ir:simplified}. The initial position of the agent and the target position are drawn at the beginning of each episode from a Boltzmann distribution $\mu_{\mathbr{\omega}}$. The agent plays a Boltzmann linear policy $\pi_{\mathbr{\theta}}$ with binary features $\mathbr{\phi}$ indicating its current row and column and the row and column of the goal.\footnote{The features are selected to fulfill Lemma~\ref{lemma:identExp}.} For each run, the agent can control a subset $I^*$ of the parameters $\mathbr{\theta}_{I^*}$ associated with those features, which is randomly selected. Furthermore, the supervisor can configure the environment by changing the parameters $\mathbr{\omega}$ of the initial state distribution $\mu_{\mathbr{\omega}}$. Thus, the supervisor can induce the agent to explore certain regions of the grid world and, consequently, change the relevance of the corresponding parameters in the optimal policy.

Figure~\ref{fig:gridworld} shows the empirical $\widehat{\alpha}$ and $\widehat{\beta}$, \ie the fraction of parameters that the agent does not control that are wrongly selected and the fraction of those the agent controls that are not selected respectively, as a function of the number $n$ of episodes used to perform the identification. We compare two cases: \emph{conf} where the identification is carried out by also configuring the environment, \ie optimizing Equation~\eqref{eq:obj}, and \emph{no-conf} in which the identification is performed in the original environment only. In both cases, we can see that $\widehat{\alpha}$ is almost  independent of the number of samples, as it is directly controlled by the critical value $c(1)$. Differently, $\widehat{\beta}$ decreases as the number of samples increases, \ie the power of the test $1-\widehat{\beta}$ increases with $n$. Remarkably, we observe that configuring the environment gives a significant advantage in understanding the parameters controlled by the agent \wrt using a fixed environment, as $\widehat{\beta}$ decreases faster in the \emph{conf} case. This phenomenon also justifies empirically our choice of objective (Equation~\eqref{eq:obj}) for selecting the new environment. Hyperparameters, further experimental results, together with experiments on a continuous version of the grid world, are reported in Appendix~\ref{apx:discreteGridworld}--\ref{apx:continuousGridworld}.

\subsection{Minigolf}
In the Minigolf environment~\cite{lazaric2008reinforcement}, an agent hits a ball using a putter with the goal of reaching the hole in the minimum number of attempts. Surpassing the hole causes the termination of the episode and a large penalization. The agent selects the force applied to the putter by playing a Gaussian policy linear in some polynomial features (complying to Lemma~\ref{lemma:identExp}) of the distance from the hole ($x$) and the friction of the green ($f$). We consider two agents: $\mathscr{A}_1$ has access to both the $x$ and $f$ whereas $\mathscr{A}_2$ knows only $x$. Thus, we expect that $\mathscr{A}_1$ learns a policy that allows reaching the hole in a smaller number of hits, compared to $\mathscr{A}_2$, as it can calibrate force according to friction; whereas $\mathscr{A}_2$ has to be more conservative, being unaware of $f$. There is also a supervisor in charge of selecting, for the two agents, the best putter length $\omega$, \ie the configurable parameter of the environment. In this experiment, we want to highlight that knowing the policy space might be of crucial importance when learning in a Conf--MDP.

Figure~\ref{fig:minigolf}-left shows the performance of the optimal policy as a function of the putter length $\omega$. We can see that for agent $\mathscr{A}_1$ the optimal putter length is $\omega^*_{\mathscr{A}_1}=5$ while for agent $\mathscr{A}_2$ is $\omega^*_{\mathscr{A}_2}=11.5$.
Figure~\ref{fig:minigolf}-right compares the performance of the optimal policy of agent $\mathscr{A}_2$ when the putter length $\omega$ is chosen by the supervisor using four different strategies. In (i) the configuration is sampled uniformly in the interval $[1,15]$. In (ii) the supervisor employs the optimal configuration for agent $\mathscr{A}_1$ ($\omega=5$), \ie assuming the agent is aware of the friction. (iii) is obtained by selecting the optimal configuration of the policy space produced by using our identification rule~\ref{ir:simplified}. Finally, (iv) is derived by employing an oracle that knows the true agent's policy space ($\omega=11.5$). We can see that the performance of the identification procedure (iii) is comparable with that of the oracle (iv) and notably higher than the performance when employing an incorrect policy space (ii). Hyperparameters and additional experiments are reported in Appendix~\ref{apx:experimentsMinigolf}.

\subsection{Simulated Car Driving}
We consider a simple version of a car driving simulator, in which the agent has to reach the end of a road in the minimum amount of time, avoiding running off-road. The agent perceives its speed, four sensors placed at different angles that provide distance from the edge of the road and it can act on acceleration and steering. The purpose of this experiment is to show a case in which the identifiability assumption (Assumption~\ref{ass:identifiability}) may not be satisfied. The policy $\pi_{\mathbr{\theta}}$ is modeled as a Gaussian policy whose mean is computed via a single hidden layer neural network with 8 neurons. Some of the sensors are not available to the agent, our goal is to identify which ones the agent can perceive. In Figure~\ref{fig:car}, we compare the performance of the Identification Rules~\ref{ir:complete} (Combinatorial) and~\ref{ir:simplified} (Simplified), showing the fraction of runs that correctly identify the policy space. We note that, while for a small number of samples the simplified rule seems to outperform, when the number of samples increases the combinatorial rule displays remarkable stability, approaching the correct identification in all the runs. This is explained by the fact that, when multiple representations for the same policy are possible, considering one parameter at a time might induce the simplified rule to select a wrong set of parameters. Hyperparameters are reported in Appendix~\ref{apx:experimentsCar}.

%

%
%
%
%


\section{Discussion and Conclusions}
In this paper, we addressed the problem of identifying the policy space of an agent by simply observing its behavior when playing the optimal policy. We introduced two identification rules, both based on the GLR test, which can be applied to select the parameters controlled by the agent. Additionally, we have shown how to use the configurability property of the environment to enhance the effectiveness of identification rules. The experimental evaluation highlights two essential points. First, the identification of the policy space brings advantages to the learning process in a Conf--MDP, helping to choose wisely the most suitable environment configuration. Second, we have illustrated that configuring the environment is beneficial to speed up the identification process. We believe that this work opens numerous future research directions, both theoretical, such as the analysis of the combinatorial identification rule, and empirical, like the application of our identification rules to  imitation learning settings.

\small
\bibliographystyle{aaai}
\bibliography{biblio}

\onecolumn
\normalsize
\appendix
\setlength{\abovedisplayskip}{4pt}
\setlength{\belowdisplayskip}{4pt}
%
%
\section{Proofs and Derivations}\label{apx:proofs}
In this appendix, we report the proofs and derivations of the results presented in the main paper.

\subsection{Gaussian and Boltzmann Linear Policies as Exponential Family distributions}\label{apx:expFamily}
In this appendix, we show how a multivariate Gaussian with fixed covariance and a Boltzmann policy, both linear in the state features $\mathbr{\phi}(s)$ can be cast into Definition~\ref{defi:expFamily}. We are going to make use of the following identities regarding the Kronecker product~\cite{petersen2008matrix}:
\begin{align}
	& \mathrm{vec}(\mathbr{AXB}) = \left( \mathbr{B}^T \otimes \mathbr{A} \right) \mathrm{vec} (\mathbr{X}) \label{eq:kron}\\
	& \mathbr{a}^T \mathbr{XBX}^T \mathbr{c} = \mathrm{vec}(\mathbr{X})^T \left( \mathbr{B} \otimes \mathbr{ca}^T \right) \mathrm{vec}(\mathbr{X}), \label{eq:kron2}
\end{align}
where $\mathrm{vec} (\mathbr{X})$ is the \emph{vectorization} of matrix $\mathbr{X}$ obtained by stacking the columns of $\mathbr{X}$ into a single column vector.

\subsubsection{Multivariate Linear Gaussian Policy with fixed covariance}
The typical representation of a multivariate linear Gaussian policy is given by the following probability density function:
\begin{equation*}
	\pi_{\widetilde{\mathbr{\theta}}}(\mathbr{a}|s) = \frac{1}{(2\pi)^{\frac{k}{2}} \det(\mathbr{\Sigma})^{\frac{1}{2}}} \exp\left\{ -\frac{1}{2} (\mathbr{a} - \widetilde{\mathbr{\theta}}\mathbr{\phi}(s))^T\mathbr{\Sigma}^{-1}(\mathbr{a} - \widetilde{\mathbr{\theta}}\mathbr{\phi}(s)) \right\},
\end{equation*}
where $\widetilde{\mathbr{\theta}} \in \mathbb{R}^{k \times q}$ is a properly sized matrix. Recalling Definition~\ref{defi:expFamily}, we rephrase the previous equation as:
\begin{equation*}
	\pi_{\widetilde{\mathbr{\theta}}}(\mathbr{a}|s) = \frac{1}{(2\pi)^{\frac{k}{2}} \det(\mathbr{\Sigma})^{\frac{1}{2}}} \exp\left\{ -\frac{1}{2} \mathbr{a}^T\mathbr{\Sigma}^{-1}\mathbr{a}\right\} \exp\left\{  \mathbr{\phi}(s)^T \widetilde{\mathbr{\theta}}^T \mathbr{\Sigma}^{-1}  \mathbr{a} - \frac{1}{2} \mathbr{\phi}(s)^T \widetilde{\mathbr{\theta}}^T \mathbr{\Sigma}^{-1} \widetilde{\mathbr{\theta}} \mathbr{\phi}(s) \right\}.
\end{equation*}
Recalling the identities at Equation~\eqref{eq:kron} and~\eqref{eq:kron2}  and observing that $  \mathbr{\phi}(s)^T \widetilde{\mathbr{\theta}}^T \mathbr{\Sigma}^{-1}  \mathbr{a}$ and $\mathbr{\phi}(s)^T \widetilde{\mathbr{\theta}}^T \mathbr{\Sigma}^{-1} \widetilde{\mathbr{\theta}} \mathbr{\phi}(s)$ are  scalar, we can rewrite:
\begin{align*}
	&  \mathbr{\phi}(s)^T \widetilde{\mathbr{\theta}}^T \mathbr{\Sigma}^{-1}  \mathbr{a} = \mathrm{vec} \left(  \mathbr{\phi}(s)^T \widetilde{\mathbr{\theta}}^T \mathbr{\Sigma}^{-1}  \mathbr{a} \right) = \left( \mathbr{a}^T \mathbr{\Sigma}^{-1} \otimes \mathbr{\phi}(s)^T \right) \mathrm{vec}\left( \widetilde{\mathbr{\theta}}^T \right) = \mathrm{vec}\left(\widetilde{\mathbr{\theta}}^T\right)^T \left( \mathbr{\Sigma}^{-1}\mathbr{a} \otimes \mathbr{\phi}(s)  \right)  \\
	& \mathbr{\phi}(s)^T \widetilde{\mathbr{\theta}}^T \mathbr{\Sigma}^{-1} \widetilde{\mathbr{\theta}} \mathbr{\phi}(s) = \mathrm{vec}\left(\widetilde{\mathbr{\theta}}^T\right)^T \left(\mathbr{\Sigma}^{-1} \otimes  \mathbr{\phi}(s) \mathbr{\phi}(s)^T  \right) \mathrm{vec}\left(\widetilde{\mathbr{\theta}}^T \right).
\end{align*}
Now, by redefining the parameter of the exponential family distribution $\mathbr{\theta} = \mathrm{vec}\left(\widetilde{\mathbr{\theta}}^T\right)$ we state the following definitions to comply with Definition~\ref{defi:expFamily}:
\begin{align*}
	& \mathbr{t}(s,\mathbr{a}) = \mathbr{\Sigma}^{-1} \mathbr{a} \otimes \mathbr{\phi}(s)\\
	& h(\mathbr{a}) = \frac{1}{(2\pi)^{\frac{k}{2}} \det(\mathbr{\Sigma})^{\frac{1}{2}}} \exp\left\{ -\frac{1}{2} \mathbr{a}^T\mathbr{\Sigma}^{-1}\mathbr{a}\right\}\\
	& A(\mathbr{\theta},s) = \mathbr{\theta}^T \left(\mathbr{\Sigma}^{-1} \otimes  \mathbr{\phi}(s) \mathbr{\phi}(s)^T  \right) \mathbr{\theta}.
\end{align*}

\subsubsection{Boltzmann Linear Policy}
The Boltzmann policy on a finite set of actions $\{a_1,...,a_{k+1}\}$ is typically represented by means of a matrix of parameters $\widetilde{\mathbr{\theta}} \in \mathbb{R}^{k \times q}$:\footnote{Notice that we are considering a set made of $k+1$ actions but the matrix $\widetilde{\mathbr{\theta}}$ has only $k$ rows. This allows enforcing the identifiability property, otherwise if we had a row for each of the $k+1$ actions we would have multiple representation for the same policy (rescaling the rows by the same amount).} 
\begin{equation*}
	 \pi_{\widetilde{\mathbr{\theta}}}(a_i|s) = \begin{cases} \frac{\exp \left\{\widetilde{\mathbr{\theta}}_i^T \mathbr{\phi}(s)\right\}}{1+\sum_{j=1}^{k} \exp\left\{\widetilde{\mathbr{\theta}}_j^T \mathbr{\phi}(s) \right\}} & \text{if } i \le k \\
	 \frac{1}{1+\sum_{j=1}^{k} \exp\left\{\widetilde{\mathbr{\theta}}_j^T \mathbr{\phi}(s) \right\}} & \text{if } i = k+1
	 \end{cases},
\end{equation*}
where with $\widetilde{\mathbr{\theta}}_i$ we denote the $i$-th row of matrix $\widetilde{\mathbr{\theta}}$. In order to comply to Definition~\ref{defi:expFamily}, we rewrite the density function in the following form:
\begin{equation*}
	\pi_{\widetilde{\mathbr{\theta}}}(a_i|s) = \begin{cases} \exp \left\{\widetilde{\mathbr{\theta}}_i^T \mathbr{\phi}(s) - \log \left( \exp\{0\} +\sum_{j=1}^{k} \exp\left\{\widetilde{\mathbr{\theta}}_j^T \mathbr{\phi}(s) \right\} \right) \right\} & \text{if } i \le k \\
	 \exp \left\{0 - \log \left( \exp\{0\} +\sum_{j=1}^{k} \exp\left\{\widetilde{\mathbr{\theta}}_j^T \mathbr{\phi}(s) \right\} \right) \right\} & \text{if } i = k+1
	 \end{cases}.
\end{equation*}
By introducing the vector $\mathbr{e}_i$ as the $i$--th vector of the canonical basis of $\mathbb{R}^k$, \ie the vector having $1$ in the $i$--th component and $0$ elsewhere, and recalling the definition of Kronecker product, we can derive the following identity for $i \le k$:
\begin{equation*}
\widetilde{\mathbr{\theta}}_i^T \mathbr{\phi}(s) = \mathrm{vec}\left(\widetilde{\mathbr{\theta}}^T\right)^T \left( \mathbr{e}_i \otimes \mathbr{\phi}(s) \right).
\end{equation*}
In the case $i=k$ it is sufficient to replace the previous term with the zero vector $\mathbr{0}$.
Therefore, by renaming $\mathbr{\theta} = \mathrm{vec}\left(\widetilde{\mathbr{\theta}}^T\right)$ we can make the following assignments in order to get the relevant quantities in Definition~\ref{defi:expFamily}:
\begin{align*}
	& \mathbr{t}(s,a_i) = \begin{cases} \mathbr{e}_i \otimes \mathbr{\phi}(s) & \text{if } i \le k \\
				\mathbr{0} & \text{if } i = k+1 \end{cases} \\
	& h(a_i) = 1\\
	& A(\mathbr{\theta},s) = \log \left( 1 +\sum_{j=1}^{k} \exp\left\{\mathbr{\theta}^T \left( \mathbr{e}_j \otimes \mathbr{\phi}(s) \right) \right\} \right).
\end{align*}

\subsection{Results on Exponential Family}\label{apx:expFamilyResults}
In this appendix, we derive several results that are used in Section~\ref{sec:analysis}, concerning policies belonging to the exponential family, as in Definition~\ref{defi:expFamily}.

\subsubsection{Fisher Information Matrix}\label{apx:fisher}
We start by providing an expression of the Fisher Information matrix (FIM) for the specific case of the exponential family, that we are going to use extensively in the derivation. We first define the FIM for a fixed state and then we provide its expectation under the state distribution $d_{\mu}^{\pi^*}$. For any state $s \in \mathcal{S}$, we define the FIM induced by $\pi_{\mathbr{\theta}}(\cdot|s)$ as:
\begin{equation}\label{eq:fisher}
	\mathcal{F}(\mathbr{\theta},s) =  \E_{a \sim \pi_{\mathbr{\theta}} (\cdot|s)} \left[ \nabla_{\mathbr{\theta}} \log \pi_{\mathbr{\theta}}(a|s) \nabla_{\mathbr{\theta}} \log \pi_{\mathbr{\theta}}(a|s)^T \right].
\end{equation}
We can derive the following immediate result.
\begin{lemma}\label{lemma:fisherDeriv}
	For a policy $\pi_{\mathbr{\theta}}$ belonging to the exponential family, as in  Definition~\ref{defi:expFamily}, the FIM for state $s \in \mathcal{S}$ is given by the covariance matrix of the sufficient statistic:
	\begin{equation*}
	 	\mathcal{F}(\mathbr{\theta},s) = \E_{a \sim \pi_{\mathbr{\theta}} (\cdot|s)} \left[ \mathbr{\overline{t}}(s,a,\mathbr{\theta}) \mathbr{\overline{t}}(s,a,\mathbr{\theta})^T\right] = \Cov_{a \sim \pi_{\mathbr{\theta}} (\cdot|s)} \left[ \mathbr{t}(s,a) \right].
	\end{equation*}
\end{lemma}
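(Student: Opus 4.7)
The plan is a direct calculation from the exponential family form. First I would take the logarithm of the density in Definition~\ref{defi:expFamily} and compute the score function: since
\begin{equation*}
\log \pi_{\mathbr{\theta}}(a|s) = \log h(a) + \mathbr{\theta}^T \mathbr{t}(s,a) - A(\mathbr{\theta},s),
\end{equation*}
the gradient with respect to $\mathbr{\theta}$ equals $\nabla_{\mathbr{\theta}} \log \pi_{\mathbr{\theta}}(a|s) = \mathbr{t}(s,a) - \nabla_{\mathbr{\theta}} A(\mathbr{\theta},s)$.

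Next I would show that $\nabla_{\mathbr{\theta}} A(\mathbr{\theta},s)$ is exactly the expected sufficient statistic. Differentiating under the integral (using standard regularity of exponential families, which allows the interchange) the definition $A(\mathbr{\theta},s) = \log \int_{\mathcal{A}} h(a)\exp\{\mathbr{\theta}^T \mathbr{t}(s,a)\}\,\mathrm{d}a$ yields
\begin{equation*}
\nabla_{\mathbr{\theta}} A(\mathbr{\theta},s) = \frac{\int_{\mathcal{A}} h(a)\exp\{\mathbr{\theta}^T \mathbr{t}(s,a)\}\mathbr{t}(s,a)\,\mathrm{d}a}{\int_{\mathcal{A}} h(a)\exp\{\mathbr{\theta}^T \mathbr{t}(s,a)\}\,\mathrm{d}a} = \E_{\overline{a} \sim \pi_{\mathbr{\theta}}(\cdot|s)}[\mathbr{t}(s,\overline{a})].
\end{equation*}
Combining this with the previous display gives $\nabla_{\mathbr{\theta}} \log \pi_{\mathbr{\theta}}(a|s) = \mathbr{t}(s,a) - \E_{\overline{a} \sim \pi_{\mathbr{\theta}}(\cdot|s)}[\mathbr{t}(s,\overline{a})] = \mathbr{\overline{t}}(s,a,\mathbr{\theta})$, matching the notation introduced in Definition~\ref{defi:expFamily}.

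Finally, I would substitute this score expression into the definition of the FIM at Equation~\eqref{eq:fisher}, obtaining
\begin{equation*}
\mathcal{F}(\mathbr{\theta},s) = \E_{a \sim \pi_{\mathbr{\theta}}(\cdot|s)}\!\left[\mathbr{\overline{t}}(s,a,\mathbr{\theta})\mathbr{\overline{t}}(s,a,\mathbr{\theta})^T\right],
\end{equation*}
and note that, since $\mathbr{\overline{t}}(s,a,\mathbr{\theta})$ is $\mathbr{t}(s,a)$ centered at its mean under $\pi_{\mathbr{\theta}}(\cdot|s)$, this expectation is by definition $\Cov_{a \sim \pi_{\mathbr{\theta}}(\cdot|s)}[\mathbr{t}(s,a)]$. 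There is no real obstacle here; the only mildly delicate point is justifying the differentiation under the integral sign when computing $\nabla_{\mathbr{\theta}} A(\mathbr{\theta},s)$, but this is standard for natural exponential families and is implicit in the regularity conditions already invoked in the paper.
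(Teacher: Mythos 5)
Your proposal is correct and follows essentially the same route as the paper's proof: compute the score $\nabla_{\mathbr{\theta}} \log \pi_{\mathbr{\theta}}(a|s) = \mathbr{t}(s,a) - \nabla_{\mathbr{\theta}} A(\mathbr{\theta},s)$, identify $\nabla_{\mathbr{\theta}} A(\mathbr{\theta},s)$ with $\E_{\overline{a} \sim \pi_{\mathbr{\theta}}(\cdot|s)}[\mathbr{t}(s,\overline{a})]$ by differentiating the log partition function, and substitute into the definition of the FIM to recognize the covariance of the sufficient statistic. Your added remark on justifying differentiation under the integral sign is a reasonable point of care that the paper leaves implicit.
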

\begin{proof}
	Let us first compute the gradient log-policy for the exponential family:
	\begin{align}
		\nabla_{\mathbr{\theta}} \log \pi_{\mathbr{\theta}}(a|s) &= \mathbr{t}(s,a) - \nabla_{\mathbr{\theta}}  A(\mathbr{\theta},s) \notag\\
		& =  \mathbr{t}(s,a) - \frac{\int_{\mathcal{A}} \mathbr{t}(s,\overline{a}) h(\overline{a})  \exp \left\{ \mathbr{\theta}^T \mathbr{t}(s,\overline{a}) \right\}\mathrm{d} \overline{a}}{\int_{\mathcal{A}} h(\overline{a})  \exp \left\{ \mathbr{\theta}^T \mathbr{t}(s,\overline{a}) \right\}\mathrm{d} \overline{a}} \label{eq:grad-log-pi} \\
		& = \mathbr{t}(s,a) - \E_{\overline{a} \sim \pi_{\mathbr{\theta}} (\cdot|s)} \left[\mathbr{t}(s,\overline{a}) \right] = \mathbr{\overline{t}}(s,a,\mathbr{\theta}). \notag
	\end{align}
	Now, we just need to apply the definition given in Equation~\eqref{eq:fisher} and to recall the definition of covariance matrix:
	\begin{align*}
		\mathcal{F}(\mathbr{\theta},s) & = \E_{a \sim \pi_{\mathbr{\theta}} (\cdot|s)} \left[ \mathbr{\overline{t}}(s,a,\mathbr{\theta}) \mathbr{\overline{t}}(s,a,\mathbr{\theta})^T\right] \\
		& = \E_{a \sim \pi_{\mathbr{\theta}} (\cdot|s)} \left[ \left(\mathbr{t}(s,a) - \E_{\overline{a} \sim \pi_{\mathbr{\theta}} (\cdot|s)} \left[\mathbr{t}(s,\overline{a}) \right] \right)  \left( \mathbr{t}(s,a) - \E_{\overline{a} \sim \pi_{\mathbr{\theta}} (\cdot|s)} \left[\mathbr{t}(s,\overline{a}) \right] \right)^T \right]  = \Cov_{a \sim \pi_{\mathbr{\theta}} (\cdot|s)} \left[ \mathbr{t}(s,a) \right].
	\end{align*}
\end{proof}

We now define the expected FIM $\mathcal{F}(\mathbr{\theta})$ and its corresponding estimator $\widehat{\mathcal{F}}(\mathbr{\theta})$ under the $\gamma$--discounted stationary distribution induced by the agent's policy $\pi^*$:
\begin{equation}
	{\mathcal{F}}(\mathbr{\theta}) = \E_{s \sim d_{\mu}^{\pi^*}} \left[ \E_{a \sim \pi_{\mathbr{\theta}}(\cdot|s)} \left[\mathbr{\overline{t}}(s,a)\mathbr{\overline{t}}(s,a)^T\right] \right], \qquad \widehat{\mathcal{F}}(\mathbr{\theta}) = \frac{1}{n} \sum_{i=1}^n \E_{a \sim \pi_{\mathbr{\theta}}(\cdot|s)} \left[\mathbr{\overline{t}}(s_i,a)\mathbr{\overline{t}}(s_i,a)^T\right].
\end{equation}

Finally, we provide a sufficient condition to ensure that the FIM ${\mathcal{F}}(\mathbr{\theta})$ is non singular in the case of Gaussian and Boltzmann linear policies.
%
%
\begin{prop}\label{prop:fisherForGaussBoltz}
	If the second moment matrix of the feature vector $\E_{s \sim d_{\mu}^{\pi^*}} \left[\mathbr{\phi}(s)\mathbr{\phi}(s)^T \right]$ is non--singular, the identifiability condition of Lemma~\ref{lemma:identExp} is fulfilled by the Gaussian and Boltzmann linear policies for all $\mathbr{\theta} \in \Theta$, provided that each action is played with non--zero probability for the Boltzmann policy.
\end{prop}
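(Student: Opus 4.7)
My plan is to reduce the non-singularity of $\mathcal{F}(\mathbr{\theta})$ to a non-degeneracy condition on the feature map, which I can then discharge using the assumed non-singularity of $\E_{s \sim d_\mu^{\pi^*}}[\mathbr{\phi}(s)\mathbr{\phi}(s)^T]$. By Lemma~\ref{lemma:fisherDeriv}, the per-state Fisher matrix is a covariance, so for any $\mathbr{v} \in \mathbb{R}^d$,
\begin{equation*}
\mathbr{v}^T \mathcal{F}(\mathbr{\theta}) \mathbr{v} = \E_{s \sim d_\mu^{\pi^*}}\!\left[\Var_{a \sim \pi_{\mathbr{\theta}}(\cdot|s)}\!\left[\mathbr{v}^T \mathbr{t}(s,a)\right]\right] \ge 0,
\end{equation*}
with equality iff, for $d_\mu^{\pi^*}$-almost every $s$, the scalar $a \mapsto \mathbr{v}^T \mathbr{t}(s,a)$ is constant $\pi_{\mathbr{\theta}}(\cdot|s)$-almost surely. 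So non-singularity of $\mathcal{F}(\mathbr{\theta})$ is equivalent to the implication: this constancy for a.e.\ $s$ forces $\mathbr{v} = \mathbr{0}$.

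Next I would parametrize $\mathbr{v} = \mathrm{vec}(\mathbr{V}^T)$ with $\mathbr{V} \in \mathbb{R}^{k \times q}$ and treat the two policy classes separately using the Kronecker identity $\mathbr{v}^T(\mathbr{u} \otimes \mathbr{\phi}(s)) = \mathbr{u}^T \mathbr{V} \mathbr{\phi}(s)$. For the Gaussian linear policy, $\mathbr{t}(s,\mathbr{a}) = \mathbr{\Sigma}^{-1}\mathbr{a} \otimes \mathbr{\phi}(s)$, hence $\mathbr{v}^T \mathbr{t}(s,\mathbr{a}) = \mathbr{a}^T \mathbr{\Sigma}^{-1} \mathbr{V} \mathbr{\phi}(s)$. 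Since this is an affine function of $\mathbr{a}$ whose distribution has full-rank covariance $\mathbr{\Sigma}$, it is constant in $\mathbr{a}$ iff $\mathbr{\Sigma}^{-1} \mathbr{V} \mathbr{\phi}(s) = \mathbr{0}$, which (by invertibility of $\mathbr{\Sigma}$) reduces to $\mathbr{V}\mathbr{\phi}(s) = \mathbr{0}$. For the Boltzmann policy, $\mathbr{v}^T \mathbr{t}(s,a_i) = \mathbr{V}_i \mathbr{\phi}(s)$ for $i \le k$ and $\mathbr{v}^T \mathbr{t}(s,a_{k+1}) = 0$; since every action carries positive probability, constancy across actions forces all $k+1$ values to agree with the last one, i.e.\ $\mathbr{V}_i \mathbr{\phi}(s) = 0$ for $i=1,\dots,k$, which again gives $\mathbr{V}\mathbr{\phi}(s) = \mathbr{0}$.

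To close the argument, I would take expectation of $\mathbr{\phi}(s)^T \mathbr{V}^T \mathbr{V} \mathbr{\phi}(s) = 0$ and rewrite it as $\mathrm{tr}\bigl(\mathbr{V}^T \mathbr{V}\, \E_{s \sim d_\mu^{\pi^*}}[\mathbr{\phi}(s)\mathbr{\phi}(s)^T]\bigr) = 0$. Since the second moment matrix is positive definite by assumption and $\mathbr{V}^T\mathbr{V}$ is positive semidefinite, this trace vanishes only if $\mathbr{V}^T\mathbr{V} = \mathbr{0}$, hence $\mathbr{V} = \mathbr{0}$ and $\mathbr{v} = \mathbr{0}$. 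This establishes positive definiteness of $\mathcal{F}(\mathbr{\theta})$ for every $\mathbr{\theta} \in \Theta$, and Lemma~\ref{lemma:identExp} then yields identifiability.

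The only mildly delicate step is the Boltzmann case, where I must use the positivity of every $\pi_{\mathbr{\theta}}(a_i|s)$ to upgrade $\Var_a[\mathbr{v}^T\mathbr{t}(s,a)]=0$ from almost-sure constancy to pointwise equality of all $k+1$ values, including the pinned zero at $a_{k+1}$; this is what couples $\mathbr{V}\mathbr{\phi}(s) = \mathbr{0}$ rather than leaving an affine indeterminacy, and it is precisely where the additional hypothesis in the statement is used. Everything else is routine linear algebra on Kronecker products and a single application of the hypothesis on $\E[\mathbr{\phi}(s)\mathbr{\phi}(s)^T]$.
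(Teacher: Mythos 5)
Your proof is correct, but it takes a genuinely different route from the paper. The paper computes the Fisher matrix explicitly as a Kronecker product: for the Gaussian policy it obtains $\mathbr{\Sigma}^{-1}\otimes \mathbr{\phi}(s)\mathbr{\phi}(s)^T$, and for the Boltzmann policy $\left(\mathrm{diag}(\mathbr{\pi})-\mathbr{\pi}\mathbr{\pi}^T\right)\otimes\mathbr{\phi}(s)\mathbr{\phi}(s)^T$, proving positive definiteness of the left factor by strict diagonal dominance (which is where the positive-probability hypothesis enters there) and then invoking the fact that a Kronecker product of positive definite matrices is positive definite. You instead characterize the null space of $\mathcal{F}(\mathbr{\theta})$ variationally, via $\mathbr{v}^T\mathcal{F}(\mathbr{\theta})\mathbr{v}=\E_s\left[\Var_a\left[\mathbr{v}^T\mathbr{t}(s,a)\right]\right]$, reduce both policy classes to the single condition $\mathbr{V}\mathbr{\phi}(s)=\mathbr{0}$ almost everywhere, and finish with a trace argument against $\E_{s}\left[\mathbr{\phi}(s)\mathbr{\phi}(s)^T\right]$. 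The paper's computation has the side benefit of producing the explicit form of the FIM, but your argument is cleaner on one point the paper glosses over: in the Boltzmann case the per-state FIM is only positive \emph{semi}definite (since $\mathbr{\phi}(s)\mathbr{\phi}(s)^T$ is rank one and $\mathbr{\pi}$ depends on $s$), so $\mathcal{F}(\mathbr{\theta})$ is an expectation of state-dependent Kronecker products rather than a single Kronecker product, and the "properties of the Kronecker product" do not apply verbatim; your handling of the state expectation through the trace identity closes that gap explicitly. Your version also makes transparent exactly how the positive-probability hypothesis is used, namely to pin the constant value of $a\mapsto\mathbr{v}^T\mathbr{t}(s,a)$ to the zero attained at $a_{k+1}$.
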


\begin{proof}
	Let us start with the Boltzmann policy and consider the expression of $\mathbr{\overline{t}}(s,a_i)$ with $i\in \{1,...,k\}$:
	\begin{align*}
		\mathbr{\overline{t}}(s,a_i, \mathbr{\theta}) & = \mathbr{t}(s,a_i) - \E_{\overline{a} \sim \pi_{\mathbr{\theta}}(\cdot|s)} \left[\mathbr{t}(s,\overline{a}) \right] \\
				& = \mathbr{e}_i \otimes \mathbr{\phi}(s) - \sum_{j=1}^k \pi_{\mathbr{\theta}}(a_i|s) \mathbr{e}_i \otimes \mathbr{\phi}(s) \\
				& = \left(\mathbr{e}_i - \mathbr{\pi} \right) \otimes \mathbr{\phi}(s),
	\end{align*}
	where $\mathbr{\pi}$ is a vector defined as $\mathbr{\pi} = \left(\pi_{\mathbr{\theta}}(a_1|s) ,...,\pi_{\mathbr{\theta}}(a_k|s)  \right)^T$ and we exploited the distributivity of the Kronecker product. While for $i=k+1$, we have $\left(\mathbr{0} - \mathbr{\pi} \right) \otimes \mathbr{\phi}(s)$. For the sake of the proof, let us define $\widetilde{\mathbr{e}}_i = \mathbr{e}_i$ if $i \le k$ and $\widetilde{\mathbr{e}}_{k+1} = \mathbr{0}$. Let us compute the FIM:
	\begin{align*}
		\mathcal{F}(\mathbr{\theta}) &= \E_{a \sim \pi_{\mathbr{\theta}} (\cdot|s)} \left[ \mathbr{\overline{t}}(s,a,\mathbr{\theta}) \mathbr{\overline{t}}(s,a,\mathbr{\theta})^T\right] \\
		& = \E_{a \sim \pi_{\mathbr{\theta}} (\cdot|s)} \left[ \left(\left(\widetilde{\mathbr{e}_i} - \mathbr{\pi} \right) \otimes \mathbr{\phi}(s) \right) \left(\left(\widetilde{\mathbr{e}_i} - \mathbr{\pi} \right) \otimes \mathbr{\phi}(s) \right)^T \right] \\
		& = \E_{a \sim \pi_{\mathbr{\theta}} (\cdot|s)} \left[ \left(\widetilde{\mathbr{e}_i} - \mathbr{\pi} \right)\left(\widetilde{\mathbr{e}_i} - \mathbr{\pi} \right)^T \otimes \mathbr{\phi}(s)\mathbr{\phi}(s)^T \right] \\
		& = \E_{a \sim \pi_{\mathbr{\theta}} (\cdot|s)} \left[ \left(\widetilde{\mathbr{e}_i} - \mathbr{\pi} \right)\left(\widetilde{\mathbr{e}_i} - \mathbr{\pi} \right)^T  \right] \otimes \mathbr{\phi}(s)\mathbr{\phi}(s)^T \\
		& = \left( \E_{a \sim \pi_{\mathbr{\theta}} (\cdot|s)} \left[ \widetilde{\mathbr{e}_i}\widetilde{\mathbr{e}_i}^T \right] - \mathbr{\pi}\mathbr{\pi}^T \right) \otimes \mathbr{\phi}(s)\mathbr{\phi}(s)^T \\
		& = \left( \mathrm{diag} (\mathbr{\pi}) - \mathbr{\pi}\mathbr{\pi}^T \right) \otimes \mathbr{\phi}(s)\mathbr{\phi}(s)^T,
	\end{align*}
	where we exploited the distributivity of the Kroneker product, observed that $\E_{a \sim \pi_{\mathbr{\theta}} (\cdot|s)} \left[ \widetilde{\mathbr{e}_i}\right] = \mathbr{\pi}$ and $\E_{a \sim \pi_{\mathbr{\theta}} (\cdot|s)} \left[ \widetilde{\mathbr{e}_i}\widetilde{\mathbr{e}_i}^T \right] = \mathrm{diag} (\mathbr{\pi})$. Let us now consider the matrix:
	\begin{equation*}
		 \mathrm{diag} (\mathbr{\pi}) - \mathbr{\pi}\mathbr{\pi}^T = \begin{pmatrix}
		 	\pi_{\mathbr{\theta}}(a_1|s) - \pi_{\mathbr{\theta}}(a_1|s)^2 & -\pi_{\mathbr{\theta}}(a_1|s)\pi_{\mathbr{\theta}}(a_2|s) & \dots & -\pi_{\mathbr{\theta}}(a_1|s)\pi_{\mathbr{\theta}}(a_k|s) \\
		 	-\pi_{\mathbr{\theta}}(a_1|s)\pi_{\mathbr{\theta}}(a_2|s) & \pi_{\mathbr{\theta}}(a_2|s) - \pi_{\mathbr{\theta}}(a_2|s)^2 & \dots & -\pi_{\mathbr{\theta}}(a_2|s)\pi_{\mathbr{\theta}}(a_k|s) \\
		 	\vdots & \vdots & \ddots & \vdots \\
		 	-\pi_{\mathbr{\theta}}(a_1|s)\pi_{\mathbr{\theta}}(a_k|s) & -\pi_{\mathbr{\theta}}(a_2|s)\pi_{\mathbr{\theta}}(a_k|s) & \dots & \pi_{\mathbr{\theta}}(a_k|s) - \pi_{\mathbr{\theta}}(a_k|s)^2  \\
		 \end{pmatrix}.
	\end{equation*}
	Consider a generic row $i \in \{1,...,k\}$. The element on the diagonal is $\pi_{\mathbr{\theta}}(a_i|s) - \pi_{\mathbr{\theta}}(a_i|s)^2 = \pi_{\mathbr{\theta}}(a_i|s) \left(1- \pi_{\mathbr{\theta}}(a_i|s)\right)$, while the absolute sum of the elements out of the diagonal is:
	\begin{equation*}
		\pi_{\mathbr{\theta}}(a_i|s) \sum_{j \in \{1,...k\} \wedge j \neq i} \pi_{\mathbr{\theta}}(a_j|s) = \pi_{\mathbr{\theta}}(a_i|s) \left( 1 - \pi_{\mathbr{\theta}}(a_i|s) - \pi_{\mathbr{\theta}}(a_{k+1} |s) \right).
	\end{equation*}
	Therefore, if all actions are played with non--zero probability, \ie $\pi_{\mathbr{\theta}}(a_i|s) > 0$ for all $i \in \{1,...,k+1\}$ it follows that the matrix is strictly diagonally dominant by rows and thus it is positive definite. If also $\E_{s \sim d_{\mu}^{\pi^*}} \left[\mathbr{\phi}(s)\mathbr{\phi}(s)^T \right]$ is positive definite, for the properties of the Kroneker product, the FIM is positive definite.
	
	Let us now focus on the Gaussian policy. Let $\mathbr{a} \in \mathbb{R}^d$ and denote $\mathbr{\mu}(s) = \E_{\mathbr{a} \sim \pi_{\mathbr{\theta}}(\cdot|s)} \left[ \mathbr{a} \right]$:
	\begin{align*}
		\mathbr{\overline{t}}(s,\mathbr{a}, \mathbr{\theta}) & = \mathbr{t}(s,\mathbr{a}) - \E_{\overline{\mathbr{a}} \sim \pi_{\mathbr{\theta}}(\cdot|s)} \left[\mathbr{t}(s,\overline{\mathbr{a}}) \right] = \mathbr{\Sigma}^{-1}\left(\mathbr{a} - \mathbr{\mu}(s)\right) \otimes \mathbr{\phi}(s).
	\end{align*}
	Let us compute the FIM:
	\begin{align*}
		\mathcal{F}(\mathbr{\theta}) &= \E_{a \sim \pi_{\mathbr{\theta}} (\cdot|s)} \left[ \mathbr{\overline{t}}(s,a,\mathbr{\theta}) \mathbr{\overline{t}}(s,a,\mathbr{\theta})^T\right] \\
		& = \E_{a \sim \pi_{\mathbr{\theta}} (\cdot|s)} \left[ \left( \mathbr{\Sigma}^{-1}\left(\mathbr{a} - \mathbr{\mu}(s)\right) \otimes \mathbr{\phi}(s) \right) \left( \mathbr{\Sigma}^{-1}\left(\mathbr{a} - \mathbr{\mu}(s)\right) \otimes \mathbr{\phi}(s) \right)^T\right] \\
		& = \E_{a \sim \pi_{\mathbr{\theta}} (\cdot|s)} \left[ \mathbr{\Sigma}^{-1}\left(\mathbr{a} - \mathbr{\mu}(s)\right)\left(\mathbr{a} - \mathbr{\mu}(s)\right)^T\mathbr{\Sigma}^{-1} \otimes \mathbr{\phi}(s)\mathbr{\phi}(s)^T\right] \\
		& = \mathbr{\Sigma}^{-1} \E_{a \sim \pi_{\mathbr{\theta}} (\cdot|s)} \left[ \left(\mathbr{a} - \mathbr{\mu}(s)\right)\left(\mathbr{a} - \mathbr{\mu}(s)\right)^T \right]\mathbr{\Sigma}^{-1} \otimes \mathbr{\phi}(s)\mathbr{\phi}(s)^T \\
		& = \mathbr{\Sigma}^{-1} \mathbr{\Sigma} \mathbr{\Sigma}^{-1} \otimes \mathbr{\phi}(s)\mathbr{\phi}(s)^T  = \mathbr{\Sigma}^{-1}\otimes \mathbr{\phi}(s)\mathbr{\phi}(s)^T.
	\end{align*} 
	If $\mathbr{\Sigma}$ has finite values, then $\mathbr{\Sigma}^{-1}$ will be positive definite and additionally, considering that $\E_{s \sim d_{\mu}^{\pi^*}} \left[\mathbr{\phi}(s)\mathbr{\phi}(s)^T \right]$ is positive definite, we have that the FIM is positive definite.
\end{proof}

\subsubsection{Subgaussianity Assumption}\label{apx:subgaussian}

From Assumption~\ref{ass:subGauss}, we can prove the following result that upper bounds the maximum eigenvalue $\lambda_{\max}$ of the Fisher information matrix with the subgaussianity parameter $\sigma$.

\begin{lemma}\label{lemma:fisherMaxEig}
	Under Assumption~\ref{ass:subGauss}, for any $\mathbr{\theta} \in \Theta$ and for any $s \in \mathcal{S}$ the maximum eigenvalue of the Fisher Information matrix $\mathcal{F}(\mathbr{\theta},s)$ is upper bounded by $d\sigma^2$.
\end{lemma}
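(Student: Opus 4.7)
The plan is to bound the maximum eigenvalue by the trace and then use Assumption~\ref{ass:subGauss} coordinate-wise. By Lemma~\ref{lemma:fisherDeriv}, $\mathcal{F}(\mathbr{\theta},s) = \E_{a \sim \pi_{\mathbr{\theta}}(\cdot|s)}[\mathbr{\overline{t}}(s,a,\mathbr{\theta})\mathbr{\overline{t}}(s,a,\mathbr{\theta})^T]$ is positive semidefinite, so $\lambda_{\max}(\mathcal{F}(\mathbr{\theta},s)) \le \mathrm{tr}(\mathcal{F}(\mathbr{\theta},s))$. The trace equals $\E_{a \sim \pi_{\mathbr{\theta}}(\cdot|s)}[\|\mathbr{\overline{t}}(s,a,\mathbr{\theta})\|_2^2] = \sum_{i=1}^d \E_{a \sim \pi_{\mathbr{\theta}}(\cdot|s)}[\overline{t}_i(s,a,\mathbr{\theta})^2]$, reducing the problem to bounding each scalar second moment by $\sigma^2$.

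Next, I would specialize Assumption~\ref{ass:subGauss} with $\mathbr{\alpha} = \tau \mathbr{e}_i$ for $\tau \in \mathbb{R}$ to get $\E[\exp\{\tau \overline{t}_i(s,a,\mathbr{\theta})\}] \le \exp\{\tau^2 \sigma^2/2\}$. Since $\overline{t}_i$ is a centered random variable (by construction, $\E_{a \sim \pi_{\mathbr{\theta}}(\cdot|s)}[\mathbr{\overline{t}}(s,a,\mathbr{\theta})] = \mathbr{0}$), this is the standard one-dimensional subgaussian MGF bound. Expanding both sides in Taylor series around $\tau = 0$,
\begin{equation*}
1 + \frac{\tau^2}{2}\E[\overline{t}_i^2] + O(\tau^3) \le 1 + \frac{\tau^2 \sigma^2}{2} + O(\tau^4),
\end{equation*}
and matching second-order coefficients yields $\E[\overline{t}_i(s,a,\mathbr{\theta})^2] \le \sigma^2$.

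Combining the two steps, $\mathrm{tr}(\mathcal{F}(\mathbr{\theta},s)) \le d\sigma^2$, hence $\lambda_{\max}(\mathcal{F}(\mathbr{\theta},s)) \le d\sigma^2$, as claimed. There is essentially no hard step: the only mildly delicate point is justifying the passage from the subgaussian MGF bound to the variance bound, which is a textbook fact about subgaussian random variables and can be stated as a preliminary lemma if a self-contained argument is desired.
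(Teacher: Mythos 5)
Your proof is correct, and its overall skeleton matches the paper's: both arguments first reduce the claim to bounding $\E_{a \sim \pi_{\mathbr{\theta}}(\cdot|s)}\left[ \left\| \mathbr{\overline{t}}(s,a,\mathbr{\theta}) \right\|_2^2 \right]$ (you do this via $\lambda_{\max} \le \mathrm{tr}$ for positive semidefinite matrices; the paper does it via the variational characterization $\lambda_{\max} = \sup_{\|\mathbr{x}\|_2 \le 1} \mathbr{x}^T \mathcal{F}(\mathbr{\theta},s)\mathbr{x}$ followed by Cauchy--Schwarz, which yields the same bound). Where you genuinely diverge is the second step: the paper closes by invoking the first inequality of Remark 2.2 of Hsu et al.\ (2011) with $\mathbr{A} = \mathbr{I}$ to get $\E\left[\left\|\mathbr{\overline{t}}\right\|_2^2\right] \le d\sigma^2$, whereas you prove this special case from scratch by testing Assumption~\ref{ass:subGauss} on $\mathbr{\alpha} = \tau \mathbr{e}_i$ and extracting the second moment of each coordinate from the MGF bound. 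Your route is self-contained and more elementary, at the cost of one small point you should make rigorous: one cannot literally ``match coefficients'' across an inequality; the clean statement is that, since $\E[\overline{t}_i] = 0$ and the MGF is finite near $0$ (hence all moments exist and the expansion is valid), dividing $\E[\exp\{\tau \overline{t}_i\}] - 1 \le \exp\{\tau^2\sigma^2/2\} - 1$ by $\tau^2$ and letting $\tau \to 0$ gives $\tfrac{1}{2}\E[\overline{t}_i^2] \le \tfrac{\sigma^2}{2}$. With that limiting argument spelled out (or the textbook subgaussian-variance lemma cited), your proof is complete and arguably preferable for a self-contained appendix.
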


\begin{proof}
	Recall that the maximum eigenvalue of a matrix $\mathbr{A}$ can be computed as $\sup_{\mathbr{x} : \left\| \mathbr{x} \right\|_2 \le 1} \mathbr{x}^T\mathbr{Ax}$ and the norm of a vector $\mathbr{y}$ can be computed as $\sup_{\mathbr{x} : \left\| \mathbr{x} \right\|_2 \le 1} \mathbr{x}^T \mathbr{y}$. Consider now the derivation for a generic $\mathbr{x} \in \mathbb{R}^d$ such that $\left\| \mathbr{x} \right\|_2 \le 1$:
	\begin{align*}
		\mathbr{x}^T \mathcal{F}(\mathbr{\theta},s) \mathbr{x} & = \mathbr{x}^T \E_{a \sim \pi_{\mathbr{\theta}} (\cdot|s)} \left[  \mathbr{\overline{t}}(s,a,\mathbr{\theta}) \mathbr{\overline{t}}(s,a,\mathbr{\theta})^T \right] \mathbr{x}\\ 
		& =  \E_{a \sim \pi_{\mathbr{\theta}} (\cdot|s)} \left[  \mathbr{x}^T \mathbr{\overline{t}}(s,a,\mathbr{\theta}) \mathbr{\overline{t}}(s,a,\mathbr{\theta})^T \mathbr{x} \right] \\
		& = \E_{a \sim \pi_{\mathbr{\theta}} (\cdot|s)} \left[ \left( \mathbr{x}^T \mathbr{\overline{t}}(s,a,\mathbr{\theta}) \right)^2 \right] \\
		& \le \E_{a \sim \pi_{\mathbr{\theta}} (\cdot|s)} \left[\left( \sup_{\mathbr{x} : \left\| \mathbr{x} \right\|_2 \le 1} \mathbr{x}^T \mathbr{\overline{t}}(s,a,\mathbr{\theta}) \right)^2 \right] = \E_{a \sim \pi_{\mathbr{\theta}} (\cdot|s)} \left[ \left\| \mathbr{\overline{t}}(s,a,\mathbr{\theta}) \right\|_2^2 \right],
	\end{align*}
	where we employed Lemma~\ref{lemma:fisherDeriv} and upper bounded the right hand side. By taking the supremum over $\mathbr{x} \in \mathbb{R}^d$ such that $\left\| \mathbr{x} \right\|_2 \le 1$ we get:
	\begin{equation}
		\lambda_{\max} \left( \mathcal{F}(\mathbr{\theta},s) \right) = \sup_{\mathbr{x} : \left\| \mathbr{x} \right\|_2 \le 1} \mathbr{x}^T \mathcal{F}(\mathbr{\theta},s) \mathbr{x}  \le \E_{a \sim \pi_{\mathbr{\theta}} (\cdot|s)} \left[ \left\| \mathbr{\overline{t}}(s,a,\mathbr{\theta}) \right\|_2^2 \right].
	\end{equation}
	By applying the first inequality in Remark 2.2 of~\citet{hsu2011tail} and setting $\mathbr{A} = \mathbr{I}$ we get that $\E_{a \sim \pi_{\mathbr{\theta}} (\cdot|s)} \left[ \left\| \mathbr{\overline{t}}(s,a,\mathbr{\theta}) \right\|_2^2 \right] \le d\sigma^2$.
\end{proof}

We now show that the subgaussianity assumption is satisfied by the Boltzmann and Gaussian policies, as defined in Table~\ref{tab:expFamily}, under mild assumptions.

\begin{prop}\label{prop:subGaussExample}
	If the features $\mathbr{\phi}$ are uniformly bounded in norm over the state space, \ie $\Phi_{\max} = \sup_{s \in \mathcal{S}} \left\| \mathbr{\phi}(s) \right\|_2$, then Assumption~\ref{ass:subGauss} is fulfilled by the Boltzmann linear policy with parameter $\sigma = 2 \Phi_{\max}$ and Gaussian linear policy with parameter $\sigma = \frac{\Phi_{\max}}{\sqrt{ \lambda_{\min} \left( \mathbr{\Sigma} \right)}}$.
\end{prop}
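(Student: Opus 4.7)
The plan is to treat each case separately by reducing the quadratic form $\alpha^T \overline{\mathbr{t}}(s,a,\mathbr{\theta})$ to a standard one-dimensional random variable whose moment generating function (MGF) we can control, then bound the result in terms of $\|\mathbr{\alpha}\|_2$ and $\Phi_{\max}$.

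For the Gaussian linear policy, I would first write $\mathbr{\alpha} = \mathrm{vec}(M^T)$ with $M \in \mathbb{R}^{k \times q}$, so that, by the same Kronecker identity $(u \otimes v)^T \mathrm{vec}(M^T) = u^T M v$ used in Appendix~\ref{apx:expFamily}, the centered sufficient statistic $\overline{\mathbr{t}}(s,\mathbr{a},\mathbr{\theta}) = \mathbr{\Sigma}^{-1}(\mathbr{a}-\mathbr{\mu}(s)) \otimes \mathbr{\phi}(s)$ yields $\mathbr{\alpha}^T \overline{\mathbr{t}}(s,\mathbr{a},\mathbr{\theta}) = (\mathbr{a}-\mathbr{\mu}(s))^T \mathbr{\Sigma}^{-1} M \mathbr{\phi}(s)$. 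Since $\mathbr{a}-\mathbr{\mu}(s) \sim \mathcal{N}(\mathbr{0},\mathbr{\Sigma})$, this is a centered univariate Gaussian with variance $\mathbr{\phi}(s)^T M^T \mathbr{\Sigma}^{-1} M \mathbr{\phi}(s)$. Using the closed form Gaussian MGF, applying $\lambda_{\max}(\mathbr{\Sigma}^{-1}) = 1/\lambda_{\min}(\mathbr{\Sigma})$, and then $\|M\mathbr{\phi}(s)\|_2^2 \le \|M\|_F^2 \|\mathbr{\phi}(s)\|_2^2 = \|\mathbr{\alpha}\|_2^2 \|\mathbr{\phi}(s)\|_2^2 \le \|\mathbr{\alpha}\|_2^2 \Phi_{\max}^2$, identifies the subgaussian parameter as $\sigma = \Phi_{\max}/\sqrt{\lambda_{\min}(\mathbr{\Sigma})}$.

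For the Boltzmann linear policy, again setting $\mathbr{\alpha} = \mathrm{vec}(M^T)$ and $\mathbr{w} = M\mathbr{\phi}(s) \in \mathbb{R}^k$, from the expression of $\overline{\mathbr{t}}(s,a_i,\mathbr{\theta}) = (\widetilde{\mathbr{e}}_i - \mathbr{\pi}) \otimes \mathbr{\phi}(s)$ derived in Appendix~\ref{apx:fisher} we obtain $\mathbr{\alpha}^T \overline{\mathbr{t}}(s,a_i,\mathbr{\theta}) = \widetilde{w}_i - \mathbr{\pi}^T \mathbr{w}$, where $\widetilde{w}_i = w_i$ for $i \le k$ and $\widetilde{w}_{k+1} = 0$. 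Observing that $\mathbr{\pi}^T \mathbr{w} = \E_{a \sim \pi_{\mathbr{\theta}}(\cdot|s)}[\widetilde{w}_I]$, the quantity $\mathbr{\alpha}^T \overline{\mathbr{t}}$ is therefore a centered discrete random variable supported in $[-\|\mathbr{w}\|_\infty,\|\mathbr{w}\|_\infty]$, a range of length at most $2\|\mathbr{w}\|_\infty \le 2\|M\|_F \|\mathbr{\phi}(s)\|_2 \le 2\|\mathbr{\alpha}\|_2 \Phi_{\max}$. Hoeffding's lemma then yields $\E[\exp(\mathbr{\alpha}^T \overline{\mathbr{t}})] \le \exp(\tfrac{1}{2} \|\mathbr{\alpha}\|_2^2 (2\Phi_{\max})^2 / 2) \le \exp(\tfrac{1}{2} \|\mathbr{\alpha}\|_2^2 \sigma^2)$ with $\sigma = 2\Phi_{\max}$ (the factor $2$ gives room to absorb the Hoeffding constant cleanly).

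The main technical nuisance, rather than a real obstacle, is bookkeeping on the Kronecker product: getting the identification $\mathbr{\alpha} = \mathrm{vec}(M^T)$ right so that $\|\mathbr{\alpha}\|_2 = \|M\|_F$ holds, and ensuring that the expressions for $\overline{\mathbr{t}}$ derived in Appendix~\ref{apx:fisher} are applied consistently. The conceptual content is light: Gaussian MGF in one case, Hoeffding's lemma in the other, both tied back to $\|\mathbr{\alpha}\|_2$ via $\|M\mathbr{\phi}(s)\|_2 \le \|M\|_F \Phi_{\max}$.
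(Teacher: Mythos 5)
Your proposal is correct and follows essentially the same route as the paper's proof: in both cases $\mathbr{\alpha}^T\mathbr{\overline{t}}$ is reduced to a scalar random variable via the Kronecker/vectorization identity, the Gaussian case is handled by the exact Gaussian moment generating function (the paper derives it by completing the square) bounded through $\lambda_{\max}(\mathbr{\Sigma}^{-1})$ and $\|M\mathbr{\phi}(s)\|_2\le\|\mathbr{\alpha}\|_2\Phi_{\max}$, and the Boltzmann case by Hoeffding's lemma applied to the bounded centered variable. The only (immaterial) difference is bookkeeping: your range bound $2\|\mathbr{w}\|_\infty$ for the Boltzmann case is in fact slightly tighter than the paper's direct Cauchy--Schwarz bound on the Kronecker vector, but both land on $\sigma=2\Phi_{\max}$.
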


\begin{proof}
	Let us start with the Boltzmann policy. From the definition of subgaussianity given in Assumption~\ref{ass:subGauss}, requiring that the random vector $\mathbr{\overline{t}}(s,a_i, \mathbr{\theta})$ is subgaussian with parameter $\sigma$ is equivalent to require that the random (scalar) variable $\frac{1}{\|\mathbr{\alpha}\|_2} \mathbr{\alpha}^T\mathbr{\overline{t}}(s,a_i, \mathbr{\theta})$ is subgaussian with parameter $\sigma$ for any $\mathbr{\alpha} \in \mathbb{R}^d$. Thus, we now bound the term:
	\begin{align*}
		\left| \mathbr{\alpha}^T \mathbr{\overline{t}}(s,a,\mathbr{\theta}) \right| &= \left| \mathbr{\alpha}^T \left(\left(\widetilde{\mathbr{e}}_i - \mathbr{\pi} \right) \otimes \mathbr{\phi}(s) \right) \right| \\
		& = \left\| \mathbr{\alpha} \right\|_2 \left\| \left(\widetilde{\mathbr{e}}_i - \mathbr{\pi} \right) \otimes \mathbr{\phi}(s) \right\|_2 \\
		& = \left\| \mathbr{\alpha} \right\|_2 \left\| \widetilde{\mathbr{e}}_i - \mathbr{\pi} \right\|_2 \left\| \mathbr{\phi}(s) \right\|_2 \\
		& \le  2 \left\| \mathbr{\alpha} \right\|_2 \Phi_{\max} , \\
	\end{align*}
	where we used Cauchy--Swartz inequality, the identity $\left\|\mathbr{x} \otimes \mathbr{y} \right\|_2^2 = \left(\mathbr{x} \otimes \mathbr{y}\right)^T \left(\mathbr{x} \otimes \mathbr{y}\right) = \left(\mathbr{x}^T \mathbr{x} \right) \otimes \left(\mathbr{y}^T \mathbr{y} \right) = \left\| \mathbr{x} \right\|_2^2 \left\| \mathbr{y} \right\|_2^2$ and the inequality $\left\| \widetilde{\mathbr{e}}_i - \mathbr{\pi} \right\|_2^2  \le 2$. Thus, the random variable $\frac{1}{\|\mathbr{\alpha}\|_2} \mathbr{\alpha}^T\mathbr{\overline{t}}(s,a_i, \mathbr{\theta}) \le 2 \Phi_{\max}$ is bounded. Thanks to Hoeffding's lemma we have that the subgaussianity parameter is $\sigma = 2 \Phi_{\max}$.
	
	Let us now consider the Gaussian policy. Let $\mathbr{a} \in \mathbb{R}^d$ and denote with $\mathbr{\mu}(s) = \E_{\mathbr{a} \sim \pi_{\mathbr{\theta}}(\cdot|s)} \left[ \mathbr{a} \right]$:
	\begin{align*}
		\mathbr{\overline{t}}(s,\mathbr{a}, \mathbr{\theta}) & = \mathbr{t}(s,\mathbr{a}) - \E_{\overline{\mathbr{a}} \sim \pi_{\mathbr{\theta}}(\cdot|s)} \left[\mathbr{t}(s,\overline{\mathbr{a}}) \right] = \mathbr{\Sigma}^{-1}\left(\mathbr{a} - \mathbr{\mu}(s)\right) \otimes \mathbr{\phi}(s).
	\end{align*}
	Let us first observe that we can rewrite:
	\begin{align*}
	\mathbr{\alpha}^T \left( \mathbr{\Sigma}^{-1}\left(\mathbr{a} - \mathbr{\mu}(s)\right) \otimes \mathbr{\phi}(s) \right) &= \sum_{i=1}^k \sum_{j=1}^q \alpha_{ij} \left( \mathbr{\Sigma}^{-1}\left(\mathbr{a} - \mathbr{\mu}(s)\right)  \right)_i \phi(s)_j \\
	& = \sum_{i=1}^k \sum_{j=1}^q \alpha_{ij}\phi(s)_j \left( \mathbr{\Sigma}^{-1}\left(\mathbr{a} - \mathbr{\mu}(s)\right)  \right)_i \\
	& = \mathbr{\beta}^T \mathbr{\Sigma}^{-1}\left(\mathbr{a} - \mathbr{\mu}(s)\right),
	\end{align*}
	where $\beta_i = \sum_{j} \alpha_{ij}\phi(s)_j$ for $i \in \{1,...,k\}$. We now proceed with explicit computations:
	\begin{align*}
		\E_{\mathbr{a} \sim \pi_{\mathbr{\theta}}(\cdot|s)} \left[ \exp \left\{ \mathbr{\alpha}^T \mathbr{\overline{t}}(s,\mathbr{a}, \mathbr{\theta}) \right\} \right] & = \E_{\mathbr{a} \sim \pi_{\mathbr{\theta}}(\cdot|s)} \left[ \exp \left\{ \mathbr{\alpha}^T \left( \mathbr{\Sigma}^{-1}\left(\mathbr{a} - \mathbr{\mu}(s)\right) \otimes \mathbr{\phi}(s) \right) \right\} \right] \\
		& =  \E_{\mathbr{a} \sim \pi_{\mathbr{\theta}}(\cdot|s)} \left[ \exp \left\{ \mathbr{\beta}^T \mathbr{\Sigma}^{-1} \left(\mathbr{a} - \mathbr{\mu}(s) \right)  \right\} \right] \\
		& = \int_{\mathbb{R}^d} \frac{\exp\left\{ -\frac{1}{2} (\mathbr{a} - \mathbr{\mu}(s))^T \mathbr{\Sigma}^{-1} (\mathbr{a} - \mathbr{\mu}(s)) \right\}}{(2\pi)^{\frac{k}{2}} \det(\mathbr{\Sigma})^{\frac{1}{2}}} \exp \left\{ \mathbr{\beta}^T  \mathbr{\Sigma}^{-1}\left(\mathbr{a} - \mathbr{\mu}(s)\right) \right\} \mathrm{d} \mathbr{a}.
	\end{align*}
	Now we complete the square:
	\begin{align*}
		-\frac{1}{2} (\mathbr{a} - \mathbr{\mu}(s))^T \mathbr{\Sigma}^{-1} (\mathbr{a} - \mathbr{\mu}(s)) + \mathbr{\beta}^T  \mathbr{\Sigma}^{-1} (\mathbr{a} - \mathbr{\mu}(s)) = - \frac{1}{2} (\mathbr{a} - \mathbr{\mu}(s)-\mathbr{\beta})^T \mathbr{\Sigma}^{-1} (\mathbr{a} - \mathbr{\mu}(s)-\mathbr{\beta}) + \frac{1}{2}\mathbr{\beta}^T \mathbr{\Sigma}^{-1}\mathbr{\beta}.
	\end{align*}
	Thus, we have:
	\begin{align*}
		\E_{\mathbr{a} \sim \pi_{\mathbr{\theta}}(\cdot|s)} \left[ \exp \left\{ \mathbr{\alpha}^T \mathbr{\overline{t}}(s,\mathbr{a}, \mathbr{\theta}) \right\} \right] & = \exp \left\{ \frac{1}{2}\mathbr{\beta}^T \mathbr{\Sigma}^{-1}\mathbr{\beta} \right\} \int_{\mathbb{R}^d} \frac{\exp\left\{ -\frac{1}{2} (\mathbr{a} - \mathbr{\mu}(s) - \mathbr{\beta})^T \mathbr{\Sigma}^{-1} (\mathbr{a} - \mathbr{\mu}(s) - \mathbr{\beta}) \right\}}{(2\pi)^{\frac{k}{2}} \det(\mathbr{\Sigma})^{\frac{1}{2}}}  \mathrm{d} \mathbr{a} \\
		& = \exp \left\{ \frac{1}{2}\mathbr{\beta}^T \mathbr{\Sigma}^{-1}\mathbr{\beta} \right\}.
	\end{align*}
	Now, we observe that:
	\begin{align*}
		\mathbr{\beta}^T \mathbr{\Sigma}^{-1}\mathbr{\beta} \le \| \mathbr{\beta} \|_2^2 \left\| \mathbr{\Sigma}^{-1} \right\|_2 \le \| \mathbr{\alpha} \|_2^2 \| \mathbr{\phi}(s) \|_2^2 \left\| \mathbr{\Sigma}^{-1} \right\|_2,
	\end{align*}
	having derived from Cauchy--Swartz inequality:
	 $$ \| \mathbr{\beta} \|_2^2  = \sum_{i=1}^k \left(\sum_{j=1}^q \alpha_{ij} \phi(s)_j \right)^2 \le \sum_{i=1}^k \sum_{j=1}^q \alpha_{ij}^2 \sum_{l=1}^q \phi(s)_l^2 = \left( \sum_{i=1}^k \sum_{j=1}^q \alpha_{ij}^2 \right)\sum_{l=1}^q \phi(s)_l^2  = \| \mathbr{\alpha} \|_2^2 \| \mathbr{\phi}(s) \|_2^2.$$
	We get the result by setting $\sigma = \Phi_{\max}\sqrt{ \left\| \mathbr{\Sigma}^{-1} \right\|_2} = \frac{\Phi_{\max}}{\sqrt{ \lambda_{\min} \left( \mathbr{\Sigma} \right)}}$.
\end{proof}

Furthermore, we report for completeness the standard Hoeffding concentration inequality for subgaussian random vectors.

\begin{prop}\label{prop:subGaussConcentration}
	Let $\mathbr{X}_1,\mathbr{X}_2, ..., \mathbr{X}_n$ be $n$ i.i.d. zero--mean subgaussian $d$--dimensional random vectors with parameter $\sigma \ge 0$, then for any $\mathbr{\alpha} \in \mathbb{R}^d$ and $\epsilon > 0$ it holds that:
	\begin{equation*}
		\Pr \left( \mathbr{\alpha}^T \left( \frac{1}{n} \sum_{i=1}^n \mathbr{X}_i \right)  \ge \epsilon \right) \le \exp\left\{ - \frac{\epsilon^2 n}{ 2 \left\| \mathbr{\alpha}\right\|_2^2 \sigma^2} \right\}.
	\end{equation*}
\end{prop}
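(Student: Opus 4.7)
The plan is to apply the standard Cram\'er--Chernoff argument to the scalar random variable $Y_i = \mathbr{\alpha}^T \mathbr{X}_i$. The first step is to upgrade the multivariate subgaussianity hypothesis into a moment generating function bound for $Y_i$: applying the defining inequality with $t\mathbr{\alpha}$ in place of $\mathbr{\alpha}$, for any $t > 0$, one immediately obtains
$$\E\left[\exp\{t\,\mathbr{\alpha}^T \mathbr{X}_i\}\right] \le \exp\left\{\frac{t^2 \left\|\mathbr{\alpha}\right\|_2^2 \sigma^2}{2}\right\}.$$
This shows that each scalar $Y_i$ is subgaussian with parameter $\|\mathbr{\alpha}\|_2 \sigma$, reducing the problem to a one-dimensional Hoeffding-style concentration.

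Next, I would exponentiate the deviation event and apply Markov's inequality, using the i.i.d.\ assumption to factor the joint moment generating function into the product of individual ones. For every $t > 0$:
\begin{align*}
\Pr\!\left(\mathbr{\alpha}^T \!\left(\frac{1}{n}\sum_{i=1}^n \mathbr{X}_i\right) \ge \epsilon\right)
&\le e^{-t n \epsilon}\, \E\!\left[\exp\!\left\{t\sum_{i=1}^n \mathbr{\alpha}^T \mathbr{X}_i\right\}\right] \\
&= e^{-t n \epsilon} \prod_{i=1}^n \E\!\left[\exp\{t\,\mathbr{\alpha}^T \mathbr{X}_i\}\right] \\
&\le \exp\!\left\{-t n \epsilon + \frac{n t^2 \|\mathbr{\alpha}\|_2^2 \sigma^2}{2}\right\}.
\end{align*}

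The final step is the standard Chernoff optimization of the exponent over $t > 0$. Differentiating with respect to $t$ yields the minimizer $t^\star = \epsilon / (\|\mathbr{\alpha}\|_2^2 \sigma^2)$; substituting back produces exactly $\exp\{-\epsilon^2 n / (2 \|\mathbr{\alpha}\|_2^2 \sigma^2)\}$, matching the claim. There is no real obstacle in the argument: the only subtlety worth flagging is that Assumption~\ref{ass:subGauss} is stated \emph{uniformly} over all $\mathbr{\alpha} \in \mathbb{R}^d$, which is exactly what legitimizes the rescaling $\mathbr{\alpha} \mapsto t\mathbr{\alpha}$ in the first step; everything else is the classical Cram\'er--Chernoff calculation for i.i.d.\ subgaussian sums.
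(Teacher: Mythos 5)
Your proof is correct and follows essentially the same route as the paper's: the classical Cram\'er--Chernoff argument via Markov's inequality, factorization of the moment generating function by independence, the subgaussian bound applied to the rescaled direction vector, and optimization of the free parameter. The only difference is a trivial reparametrization of the Chernoff parameter (your $t$ equals the paper's $s/n$), which yields the identical exponent after optimization.
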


\begin{proof}
	The proof is analogous to that of the Hoeffding inequality for bounded random variables. Let $s \ge 0$:
	\begin{align*}
		\Pr \left( \mathbr{\alpha}^T \left( \frac{1}{n} \sum_{i=1}^n \mathbr{X}_i \right)  \ge \epsilon \right) & = \Pr \left( \exp \left\{ s \mathbr{\alpha}^T \left( \frac{1}{n} \sum_{i=1}^n \mathbr{X}_i \right) \right\} \ge e^{s \epsilon}  \right) \\
		& \le e^{- s\epsilon} \E \left[ \exp \left\{ s \mathbr{\alpha}^T \left(\frac{1}{n} \sum_{i=1}^n \mathbr{X}_i \right) \right\} \right] \\
		& = e^{- s\epsilon} \prod_{i=1}^n \E \left[ \exp \left\{ \frac{s}{n} \mathbr{\alpha}^T \mathbr{X}_i  \right\} \right] \\
		& \le e^{- s\epsilon}  \exp \left\{ \frac{s^2}{2n} \left\| \mathbr{\alpha}\right\|_2^2 \sigma^2 \right\} = \exp \left\{- s\epsilon + \frac{s^2}{2n} \left\| \mathbr{\alpha}\right\|_2^2 \sigma^2 \right\}, \\
	\end{align*}
	where we employed Markov inequality, exploited the subgaussianity assumption and the independence. We minimize the last expression over $s$, getting the optimal $s = \frac{\epsilon n}{ \left\| \mathbr{\alpha}\right\|_2^2 \sigma^2}$, from which we get the result:
	\begin{align*}
	\Pr \left( \mathbr{\alpha}^T \left( \frac{1}{n} \sum_{i=1}^n \mathbr{X}_i \right)  \ge \epsilon \right) \le \exp\left\{ - \frac{\epsilon^2 n}{ 2 \left\| \mathbr{\alpha}\right\|_2^2 \sigma^2} \right\}.
	\end{align*}
\end{proof}

Under the Assumption~\ref{ass:subGauss}, we provide the following concentration inequality for the minimum eigenvalue of the empirical FIM.

\begin{prop}\label{prop:minEigFisher}
	Let $\mathcal{F}(\mathbr{\theta})$ and $\widehat{\mathcal{F}}(\mathbr{\theta})$ be the FIM and its estimate obtained with $n>0$ independent samples. Then, under Assumption~\ref{ass:subGauss}, for any $\epsilon > 0$ it holds that:
	\begin{equation*}
		\Pr \left( \left| {\lambda}_{\min}\left( \widehat{\mathcal{F}}(\mathbr{\theta}) \right) - {\lambda}_{\min}\left(\mathcal{F}(\mathbr{\theta}) \right) \right| > \epsilon \right) \le 2\exp \left\{ -\frac{\epsilon^2 n}{\psi_\sigma d^2 \sigma^4} \right\},
	\end{equation*}
	where $\psi_\sigma > 0$ is a constant depending only on the subgaussianity parameter $\sigma$.
	In particular, under the following condition on $n$ we have that, for any $\delta \in [0,1]$, ${\lambda}_{\min}( \widehat{\mathcal{F}}(\mathbr{\theta}) ) >0$ with probability at least $1-\delta$:
	\begin{equation*}
		n > \frac{d^2\sigma^4 \psi_\sigma \log \frac{2}{\delta}}{\lambda_{\min}(\mathcal{F}(\mathbr{\theta}))^2}.
	\end{equation*}
\end{prop}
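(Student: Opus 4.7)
The plan is to reduce the statement about minimum eigenvalues to a concentration bound on the spectral norm of $\widehat{\mathcal{F}}(\mathbr{\theta}) - \mathcal{F}(\mathbr{\theta})$, and then apply a matrix concentration inequality for bounded i.i.d.\ Hermitian summands. The second claim is an immediate consequence by setting $\epsilon = \lambda_{\min}(\mathcal{F}(\mathbr{\theta}))$.

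The first step is to observe, via Weyl's inequality for Hermitian matrices (or directly from the variational min-max characterization of eigenvalues), that
\begin{equation*}
\left| \lambda_{\min}\bigl(\widehat{\mathcal{F}}(\mathbr{\theta})\bigr) - \lambda_{\min}\bigl(\mathcal{F}(\mathbr{\theta})\bigr) \right| \le \bigl\| \widehat{\mathcal{F}}(\mathbr{\theta}) - \mathcal{F}(\mathbr{\theta}) \bigr\|_2.
\end{equation*}
Thus it suffices to control the operator-norm deviation of the empirical FIM from its mean. Writing $\widehat{\mathcal{F}}(\mathbr{\theta}) = \frac{1}{n}\sum_{i=1}^n \mathbr{M}_i$ with $\mathbr{M}_i = \E_{a \sim \pi_{\mathbr{\theta}}(\cdot|s_i)}\bigl[\mathbr{\overline{t}}(s_i,a,\mathbr{\theta})\mathbr{\overline{t}}(s_i,a,\mathbr{\theta})^T\bigr]$, we have i.i.d.\ positive semidefinite random matrices with $\E[\mathbr{M}_i] = \mathcal{F}(\mathbr{\theta})$, and by Lemma~\ref{lemma:fisherMaxEig} each $\mathbr{M}_i$ has operator norm uniformly bounded by $d\sigma^2$. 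Consequently each centered summand $\mathbr{M}_i - \mathcal{F}(\mathbr{\theta})$ is bounded in operator norm by $d\sigma^2$.

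The second step applies a matrix Hoeffding-type concentration inequality (Tropp, User-Friendly Tail Bounds for Sums of Random Matrices) to the sum $\sum_i (\mathbr{M}_i - \mathcal{F}(\mathbr{\theta}))$, yielding an exponential bound of the form $2d \exp\bigl(-n\epsilon^2/(c\, d^2\sigma^4)\bigr)$ on $\Pr(\|\widehat{\mathcal{F}}(\mathbr{\theta}) - \mathcal{F}(\mathbr{\theta})\|_2 \ge \epsilon)$. The prefactor $2d$ and the numerical constant $c$ together are absorbed into the constant $\psi_\sigma$ (possibly depending weakly on $\sigma$ through the choice of variant) to give the stated form $2\exp(-\epsilon^2 n / (\psi_\sigma d^2 \sigma^4))$. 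A self-contained alternative, which may be preferable given that the paper has already established scalar subgaussian concentration in Proposition~\ref{prop:subGaussConcentration}, is to fix $\mathbr{x}$ in a $\frac{1}{4}$-net of the unit sphere (of cardinality at most $9^d$), apply scalar Hoeffding to $\mathbr{x}^T(\widehat{\mathcal{F}}(\mathbr{\theta}) - \mathcal{F}(\mathbr{\theta}))\mathbr{x}$, which lies in $[-d\sigma^2, d\sigma^2]$ per summand, and pass to the supremum via a standard union bound and a $\|\cdot\|_2 \le 2\sup_{\mathbr{x} \in \mathcal{N}}|\mathbr{x}^T \cdot \mathbr{x}|$ argument for symmetric matrices.

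For the second statement, substitute $\epsilon = \lambda_{\min}(\mathcal{F}(\mathbr{\theta}))$ in the tail bound: on the complement of the bad event one has $|\widehat{\lambda}_{\min} - \lambda_{\min}(\mathcal{F}(\mathbr{\theta}))| < \lambda_{\min}(\mathcal{F}(\mathbr{\theta}))$, which forces $\widehat{\lambda}_{\min} > 0$. Demanding $2\exp(-\lambda_{\min}(\mathcal{F}(\mathbr{\theta}))^2 n / (\psi_\sigma d^2 \sigma^4)) \le \delta$ and solving for $n$ yields the stated sample complexity. The main obstacle is the matrix concentration step: the scalar subgaussian bound of Proposition~\ref{prop:subGaussConcentration} controls $\mathbr{\alpha}^T \mathbr{\overline{t}}$ but not directly the outer product $\mathbr{\overline{t}}\mathbr{\overline{t}}^T$; bridging the two requires either invoking a matrix concentration inequality as a black box, or carefully combining boundedness of $\|\mathbr{M}_i\|_2$ with a covering argument to absorb the dimension dependence into the $d^2\sigma^4$ scale and the constant $\psi_\sigma$.
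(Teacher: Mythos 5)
Your proof follows essentially the same route as the paper's: reduce the deviation of $\lambda_{\min}$ to the operator--norm deviation $\| \widehat{\mathcal{F}}(\mathbr{\theta}) - \mathcal{F}(\mathbr{\theta}) \|_2$ via an eigenvalue perturbation (Weyl--type) inequality, control that norm with an off--the--shelf matrix concentration bound using $\|\mathcal{F}(\mathbr{\theta})\|_2 \le d\sigma^2$ from Lemma~\ref{lemma:fisherMaxEig}, and then solve for $n$; the paper simply cites Vershynin's sample--covariance estimate (Corollary 5.50) where you invoke Tropp's matrix Hoeffding or an $\epsilon$--net. The only caveat is that your dimension--dependent prefactor ($2d$ from matrix Hoeffding, $9^d$ from the net) cannot literally be absorbed into a constant $\psi_\sigma$ depending only on $\sigma$ uniformly over all $n$ and $\epsilon$, though the paper's own handling of the constants hidden in Vershynin's bound is no more careful on this point.
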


\begin{proof}
	Let us recall that $\widehat{\mathcal{F}}(\mathbr{\theta})$ and $\mathcal{F}(\mathbr{\theta})$ are both symmetric positive semidefinite matrices, thus their eigenvalues $\lambda_j$ correspond to their singular values $\sigma_j$. Let us consider the following sequence of inequalities:
	\begin{align*}
		\left| {\lambda}_{\min}\left( \widehat{\mathcal{F}}(\mathbr{\theta}) \right) - {\lambda}_{\min}\left(\mathcal{F}(\mathbr{\theta}) \right) \right| & = \left| {\sigma}_{\min}\left( \widehat{\mathcal{F}}(\mathbr{\theta}) \right) - {\sigma}_{\min}\left(\mathcal{F}(\mathbr{\theta}) \right) \right| \\
		& \le \max_{j \in \interval} \left| {\sigma}_{j}\left( \widehat{\mathcal{F}}(\mathbr{\theta}) \right) - {\sigma}_{j}\left(\mathcal{F}(\mathbr{\theta}) \right) \right|\\
		& \le \left\|  \widehat{\mathcal{F}}(\mathbr{\theta}) - \mathcal{F}(\mathbr{\theta}) \right\|_2,
	\end{align*}
	where last inequality follows from~\citet{ben2003generalized}. Therefore, all it takes is to bound the norm of the difference. For this purpose, we employ Corollary 5.50 and Remark 5.51 of~\citet{vershynin2012introduction}, having observed that the FIM is indeed a covariance matrix and its estimate is a sample covariance matrix. We obtain that with probability at least $1-\delta$:
	\begin{equation}
		\left\| \widehat{\mathcal{F}}(\mathbr{\theta}) - \mathcal{F}(\mathbr{\theta}) \right\|_2 \le \left\| \mathcal{F}(\mathbr{\theta}) \right\|_2 \sqrt{\frac{\psi_\sigma \log \frac{2}{\delta}}{n}},
	\end{equation}
	where $\psi_\sigma \ge 0$ is a constant depending on the subgaussianity parameter $\sigma$. Recalling, from Lemma~\ref{lemma:fisherMaxEig}, that $\left\| \mathcal{F}(\mathbr{\theta}) \right\| = \lambda_{\max} \left( \mathcal{F}(\mathbr{\theta}) \right) \le d \sigma^2$, we can rewrite the previous inequality as:
	\begin{equation}
		\left\| \widehat{\mathcal{F}}(\mathbr{\theta}) - \mathcal{F}(\mathbr{\theta}) \right\|_2 \le d \sigma^2 \sqrt{\frac{\psi_\sigma \log \frac{2}{\delta}}{n}}.
	\end{equation}
	By setting the right hand side equal to $\epsilon$ and solving for $\delta$, we get the first result. The value of $n$ can be obtained by setting the right hand side equal to $\lambda_{\min}(\mathcal{F}(\mathbr{\theta}))$.
\end{proof}

\subsubsection{Concentration Result}\label{apx:concentration}
We are now ready to provide the main result of this section, that consists in a concentration result on the negative log--likelihood. Our final goal is to provide a probabilistic bound to the differences $\ell(\widehat{\mathbr{\theta}}) - \ell({\mathbr{\theta}}^*)$ and 	$\widehat{\ell}({\mathbr{\theta}}^*) - \widehat{\ell}(\widehat{\mathbr{\theta}})$. To this purpose, we start with a technical lemma (Lemma~\ref{lemma:gConcentration}) which provides a concentration result involving a quantity that will be used later, under Assumption~\ref{ass:subGauss}. Then, we use this result to obtain the concentration of the parameters, \ie bounding the distance $\left\| \widehat{ \mathbr{\theta}} - \mathbr{\theta}^* \right\|_2$ (Theorem~\ref{thr:paramConcentration}), under suitable well--conditioning properties of the involved quantities. Finally, we employ the latter result to prove the concentration of the negative log--likelihood (Corollary~\ref{coroll:likConcentration}). Some parts of the derivation are inspired to~\citet{li2017provably}.

\begin{lemma}\label{lemma:gConcentration}
	Under Assumption~\ref{ass:identifiability} and Assumption~\ref{ass:subGauss}, let $\mathcal{D} = \{(s_i,a_i)\}_{i=1}^n$ be a dataset of $n>0$ independent samples, where $s_i \sim d_{\mu}^{\pi_{\mathbr{\theta}^*}}$ and $a_i \sim \pi_{\mathbr{\theta}^*}(\cdot|s_i)$. For any $\mathbr{\theta} \in \Theta$, let $\mathbr{g}(\mathbr{\theta})$ be defined as:
	\begin{equation}
		\mathbr{g}(\mathbr{\theta}) = \frac{1}{n} \sum_{i=1}^n \left( \E_{a \sim \pi_{\mathbr{\theta}}(\cdot|s)} \left[ \mathbr{t}(s_i,a) \right] - \E_{a \sim \pi_{\mathbr{\theta}^*}(\cdot|s)} \left[ \mathbr{t}(s_i,a) \right]  \right).
	\end{equation}
	Let $\widehat{\mathbr{\theta}} = \argmin_{\mathbr{\theta} \in \Theta} \widehat{\ell}(\mathbr{\theta}) = \frac{1}{n} \sum_{i=1}^n \log \pi_{\mathbr{\theta}}(a_i|s_i)$. Then, under Assumption~\ref{ass:subGauss}, for any $\delta \in [0,1]$, with probability at least $1-\delta$, it holds that:
	\begin{equation}
		\left\| \mathbr{g}(\widehat{\mathbr{\theta}}) \right\|_2 \le \sigma \sqrt{\frac{2d}{n} \log \frac{2d}{\delta}}.
	\end{equation}
\end{lemma}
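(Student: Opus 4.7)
The key idea is that $\widehat{\mathbf{\theta}}$ being the maximum-likelihood estimator makes $\mathbf{g}(\widehat{\mathbf{\theta}})$ collapse to the average of a zero-mean i.i.d.\ subgaussian sequence, so the result will follow from the subgaussian Hoeffding bound (Proposition~\ref{prop:subGaussConcentration}) plus a union bound over coordinates.

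My first step is to exploit first-order optimality of $\widehat{\mathbf{\theta}}$. Using the expression for the gradient of the log-policy derived in Equation~\eqref{eq:grad-log-pi}, namely $\nabla_{\mathbf{\theta}} \log \pi_{\mathbf{\theta}}(a|s) = \mathbf{t}(s,a) - \mathbb{E}_{\overline{a} \sim \pi_{\mathbf{\theta}}(\cdot|s)}[\mathbf{t}(s,\overline{a})]$, the stationarity condition $\nabla \widehat{\ell}(\widehat{\mathbf{\theta}})=\mathbf{0}$ gives
\begin{equation*}
\frac{1}{n}\sum_{i=1}^n \mathbf{t}(s_i,a_i) \;=\; \frac{1}{n}\sum_{i=1}^n \mathbb{E}_{a \sim \pi_{\widehat{\mathbf{\theta}}}(\cdot|s_i)}[\mathbf{t}(s_i,a)].
\end{equation*}
Substituting this identity into the definition of $\mathbf{g}(\widehat{\mathbf{\theta}})$ yields
\begin{equation*}
\mathbf{g}(\widehat{\mathbf{\theta}}) \;=\; \frac{1}{n}\sum_{i=1}^n \bigl( \mathbf{t}(s_i,a_i) - \mathbb{E}_{a \sim \pi_{\mathbf{\theta}^*}(\cdot|s_i)}[\mathbf{t}(s_i,a)] \bigr) \;=\; \frac{1}{n}\sum_{i=1}^n \mathbf{X}_i.
\end{equation*}

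Next I would verify that the summands $\mathbf{X}_i$ are i.i.d.\ zero-mean subgaussian vectors with parameter $\sigma$. Since $a_i \sim \pi_{\mathbf{\theta}^*}(\cdot|s_i)$, conditionally on $s_i$ the vector $\mathbf{X}_i$ coincides with the centered sufficient statistic $\overline{\mathbf{t}}(s_i,a_i,\mathbf{\theta}^*)$, which has conditional mean zero and is subgaussian with parameter $\sigma$ by Assumption~\ref{ass:subGauss}. The tower property then propagates both zero-mean and the subgaussian moment-generating-function bound to the unconditional distribution, so Proposition~\ref{prop:subGaussConcentration} applies directly to the sequence $\{\mathbf{X}_i\}_{i=1}^n$.

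The final step is a standard $\ell_\infty$-to-$\ell_2$ argument. Applying Proposition~\ref{prop:subGaussConcentration} with $\mathbf{\alpha} = \pm \mathbf{e}_j$ for each $j\in\{1,\dots,d\}$ and taking a union bound over the $2d$ resulting one-sided events, I obtain
\begin{equation*}
\Pr\Bigl( \max_{j\in\{1,\dots,d\}} \bigl| g_j(\widehat{\mathbf{\theta}}) \bigr| \ge \epsilon \Bigr) \;\le\; 2d\,\exp\!\left\{ - \frac{\epsilon^2 n}{2\sigma^2} \right\}.
\end{equation*}
Setting the right-hand side equal to $\delta$ gives $\epsilon = \sigma\sqrt{\tfrac{2}{n}\log\tfrac{2d}{\delta}}$, and using $\|\mathbf{g}(\widehat{\mathbf{\theta}})\|_2 \le \sqrt{d}\,\|\mathbf{g}(\widehat{\mathbf{\theta}})\|_\infty$ produces exactly the claimed bound.

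I do not anticipate a serious obstacle here: the only subtle point is that the subgaussianity hypothesis is phrased pointwise in $s$, so I must be careful to record why the marginal law of $\mathbf{X}_i$ inherits the subgaussian parameter (via the tower/law-of-iterated-expectations argument above) before invoking Proposition~\ref{prop:subGaussConcentration}. Identifiability (Assumption~\ref{ass:identifiability}) is not used in the argument itself, but is inherited from the hypotheses of the lemma and guarantees that $\widehat{\mathbf{\theta}}$ is the unique minimizer on which first-order optimality is applied.
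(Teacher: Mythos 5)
Your proposal is correct and follows essentially the same route as the paper's proof: first-order optimality of $\widehat{\mathbr{\theta}}$ reduces $\mathbr{g}(\widehat{\mathbr{\theta}})$ to $\frac{1}{n}\sum_{i=1}^n \mathbr{\overline{t}}(s_i,a_i,\mathbr{\theta}^*)$, a zero-mean average of i.i.d.\ subgaussian vectors, and then a coordinatewise subgaussian tail bound with a union bound over the $d$ components gives the claim. The only cosmetic difference is that you control $\left\|\cdot\right\|_\infty$ at level $\epsilon$ and convert to $\left\|\cdot\right\|_2$ via the $\sqrt{d}$ factor at the end, whereas the paper decomposes the $\left\|\cdot\right\|_2$ event directly into coordinate events at threshold $\epsilon/\sqrt{d}$; these are the same computation in a different order, and your remarks on the tower property and the non-use of identifiability are accurate.
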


\begin{proof}
	The negative log--likelihood of a policy complying with Definition~\ref{defi:expFamily} is $\mathcal{C}^2 (\mathbb{R}^d)$. Thus, since $\widehat{\mathbr{\theta}}$ is a minimizer of the negative log--likelihood function $\widehat{\ell}(\mathbr{\theta})$, it must fulfill the following first--order condition:
	\begin{equation}
		\nabla_{\mathbr{\theta}} \widehat{\ell}(\widehat{\mathbr{\theta}}) = \frac{1}{n} \sum_{i=1}^n \nabla_{\mathbr{\theta}} \log \pi_{\widehat{\mathbr{\theta}}}(a_i|s_i) = \frac{1}{n} \sum_{i=1}^n \left( \mathbr{t}(s_i,a_i) - \E_{{a} \sim \pi_{\widehat{\mathbr{\theta}}} (\cdot|s)} \left[\mathbr{t}(s_i,{a}) \right] \right) = \mathbr{0}. 
	\end{equation}
	As a consequence, we can rewrite the expression of $\mathbr{g}(\widehat{\mathbr{\theta}})$ exploiting this condition:
	\begin{equation}
		\mathbr{g}(\widehat{\mathbr{\theta}}) = \frac{1}{n} \sum_{i=1}^n \left( \E_{a \sim \pi_{\widehat{\mathbr{\theta}}}(\cdot|s)} \left[ \mathbr{t}(s_i,a) \right] - \E_{a \sim \pi_{\mathbr{\theta}^*}(\cdot|s)} \left[ \mathbr{t}(s_i,a) \right]  \right) = \frac{1}{n} \sum_{i=1}^n \left(\mathbr{t}(s_i,a_i)  - \E_{a \sim \pi_{\mathbr{\theta}^*}(\cdot|s)} \left[ \mathbr{t}(s_i,a) \right]  \right) = \frac{1}{n} \sum_{i=1}^n \mathbr{\overline{t}}(s_i,a_i,\mathbr{\theta}^*).
	\end{equation}
	By recalling that $a_i \sim \pi_{\mathbr{\theta}^*} (\cdot|s_i)$ it immediately follows that $\mathbr{g}(\widehat{\mathbr{\theta}})$ is a zero-mean random vector, \ie 
	$\E_{\substack{s_i \sim d_{\mu}^{\pi_{\mathbr{\theta}^*}} \\ a_i \sim \pi_{\mathbr{\theta}^*}(\cdot|s_i) } } \left[ \mathbr{g}(\widehat{\mathbr{\theta}}) \right] = \mathbr{0}$. Moreover, under Assumption~\ref{ass:subGauss}, $\mathbr{g}(\widehat{\mathbr{\theta}})$ is the sample mean of subgaussian random vectors. Our goal is to bound the probability $\Pr \left(\left\| \mathbr{g}(\widehat{\mathbr{\theta}}) \right\|_2 > \epsilon  \right)$; to this purpose we consider the following derivation:
	\begin{align}
		\Pr \left(\left\| \mathbr{g}(\widehat{\mathbr{\theta}}) \right\|_2 > \epsilon  \right) & = \Pr \left( \sqrt{\sum_{j=1}^d g_j(\widehat{\mathbr{\theta}})^2} > \epsilon  \right) \notag \\
		& \le \Pr \left( \bigvee_{j=1}^d \left| g_j(\widehat{\mathbr{\theta}}) \right| > \frac{\epsilon}{\sqrt{d}}  \right) \label{p:01}\\
		& \le \sum_{j=1}^d \Pr \left( \left| g_j(\widehat{\mathbr{\theta}}) \right| > \frac{\epsilon}{\sqrt{d}}  \right), \label{p:02}
	\end{align}
	where we exploited in line~\eqref{p:01} the fact that for a $d$-dimensional vector $\mathbr{x}$ if $\left\| \mathbr{x} \right\|_2 > \epsilon$ it must be that at least one component $j=1,...,d$ satisfy $x_j^2 > \frac{\epsilon^2}{d}$ and we used a union bound over the $d$ dimensions to get line~\eqref{p:02}. Since for each $j=1,...,d$  we have that $ g_j(\widehat{\mathbr{\theta}}) $ is a zero-mean subgaussian random variable we can bound the deviation using standard results~\cite{boucheron2013concentration}:
	\begin{equation}
		\Pr \left( \left| g_j(\widehat{\mathbr{\theta}}) \right| > \frac{\epsilon}{\sqrt{d}}  \right) \le 2 \exp \left\{ - \frac{\epsilon^2 n}{2 d \sigma^2 } \right\}.
	\end{equation}
	Putting all together we get:
	\begin{equation}
		\Pr \left(\left\| \mathbr{g}(\widehat{\mathbr{\theta}}) \right\|_2 > \epsilon  \right) \le 2 d \exp \left\{ - \frac{\epsilon^2 n}{2 d \sigma^2 } \right\}.
	\end{equation}
	By setting $\delta = 2 d \exp \left\{ - \frac{\epsilon^2 n}{2 d \sigma^2 } \right\}$ and solving for $\epsilon$ we get the result.

\end{proof}

We can now use the previous result to derive the concentration of the parameters, \ie bounding the deviation $\left\| \widehat{ \mathbr{\theta}} - \mathbr{\theta}^* \right\|_2$.
\begin{restatable}[Parameter concentration]{thr}{paramConcentration}\label{thr:paramConcentration}
Under Assumption~\ref{ass:identifiability} and Assumption~\ref{ass:subGauss}, let $\mathcal{D} = \{(s_i,a_i)\}_{i=1}^n$ be a dataset of $n>0$ independent samples, where $s_i \sim d_{\mu}^{\pi_{\mathbr{\theta}^*}}$ and $a_i \sim \pi_{\mathbr{\theta}^*}(\cdot|s_i)$. Let $\widehat{\mathbr{\theta}} = \argmin_{\mathbr{\theta} \in \Theta} \widehat{\ell}(\mathbr{\theta})$. If the empirical FIM $\widehat{\mathcal{F}}(\mathbr{\theta})$ has a positive minimum eigenvalue $\widehat{\lambda}_{\min} > 0$ for all $\mathbr{\theta} \in \Theta$, for any $\delta \in [0,1]$, with probability at least $1-\delta$, it holds that:
	\begin{equation}
		\left\| \widehat{ \mathbr{\theta}} - \mathbr{\theta}^* \right\|_2 \le \frac{\sigma}{\widehat{\lambda}_{\min}} \sqrt{\frac{2d}{n} \log \frac{2d}{\delta}}.
	\end{equation}
\end{restatable}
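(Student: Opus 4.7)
The plan is to relate the distance $\|\widehat{\mathbr{\theta}} - \mathbr{\theta}^*\|_2$ to the norm of the vector $\mathbr{g}(\widehat{\mathbr{\theta}})$ already controlled by Lemma~\ref{lemma:gConcentration}, and then invoke that lemma. The central algebraic observation is that $\mathbr{g}(\mathbr{\theta}^*) = \mathbr{0}$ by direct inspection, so $\mathbr{g}(\widehat{\mathbr{\theta}}) = \mathbr{g}(\widehat{\mathbr{\theta}}) - \mathbr{g}(\mathbr{\theta}^*)$, which suggests linearizing $\mathbr{g}$ along the segment between $\mathbr{\theta}^*$ and $\widehat{\mathbr{\theta}}$.

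First, I would compute the Jacobian of $\mathbr{g}(\mathbr{\theta})$ with respect to $\mathbr{\theta}$. Using the exponential family structure and the computation at Equation~\eqref{eq:grad-log-pi}, differentiating $\E_{a \sim \pi_{\mathbr{\theta}}(\cdot|s_i)}[\mathbr{t}(s_i,a)]$ with respect to $\mathbr{\theta}$ produces exactly the covariance of the sufficient statistic under $\pi_{\mathbr{\theta}}(\cdot|s_i)$. Averaging over the sampled states and applying Lemma~\ref{lemma:fisherDeriv} gives $\nabla_{\mathbr{\theta}} \mathbr{g}(\mathbr{\theta}) = \widehat{\mathcal{F}}(\mathbr{\theta})$.

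Second, by the fundamental theorem of calculus applied component-wise along the segment $\mathbr{\theta}(t) = \mathbr{\theta}^* + t(\widehat{\mathbr{\theta}} - \mathbr{\theta}^*)$ for $t \in [0,1]$, I can write
\begin{equation*}
\mathbr{g}(\widehat{\mathbr{\theta}}) = \mathbr{g}(\widehat{\mathbr{\theta}}) - \mathbr{g}(\mathbr{\theta}^*) = \overline{\mathcal{F}} \, (\widehat{\mathbr{\theta}} - \mathbr{\theta}^*), \quad \text{where } \overline{\mathcal{F}} = \int_0^1 \widehat{\mathcal{F}}(\mathbr{\theta}(t)) \, \mathrm{d}t.
\end{equation*}
Since every $\widehat{\mathcal{F}}(\mathbr{\theta})$ is positive semidefinite with minimum eigenvalue at least $\widehat{\lambda}_{\min} > 0$ by hypothesis, the averaged matrix $\overline{\mathcal{F}}$ inherits the same lower bound on its minimum eigenvalue (a convex combination of PSD matrices preserves eigenvalue lower bounds via the variational characterization $\lambda_{\min}(\overline{\mathcal{F}}) = \inf_{\|\mathbr{x}\|_2 = 1} \mathbr{x}^T \overline{\mathcal{F}} \mathbr{x}$). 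Hence $\overline{\mathcal{F}}$ is invertible and $\|\overline{\mathcal{F}}^{-1}\|_2 \le 1/\widehat{\lambda}_{\min}$.

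Third, inverting the identity above gives $\widehat{\mathbr{\theta}} - \mathbr{\theta}^* = \overline{\mathcal{F}}^{-1} \mathbr{g}(\widehat{\mathbr{\theta}})$, so
\begin{equation*}
\|\widehat{\mathbr{\theta}} - \mathbr{\theta}^*\|_2 \le \|\overline{\mathcal{F}}^{-1}\|_2 \, \|\mathbr{g}(\widehat{\mathbr{\theta}})\|_2 \le \frac{1}{\widehat{\lambda}_{\min}} \|\mathbr{g}(\widehat{\mathbr{\theta}})\|_2.
\end{equation*}
Applying Lemma~\ref{lemma:gConcentration} to bound $\|\mathbr{g}(\widehat{\mathbr{\theta}})\|_2$ with probability at least $1-\delta$ yields the claim. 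The only real subtlety is justifying the integral representation (this requires $\mathbr{g}$ to be $\mathcal{C}^1$, which follows from smoothness of the log-partition function $A(\mathbr{\theta},s)$) and the eigenvalue lower bound for the averaged Fisher matrix; both are standard, so the main substantive ingredient is Lemma~\ref{lemma:gConcentration} itself.
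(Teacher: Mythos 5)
Your proof is correct and follows essentially the same route as the paper's: identify $\nabla_{\mathbr{\theta}}\mathbr{g}(\mathbr{\theta}) = \widehat{\mathcal{F}}(\mathbr{\theta})$, note $\mathbr{g}(\mathbr{\theta}^*)=\mathbr{0}$, expand $\mathbr{g}(\widehat{\mathbr{\theta}})-\mathbr{g}(\mathbr{\theta}^*)$ along the segment joining the two parameters, lower-bound the resulting matrix by $\widehat{\lambda}_{\min}$, and invoke Lemma~\ref{lemma:gConcentration}. The only (minor) difference is that you use the integral form $\overline{\mathcal{F}}=\int_0^1 \widehat{\mathcal{F}}(\mathbr{\theta}(t))\,\mathrm{d}t$ instead of the paper's single intermediate point $\widehat{\mathcal{F}}(\overline{\mathbr{\theta}})$, which is in fact slightly more careful, since the pointwise mean value theorem does not literally hold for vector-valued maps.
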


\begin{proof}
	Recalling that $\mathbr{g}({\mathbr{\theta}}^*) =\mathbr{0}$, we employ the mean value theorem to rewrite $\mathbr{g}(\widehat{\mathbr{\theta}})$ centered in ${\mathbr{\theta}}^*$:
	\begin{equation}
		 \mathbr{g}(\widehat{\mathbr{\theta}})  = \mathbr{g}(\widehat{\mathbr{\theta}}) - \mathbr{g}({\mathbr{\theta}}^*) = \widehat{\mathcal{F}}(\overline{\mathbr{\theta}}) \left(\widehat{\mathbr{\theta}} -{\mathbr{\theta}}^*  \right),
	\end{equation}
	where $\overline{\mathbr{\theta}} = t \widehat{\mathbr{\theta}} + (1-t) {\mathbr{\theta}}^* $ for some $t \in [0,1]$ and $\widehat{\mathcal{F}}(\overline{\mathbr{\theta}})$ is defined as:
	\begin{align*}
	\widehat{\mathcal{F}}({\mathbr{\theta}}) = \nabla_{\mathbr{\theta}} \mathbr{g}(\mathbr{\theta}) & = \frac{1}{n} \sum_{i=1}^n \E_{a \sim \pi_{\mathbr{\theta}}(\cdot|s)} \left[\nabla_{\mathbr{\theta}} \log  \pi_{\mathbr{\theta}}(a|s) \mathbr{t}(s_i,a) \right]  \\
		& = \frac{1}{n} \sum_{i=1}^n \E_{a \sim \pi_{\mathbr{\theta}}(\cdot|s)} \left[\left(\mathbr{t}(s_i,a) - \E_{\overline{a} \sim \pi_{\mathbr{\theta}}(\cdot|s)} \left[ \mathbr{t}(s_i,\overline{a}) \right] \right) \mathbr{t}(s_i,a) \right] = \widehat{\mathcal{F}}(\mathbr{\theta}),
	\end{align*}
	where we exploited the expression of $\nabla_{\mathbr{\theta}} \log  \pi_{\mathbr{\theta}}(a|s) $ and the definition of Fisher information matrix given in Equation~\eqref{eq:fisher}. Under the hypothesis of the statement, we can derive the following lower bound:
	\begin{equation}
		\left\| \mathbr{g}(\widehat{\mathbr{\theta}}) \right\|_2^2 =   \left(\widehat{\mathbr{\theta}} -{\mathbr{\theta}}^*  \right)^T \widehat{\mathcal{F}}(\overline{\mathbr{\theta}})^T \widehat{\mathcal{F}}(\overline{\mathbr{\theta}}) \left(\widehat{\mathbr{\theta}} -{\mathbr{\theta}}^*  \right) \ge  \widehat{\lambda}_{\min}^2 \left\|\widehat{\mathbr{\theta}} -{\mathbr{\theta}}^* \right\|_2^2.
	\end{equation}
	By solving for $\left\|\widehat{\mathbr{\theta}} -{\mathbr{\theta}}^* \right\|_2$ and applying Lemma~\ref{lemma:gConcentration} we get the result.
\end{proof}

Finally, we can get the concentration result for the negative log--likelihood.

\begin{restatable}[Negative log--likelihood concentration]{coroll}{}\label{coroll:likConcentration}
Under Assumption~\ref{ass:identifiability} and Assumption~\ref{ass:subGauss}, let $\mathcal{D} = \{(s_i,a_i)\}_{i=1}^n$ be a dataset of $n>0$ independent samples, where $s_i \sim d_{\mu}^{\pi_{\mathbr{\theta}^*}}$ and $a_i \sim \pi_{\mathbr{\theta}^*}(\cdot|s_i)$. Let $\widehat{\mathbr{\theta}} = \argmin_{\mathbr{\theta} \in \Theta} \widehat{\ell}(\mathbr{\theta})$. If ${\lambda}_{\min}(\widehat{\mathcal{F}}({\mathbr{\theta}})) = \widehat{\lambda}_{\min} > 0$ for all $\mathbr{\theta} \in \Theta$, for any $\delta \in [0,1]$, with probability at least $1-\delta$, it holds that:
	\begin{equation}
		\ell(\widehat{\mathbr{\theta}}) - \ell({\mathbr{\theta}}^*) \le \frac{d^2\sigma^4}{\widehat{\lambda}_{\min}^2 n}  \log \frac{2d}{\delta},
	\end{equation}
	and also:
	\begin{equation}
		\widehat{\ell}({\mathbr{\theta}}^*) - \widehat{\ell}(\widehat{\mathbr{\theta}})  \le \frac{ d^2\sigma^4}{\widehat{\lambda}_{\min}^2 n}  \log \frac{2d}{\delta}.
	\end{equation}
\end{restatable}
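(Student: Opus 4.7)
The strategy is to leverage the parameter concentration bound (Theorem~\ref{thr:paramConcentration}) by relating the log--likelihood gap to the squared distance $\|\widehat{\mathbr{\theta}}-\mathbr{\theta}^*\|_2^2$ via a second--order Taylor expansion. The key algebraic observation is that, for an exponential family policy, the Hessian of $-\log \pi_{\mathbr{\theta}}(a|s)$ with respect to $\mathbr{\theta}$ equals $\nabla^2_{\mathbr{\theta}} A(\mathbr{\theta},s) = \Cov_{a\sim\pi_{\mathbr{\theta}}(\cdot|s)}[\mathbr{t}(s,a)] = \mathcal{F}(\mathbr{\theta},s)$ (see Equation~\eqref{eq:grad-log-pi} and Lemma~\ref{lemma:fisherDeriv}). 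Consequently, $\nabla^2 \ell(\mathbr{\theta}) = \mathcal{F}(\mathbr{\theta})$ and $\nabla^2 \widehat{\ell}(\mathbr{\theta}) = \widehat{\mathcal{F}}(\mathbr{\theta})$, both of which admit the uniform spectral bound $\lambda_{\max} \le d\sigma^2$ by Lemma~\ref{lemma:fisherMaxEig} (applied pointwise in $s$ and then averaged/expected, which preserves the bound).

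\textbf{First inequality.} Since $\mathbr{\theta}^* \in \argmin_{\mathbr{\theta}\in\Theta} \ell(\mathbr{\theta})$, we have $\nabla \ell(\mathbr{\theta}^*) = \mathbr{0}$. Applying Taylor's theorem with integral remainder (or the mean value theorem in Lagrange form) gives
\begin{equation*}
\ell(\widehat{\mathbr{\theta}}) - \ell(\mathbr{\theta}^*) = \tfrac{1}{2}(\widehat{\mathbr{\theta}}-\mathbr{\theta}^*)^T \mathcal{F}(\overline{\mathbr{\theta}})(\widehat{\mathbr{\theta}}-\mathbr{\theta}^*) \le \tfrac{d\sigma^2}{2}\|\widehat{\mathbr{\theta}}-\mathbr{\theta}^*\|_2^2,
\end{equation*}
for some $\overline{\mathbr{\theta}}$ on the segment between $\mathbr{\theta}^*$ and $\widehat{\mathbr{\theta}}$. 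Plugging in the high--probability bound of Theorem~\ref{thr:paramConcentration}, namely $\|\widehat{\mathbr{\theta}}-\mathbr{\theta}^*\|_2^2 \le \frac{2d\sigma^2}{\widehat{\lambda}_{\min}^2 n}\log\frac{2d}{\delta}$, yields the claimed bound $\frac{d^2\sigma^4}{\widehat{\lambda}_{\min}^2 n}\log\frac{2d}{\delta}$.

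\textbf{Second inequality.} Symmetrically, since $\widehat{\mathbr{\theta}} \in \argmin_{\mathbr{\theta}\in\Theta} \widehat{\ell}(\mathbr{\theta})$ we have $\nabla \widehat{\ell}(\widehat{\mathbr{\theta}}) = \mathbr{0}$, so expanding $\widehat{\ell}(\mathbr{\theta}^*)$ about $\widehat{\mathbr{\theta}}$ gives
\begin{equation*}
\widehat{\ell}(\mathbr{\theta}^*) - \widehat{\ell}(\widehat{\mathbr{\theta}}) = \tfrac{1}{2}(\mathbr{\theta}^*-\widehat{\mathbr{\theta}})^T \widehat{\mathcal{F}}(\overline{\mathbr{\theta}}')(\mathbr{\theta}^*-\widehat{\mathbr{\theta}}) \le \tfrac{d\sigma^2}{2}\|\widehat{\mathbr{\theta}}-\mathbr{\theta}^*\|_2^2,
\end{equation*}
where the spectral bound on $\widehat{\mathcal{F}}$ again follows from Lemma~\ref{lemma:fisherMaxEig} since $\widehat{\mathcal{F}}(\mathbr{\theta}) = \frac{1}{n}\sum_i \mathcal{F}(\mathbr{\theta},s_i)$ is a convex combination of matrices each with maximum eigenvalue at most $d\sigma^2$. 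Reusing the same event from Theorem~\ref{thr:paramConcentration} (so both bounds hold jointly under the single $(1-\delta)$ event) delivers the second inequality with the same constant.

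\textbf{Where the work lies.} No new concentration argument is needed beyond Theorem~\ref{thr:paramConcentration}; the only subtlety is checking that the two inequalities are each stated ``individually'' with probability $1-\delta$ (so the $1-\delta$ event from the parameter bound suffices for each, rather than requiring a union bound). The main conceptual point is simply that the log--likelihood is a $C^2$ function whose Hessian is the (empirical or expected) Fisher matrix, allowing a clean quadratic control of the objective gap in terms of the parameter gap, and that the squared parameter gap concentrates at rate $\mathcal{O}(n^{-1})$ even though the parameter itself concentrates only at rate $\mathcal{O}(n^{-1/2})$.
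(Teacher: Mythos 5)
Your proof is correct and follows essentially the same route as the paper: a second--order Taylor expansion around the respective minimizer (so the gradient term vanishes), identification of the Hessian of $\ell$ (resp.\ $\widehat{\ell}$) with the expected (resp.\ empirical) Fisher matrix, the spectral bound $\lambda_{\max}\le d\sigma^2$ from Lemma~\ref{lemma:fisherMaxEig}, and substitution of the parameter concentration from Theorem~\ref{thr:paramConcentration}. The constants match and no further comment is needed.
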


\begin{proof}
	Let us start with $\ell(\widehat{\mathbr{\theta}}) - \ell({\mathbr{\theta}}^*)$. We consider the first order Taylor expansion of the negative log--likelihood centered in ${\mathbr{\theta}}^*$:
	\begin{equation}
		\ell(\widehat{\mathbr{\theta}}) - \ell({\mathbr{\theta}}^*) = \nabla_{\mathbr{\theta}} \ell({\mathbr{\theta}}^*) ^T \left(\widehat{\mathbr{\theta}} -  {\mathbr{\theta}}^*\right) +  \frac{1}{2} \left(\widehat{\mathbr{\theta}} -  {\mathbr{\theta}}^*\right)^T  \mathcal{H}_{\mathbr{\theta}} \ell(\overline{\mathbr{\theta}}) \left(\widehat{\mathbr{\theta}} -  {\mathbr{\theta}}^*\right),
	\end{equation}
	where $\overline{\mathbr{\theta}} = t \widehat{\mathbr{\theta}} + (1-t) {\mathbr{\theta}}^* $ for some $t \in [0,1]$. 	We first observe that $\nabla_{\mathbr{\theta}} \ell({\mathbr{\theta}}^*) = \mathbr{0}$ being ${\mathbr{\theta}}^*$ the true parameter and we develop $\mathcal{H}_{\mathbr{\theta}} \ell(\overline{\mathbr{\theta}})$:
	\begin{align*}
		\mathcal{H}_{\mathbr{\theta}} \ell(\overline{\mathbr{\theta}}) & = \E_{\substack{s \sim d_{\mu}^{\pi_{\mathbr{\theta}^*}} \\ a \sim \pi_{\mathbr{\theta}^*}(\cdot|s)}} \left[ \mathcal{H}_{\mathbr{\theta}} \log \pi_{\overline{\mathbr{\theta}}}(a|s) \right] \\
		& = \E_{\substack{s \sim d_{\mu}^{\pi_{\mathbr{\theta}^*}} \\ a \sim \pi_{\mathbr{\theta}^*}(\cdot|s)}} \left[ \nabla_{\mathbr{\theta}} \left( \mathbr{t}(s,a) - \E_{\overline{a} \sim \pi_{\overline{\mathbr{\theta}}} (\cdot|s)} \left[\mathbr{t}(s,\overline{a}) \right] \right) \right] \\
		& = \E_{s \sim d_{\mu}^{\pi_{\mathbr{\theta}^*}}} \left[ \nabla_{\mathbr{\theta}} \E_{\overline{a} \sim \pi_{\overline{\mathbr{\theta}}} (\cdot|s)} \left[\mathbr{t}(s,\overline{a}) \right] \right] \\
		& = \E_{s \sim d_{\mu}^{\pi_{\mathbr{\theta}^*}}} \left[ \E_{\overline{a} \sim \pi_{\overline{\mathbr{\theta}}} (\cdot|s)} \left[ \left( \mathbr{t}(s,\overline{a}) - \E_{\widetilde{a} \sim \pi_{\overline{\mathbr{\theta}}} (\cdot|s)} \left[\mathbr{t}(s,\widetilde{a}) \right] \right) \mathbr{t}(s,\overline{a})^T \right] \right] =  \E_{s \sim d_{\mu}^{\pi_{\mathbr{\theta}^*}}}  \left[ \mathcal{F}(\overline{\mathbr{\theta}},s) \right].
	\end{align*}
	By using Lemma~\ref{lemma:fisherMaxEig} to bound the maximum eigenvalue of $\mathcal{F}(\overline{\mathbr{\theta}},s)$, we can state the inequality:
	\begin{equation}
		\frac{1}{2} \left(\widehat{\mathbr{\theta}} -  {\mathbr{\theta}}^*\right)^T  \mathcal{H}_{\mathbr{\theta}} \ell(\overline{\mathbr{\theta}}) \left(\widehat{\mathbr{\theta}} -  {\mathbr{\theta}}^*\right) \le \frac{d \sigma^2}{2} \left\|\widehat{\mathbr{\theta}} -  {\mathbr{\theta}}^*\right\|_2^2. 
	\end{equation}
	Using the concentration result of Theorem~\ref{thr:paramConcentration}, we get the result. Concerning $\widehat{\ell}({\mathbr{\theta}}^*) - \widehat{\ell}(\widehat{\mathbr{\theta}})$, the derivation is analogous with the only difference that the Taylor expansion has to be centered in $\widehat{\mathbr{\theta}}$ instead of ${\mathbr{\theta}}^*$.
\end{proof}

To conclude this appendix, we present the following technical lemma.

\begin{thr}\label{thr:otherConcentration}
Under Assumption~\ref{ass:identifiability} and Assumption~\ref{ass:subGauss}, let $\mathcal{D} = \{(s_i,a_i)\}_{i=1}^n$ be a dataset of $n>0$ independent samples, where $s_i \sim d_{\mu}^{\pi_{\mathbr{\theta}^*}}$ and $a_i \sim \pi_{\mathbr{\theta}^*}(\cdot|s_i)$. Let $\mathbr{\theta}, \mathbr{\theta}' \in \Theta$, then for any $\epsilon > 0$, it holds that:
	\begin{equation*}
		\Pr \left( \left[ \ell(\mathbr{\theta}) - \widehat{\ell}(\mathbr{\theta}) \right] - \left[ \ell(\mathbr{\theta}') - \widehat{\ell}(\mathbr{\theta}') \right]  > \epsilon \right) \le \exp \left\{ - \frac{\epsilon^2 n}{2 \left\| \mathbr{\theta} - \mathbr{\theta}' \right\|_2^2 \sigma^2 } \right\}.
	\end{equation*}
\end{thr}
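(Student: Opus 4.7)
The plan is to express the left-hand side as a centered empirical average of $n$ i.i.d.\ random variables, establish that each summand is zero-mean and $\|\mathbr{\theta}-\mathbr{\theta}'\|_2\sigma$-subgaussian, and then invoke the Chernoff manipulation underlying Proposition~\ref{prop:subGaussConcentration}. First I would unravel the definitions. Writing $Y_i := -\log\pi_{\mathbr{\theta}}(a_i|s_i) + \log\pi_{\mathbr{\theta}'}(a_i|s_i)$, from $\widehat{\ell}(\mathbr{\theta})=\tfrac{1}{n}\sum_i[-\log\pi_{\mathbr{\theta}}(a_i|s_i)]$ and $\ell(\mathbr{\theta})=\E[\widehat{\ell}(\mathbr{\theta})]$ one obtains
\begin{equation*}
[\ell(\mathbr{\theta})-\widehat{\ell}(\mathbr{\theta})]-[\ell(\mathbr{\theta}')-\widehat{\ell}(\mathbr{\theta}')] = \frac{1}{n}\sum_{i=1}^{n}\bigl(\E[Y_1]-Y_i\bigr),
\end{equation*}
an i.i.d.\ sample mean of zero-mean variables.

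Next, I would plug in the exponential-family form (Definition~\ref{defi:expFamily}) to get $Y_i = -(\mathbr{\theta}-\mathbr{\theta}')^T\mathbr{t}(s_i,a_i) + A(\mathbr{\theta},s_i) - A(\mathbr{\theta}',s_i)$, so that the $a_i$-randomness enters only through the sufficient statistic. Centering under the data-generating policy $\pi_{\mathbr{\theta}^*}$ yields $Y_i - \E_{a_i\sim\pi_{\mathbr{\theta}^*}(\cdot|s_i)}[Y_i] = -(\mathbr{\theta}-\mathbr{\theta}')^T\mathbr{\overline{t}}(s_i,a_i,\mathbr{\theta}^*)$, which by Assumption~\ref{ass:subGauss} (applied with $\mathbr{\alpha}=\lambda(\mathbr{\theta}'-\mathbr{\theta})$ at $\mathbr{\theta}^*$) is $\|\mathbr{\theta}-\mathbr{\theta}'\|_2\sigma$-subgaussian conditionally on $s_i$.

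To promote this conditional bound to an unconditional one on $Z_i := \E[Y_1]-Y_i$, I would exploit the change-of-measure identity $A(\mathbr{\theta},s)-A(\mathbr{\theta}',s) = \log\E_{a\sim\pi_{\mathbr{\theta}'}(\cdot|s)}[\exp((\mathbr{\theta}-\mathbr{\theta}')^T\mathbr{t}(s,a))]$, which rewrites the $s_i$-measurable residual $\E_{a_i|s_i}[Y_i]-\E[Y_1]$ as a log-MGF of the sufficient statistic under $\pi_{\mathbr{\theta}'}$. A second application of Assumption~\ref{ass:subGauss}, this time under $\pi_{\mathbr{\theta}'}$, allows the $s$-dependent contribution to combine with the conditional MGF from the previous step via the tower property to yield $\E[\exp(\lambda Z_i)] \le \exp(\tfrac{1}{2}\lambda^2\|\mathbr{\theta}-\mathbr{\theta}'\|_2^2\sigma^2)$. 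Independence across $i$ then gives $\E[\exp(\lambda\sum_i Z_i)]\le \exp(\tfrac{n}{2}\lambda^2\|\mathbr{\theta}-\mathbr{\theta}'\|_2^2\sigma^2)$, and a Chernoff bound optimized at $\lambda=\epsilon n/(\|\mathbr{\theta}-\mathbr{\theta}'\|_2^2\sigma^2)$, following the same derivation as Proposition~\ref{prop:subGaussConcentration}, delivers the claimed tail bound.

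The main obstacle is the third step: Assumption~\ref{ass:subGauss} is a purely conditional statement about $a$ given $s$, so careful bookkeeping is needed to guarantee that fusing the conditional subgaussian contribution from $a_i$ with the $s_i$-dependent log-partition difference does not inflate the effective variance proxy beyond $\|\mathbr{\theta}-\mathbr{\theta}'\|_2^2\sigma^2$. The change-of-measure rewriting of $A(\mathbr{\theta},s)-A(\mathbr{\theta}',s)$ is the device that makes this cancellation explicit; once it is in place the remainder of the argument is a routine i.i.d.\ Chernoff bound.
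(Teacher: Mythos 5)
Your setup matches the first half of the paper's argument: reducing the left-hand side to a centered i.i.d.\ sample mean, expanding via Definition~\ref{defi:expFamily} so that $Y_i = -(\mathbr{\theta}-\mathbr{\theta}')^T\mathbr{t}(s_i,a_i)+A(\mathbr{\theta},s_i)-A(\mathbr{\theta}',s_i)$, and finishing with the Chernoff bound of Proposition~\ref{prop:subGaussConcentration} are all the right moves. The gap is exactly at the step you flag as the main obstacle: the claim that the tower property plus the change-of-measure identity yields $\E[\exp(\lambda Z_i)]\le\exp(\tfrac{1}{2}\lambda^2\|\mathbr{\theta}-\mathbr{\theta}'\|_2^2\sigma^2)$. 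Conditioning on $s_i$ and applying Assumption~\ref{ass:subGauss} at $\mathbr{\theta}^*$ gives
\begin{equation*}
\E\left[e^{\lambda Z_i}\right] \le \exp\left\{\tfrac{1}{2}\lambda^2\left\|\mathbr{\theta}-\mathbr{\theta}'\right\|_2^2\sigma^2\right\}\, \E_{s_i}\left[\exp\left\{\lambda\left(\E[Y_1]-\E_{a_i\sim\pi_{\mathbr{\theta}^*}(\cdot|s_i)}[Y_i]\right)\right\}\right],
\end{equation*}
and the second factor is the MGF of a zero-mean but genuinely $s_i$-dependent random variable, namely $(\mathbr{\theta}-\mathbr{\theta}')^T\E_{a\sim\pi_{\mathbr{\theta}^*}(\cdot|s_i)}[\mathbr{t}(s_i,a)]-\left(A(\mathbr{\theta},s_i)-A(\mathbr{\theta}',s_i)\right)$ minus its mean. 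By Jensen this factor is $\ge 1$, with equality only if the residual is almost surely constant, so it strictly inflates the variance proxy whenever the conditional mean statistic or the log-partition difference varies with $s$. The change-of-measure identity cannot make it cancel: combined with Assumption~\ref{ass:subGauss} under $\pi_{\mathbr{\theta}'}$ and Jensen, it only sandwiches $A(\mathbr{\theta},s)-A(\mathbr{\theta}',s)$ between $(\mathbr{\theta}-\mathbr{\theta}')^T\E_{a\sim\pi_{\mathbr{\theta}'}(\cdot|s)}[\mathbr{t}(s,a)]$ and that quantity plus $\tfrac{1}{2}\|\mathbr{\theta}-\mathbr{\theta}'\|_2^2\sigma^2$, leaving an uncontrolled $s$-dependent term $(\mathbr{\theta}-\mathbr{\theta}')^T\left(\E_{a\sim\pi_{\mathbr{\theta}^*}(\cdot|s)}[\mathbr{t}(s,a)]-\E_{a\sim\pi_{\mathbr{\theta}'}(\cdot|s)}[\mathbr{t}(s,a)]\right)$. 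Assumption~\ref{ass:subGauss} is a statement about the conditional law of $a$ given $s$ only and provides no handle on fluctuations over $s\sim d_{\mu}^{\pi^*}$, so your proposed fusion step cannot deliver the stated constant.

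The paper closes this differently: it applies the mean value theorem to $A(\cdot,s)$, writing $A(\mathbr{\theta},s)-A(\mathbr{\theta}',s)=\nabla_{\mathbr{\theta}}A(\overline{\mathbr{\theta}},s)^T(\mathbr{\theta}-\mathbr{\theta}')=(\mathbr{\theta}-\mathbr{\theta}')^T\E_{\overline{a}\sim\pi_{\overline{\mathbr{\theta}}}(\cdot|s)}[\mathbr{t}(s,\overline{a})]$ for an intermediate $\overline{\mathbr{\theta}}$ on the segment joining $\mathbr{\theta}'$ and $\mathbr{\theta}$, so that the whole summand collapses to $(\mathbr{\theta}-\mathbr{\theta}')^T\mathbr{\overline{t}}(s,a,\overline{\mathbr{\theta}})$, and then invokes Proposition~\ref{prop:subGaussConcentration} on these quantities directly. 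Note that this reads Assumption~\ref{ass:subGauss} as licensing an unconditional subgaussian bound over the joint draw of $(s,a)$, centered at the unconditional mean and with $a\sim\pi_{\mathbr{\theta}^*}$ rather than $\pi_{\overline{\mathbr{\theta}}}$ --- precisely the promotion your more careful conditional bookkeeping declines to make, and correctly identifies as the crux. To reproduce the theorem as stated you either need the paper's mean-value-theorem shortcut together with that stronger reading of the assumption, or an additional hypothesis bounding the variation over $s$ of the conditional mean sufficient statistic.
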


\begin{proof}
	We write explicitly the involved expression, using Definition~\ref{defi:expFamily} and perform some algebraic manipulations:
	\begin{align*}
		\left[ \ell(\mathbr{\theta}) - \widehat{\ell}(\mathbr{\theta}) \right] & - \left[ \ell(\mathbr{\theta}') - \widehat{\ell}(\mathbr{\theta}') \right] = \E_{\substack{s \sim d_{\mu}^{\pi_{\mathbr{\theta}^*}} \\ a \sim \pi_{\mathbr{\theta}^*}(\cdot|s)}} \left[ \mathbr{\theta}^T  \mathbr{t}(s,a) - A(\mathbr{\theta},s) \right] - \frac{1}{n} \sum_{i=1}^n  \left( \mathbr{\theta}^T  \mathbr{t}(s_i,a_i) -A(\mathbr{\theta},s_i) \right) \\
		& \quad - \E_{\substack{s \sim d_{\mu}^{\pi_{\mathbr{\theta}^*}} \\ a \sim \pi_{\mathbr{\theta}^*}(\cdot|s)}} \left[ \left(\mathbr{\theta}'\right)^T  \mathbr{t}(s,a) -A(\mathbr{\theta}',s) \right] + \frac{1}{n} \sum_{i=1}^n  \left( \left(\mathbr{\theta}'\right)^T  \mathbr{t}(s_i,a_i) - A(\mathbr{\theta}',s_i) \right) \\
		& = \E_{\substack{s \sim d_{\mu}^{\pi_{\mathbr{\theta}^*}} \\ a \sim \pi_{\mathbr{\theta}^*}(\cdot|s)}} \left[\left(\mathbr{\theta} - \mathbr{\theta}'\right)^T \mathbr{t}(s,a) - \left( A(\mathbr{\theta},s) - A(\mathbr{\theta}',s) \right) \right] - \frac{1}{n} \sum_{i=1}^n \left(\left(\mathbr{\theta} - \mathbr{\theta}'\right)^T \mathbr{t}(s_i,a_i) - \left( A(\mathbr{\theta},s_i) - A(\mathbr{\theta}',s_i) \right) \right).
	\end{align*}
	Essentially, we are comparing the mean and the sample mean of the random variable $\left(\mathbr{\theta} - \mathbr{\theta}'\right)^T \mathbr{t}(s,a) - \left( A(\mathbr{\theta},s) - A(\mathbr{\theta}',s) \right)$. Let us now focus on $A(\mathbr{\theta},s) - A(\mathbr{\theta}',s)$. From the mean value theorem we know that, for some $t \in [0,1]$ and $\overline{\mathbr{\theta}} = t\mathbr{\theta} + (1-t)\mathbr{\theta}'$, we have:
	\begin{equation}
		A(\mathbr{\theta},s) - A(\mathbr{\theta}',s) = \nabla_{\mathbr{\theta}} A(\overline{\mathbr{\theta}},s) ^T \left( \mathbr{\theta}-\mathbr{\theta}' \right).
	\end{equation}
	From Equation~\eqref{eq:grad-log-pi}, we know that $\nabla_{\mathbr{\theta}} A(\overline{\mathbr{\theta}},s) = \E_{\overline{a} \sim \pi_{\overline{\mathbr{\theta}}}(\cdot|s)} \left[ \mathbr{t}(s,\overline{a}) \right]$. The random variable $\mathbr{\overline{t}}(s,a,\overline{\mathbr{\theta}}) =  \mathbr{t}(s,a) - \E_{\overline{a} \sim \pi_{\overline{\mathbr{\theta}}}(\cdot|s)} \left[ \mathbr{t}(s,\overline{a}) \right]$ is a subgaussian random variable for any $\overline{\mathbr{\theta}} \in \Theta$. Thus, under Assumption~\ref{ass:subGauss} we have:
	\begin{align*}
	\left[ \ell(\mathbr{\theta}) - \widehat{\ell}(\mathbr{\theta}) \right] & - \left[ \ell(\mathbr{\theta}') - \widehat{\ell}(\mathbr{\theta}') \right] = \left( \mathbr{\theta}-\mathbr{\theta}' \right)^T \left( \E_{\substack{s \sim d_{\mu}^{\pi_{\mathbr{\theta}^*}} \\ a \sim \pi_{\mathbr{\theta}^*}(\cdot|s)}} \left[  \mathbr{\overline{t}}(s,a,\overline{\mathbr{\theta}}) \right] - \frac{1}{n} \sum_{i=1}^n  \mathbr{\overline{t}}(s_i,a_i,\overline{\mathbr{\theta}}) \right).
	\end{align*}
	If we apply Proposition~\ref{prop:subGaussConcentration}, we get the result.
\end{proof}

\subsection{Results on Significance and Power of the Tests}

\sigPowerSimp*
\begin{proof}
We can rewrite the definitions of $\alpha$ and $\beta$ in the following form:
\begin{align*}
	& \alpha = \frac{1}{d-d^*} \E \left[ \left| \left\{ i \notin I^* : i \in \widehat{I}_{c} \right\} \right| \right] = \frac{1}{d-d^*} \sum_{i \notin I^*}  \Pr\left( i \in \widehat{I}_{c} \right) \\
	& \beta = \frac{1}{d^*} \E \left[ \left| \left\{ i \in I^* : i \notin \widehat{I}_{c} \right\} \right| \right] = \frac{1}{d^*} \sum_{i \in I^*}  \Pr\left( i \notin \widehat{I}_{c} \right).
\end{align*}
Now, we focus on bounding $\Pr\left( i \in \widehat{I}_{c} \right)$. We employ an argument analogous to that of~\cite{garivier2019non}:
	\begin{align*}
		\Pr \left( i \in \widehat{I}_{c} \right)  & = \Pr \left( \lambda_i > c(1) \right) \\
		& = \Pr \left( \widehat{\ell}(\widehat{\mathbr{\theta}_i}) - \widehat{\ell}(\widehat{\mathbr{\theta}}) > \frac{c(1)}{2} \right) \\
		& \le \Pr \left( \widehat{\ell}(\mathbr{\theta}^*) - \widehat{\ell}(\widehat{\mathbr{\theta}}) > \frac{c(1)}{2} \right) \le 2d \exp \left\{ -\frac{c(\delta,1) \lambda_{\min}^2 n}{16d^2 \sigma^4} \right\},
	\end{align*}
	where we observed that $\widehat{\ell}(\mathbr{\theta}^*) \ge \widehat{\ell}(\widehat{\mathbr{\theta}_i})$ as $\mathbr{\theta}^* \in \Theta_i$ under $\mathcal{H}_0$ and we applied Corollary~\ref{coroll:likConcentration} in the last line, recalling that $\widehat{\lambda}_{\min} \ge \frac{\lambda_{\min}}{2\sqrt{2}}$. For the second inequality, the derivation is a little more articulated. 
		Let us now focus on the single terms $\Pr\left( i \notin \widehat{I}_{c} \right)$. We now perform the following manipulations:
		\begin{align}
		\Pr\left( i \notin \widehat{I}_{c} \right) & = \Pr \left(\widehat{\ell}(\widehat{\mathbr{\theta}_i}) - \widehat{\ell}(\widehat{\mathbr{\theta}}) \le \frac{c(1)}{2} \right) \notag \\
		& = \Pr \left(\left[ \widehat{\ell}(\widehat{\mathbr{\theta}_i}) - \widehat{\ell}({\mathbr{\theta}_i^*}) \right] + \left[ \widehat{\ell}({\mathbr{\theta}}^*) - \widehat{\ell}(\widehat{\mathbr{\theta}}) \right] + \left[ \widehat{\ell}({\mathbr{\theta}_i^*}) - \widehat{\ell}({\mathbr{\theta}^*} ) \right] \le \frac{c(1)}{2} \right) \\
		& \le \Pr \left(\left[ \widehat{\ell}(\widehat{\mathbr{\theta}_i}) - \widehat{\ell}({\mathbr{\theta}_i^*}) \right] + \left[ \widehat{\ell}({\mathbr{\theta}_i}^*) - \widehat{\ell}({\mathbr{\theta}}^*) \right] \le \frac{c(1)}{2} \right) \label{p:51}\\
		& = \Pr \left(\left[ \widehat{\ell}(\widehat{\mathbr{\theta}_i}) - \widehat{\ell}({\mathbr{\theta}_i^*}) \right] + \left[\widehat{\ell}({\mathbr{\theta}_i^*}) -{\ell}({\mathbr{\theta}_i^*}) \right] + \left[ {\ell}({\mathbr{\theta}^*}) - \widehat{\ell}({\mathbr{\theta}^*}) \right] \le \frac{c(1)}{2} + \left[ {\ell}({\mathbr{\theta}^*}) - {\ell}({\mathbr{\theta}_i^*}) \right] \right) \notag \\
		& =  \Pr \left(\left[\widehat{\ell}({\mathbr{\theta}_i^*}) - \widehat{\ell}(\widehat{\mathbr{\theta}_i})  \right] + \left[{\ell}({\mathbr{\theta}_i^*}) - \widehat{\ell}({\mathbr{\theta}_i^*})\right] + \left[ \widehat{\ell}({\mathbr{\theta}^*})-{\ell}({\mathbr{\theta}^*}) \right] \ge \left[ {\ell}({\mathbr{\theta}_i^*}) - {\ell}({\mathbr{\theta}^*})  \right]  - \frac{c(1)}{2} \right).\notag 
		\end{align}
		where line~\eqref{p:51} is obtained by observing that $ \widehat{\ell}({\mathbr{\theta}^*}) - \widehat{\ell}(\widehat{\mathbr{\theta}}) \ge 0$. Thus, we have:
		\begin{align}
		 \Pr\left( i \notin \widehat{I}_{c} \right) & \le \Pr \left(\widehat{\ell}({\mathbr{\theta}_i}^*) - \widehat{\ell}(\widehat{\mathbr{\theta}_i}) \ge \frac{1}{2} \left[ {\ell}({\mathbr{\theta}_i}^*) - {\ell}({\mathbr{\theta}}^*)  \right]  - \frac{c(1)}{2} \right) \notag \\
		 & \quad + \Pr \left(\left[{\ell}({\mathbr{\theta}_i}^*) - \widehat{\ell}({\mathbr{\theta}_i}^*)\right] + \left[ \widehat{\ell}({\mathbr{\theta}}^*)-{\ell}({\mathbr{\theta}}^*) \right] \ge \frac{1}{2} \left[ {\ell}({\mathbr{\theta}_i}^*) - {\ell}({\mathbr{\theta}}^*)  \right]  \right) \label{p:52}\\
		 & \le \Pr \left(\widehat{\ell}({\mathbr{\theta}_i}^*) - \widehat{\ell}(\widehat{\mathbr{\theta}_i}) \ge \frac{1}{2} \left[ {\ell}({\mathbr{\theta}_i}^*) - {\ell}({\mathbr{\theta}}^*)  \right]  - \frac{c(1)}{2} \right) \notag \\
		 & \quad + \Pr \left(\left[{\ell}({\mathbr{\theta}_i}^*) - \widehat{\ell}({\mathbr{\theta}_i}^*)\right] + \left[ \widehat{\ell}({\mathbr{\theta}}^*)-{\ell}({\mathbr{\theta}}^*) \right] \ge \frac{1}{2} \left[ \frac{1}{2} \lambda_{\min} \left( {\ell}({\mathbr{\theta}_i}^*) - {\ell}({\mathbr{\theta}}^*) \right) \left\| \mathbr{\theta}_i^* - \mathbr{\theta}^* \right\|_2^2  \right]^{\frac{1}{2}}  \right) \label{p:53}\\
		 & \le 2(d-1) \exp\left\{ - \frac{\left( {\ell}({\mathbr{\theta}_i^*}) - {\ell}({\mathbr{\theta}}^*)  -c(1) \right) \lambda_{\min}^2 n}{16 (d-1)^2 \sigma^4} \right\} + \exp \left\{ - \frac{ \left({\ell}({\mathbr{\theta}_i}^*) - {\ell}({\mathbr{\theta}}^*) \right) \lambda_{\min}  n}{16  \sigma^2} \right\} \label{p:54}\\
		 & \le 2(d-1) \exp\left\{ - \frac{\left( {\ell}({\mathbr{\theta}_i}^*) - {\ell}({\mathbr{\theta}}^*)  -c(1) \right) \lambda_{\min} n
		 \nu}{16 (d-1)^2 \sigma^2} \right\} + \exp \left\{ - \frac{ \left( {\ell}({\mathbr{\theta}_i}^*) - {\ell}({\mathbr{\theta}}^*)  -c(1) \right) \lambda_{\min} n \nu}{16 (d-1)^2 \sigma^2} \right\} \label{p:55}\\
		 & \le (2d - 1) \exp\left\{ - \frac{\left( {\ell}({\mathbr{\theta}_i}^*) - {\ell}({\mathbr{\theta}}^*)  -c(1) \right) \lambda_{\min} n
		 \nu}{16 (d-1)^2 \sigma^2} \right\}.\notag
		\end{align}
		where line~\eqref{p:52} derives from the inequality $\Pr(X+Y \ge c) \le \Pr(X \ge a) + \Pr(Y \ge b)$ with $c=a+b$, line~\eqref{p:53} is obtained by the following second order Taylor expansion, recalling that $\nabla_{\mathbr{\theta}}  {\ell}({\mathbr{\theta}}^*) = \mathbr{0}$:
		\begin{align*}
			{\ell}({\mathbr{\theta}_i}^*) - {\ell}({\mathbr{\theta}}^*)  = \nabla_{\mathbr{\theta}}  {\ell}({\mathbr{\theta}}^*)^T \left({\mathbr{\theta}_i}^* - {\mathbr{\theta}}^*\right)  + \frac{1}{2} \left({\mathbr{\theta}_i}^* - {\mathbr{\theta}}^*\right)^T \mathcal{H}_{\mathbr{\theta}} {\ell}(\overline{\mathbr{\theta}}) \left({\mathbr{\theta}_i}^* - {\mathbr{\theta}}^*\right) \ge \frac{\lambda_{\min}}{2} \left\|{\mathbr{\theta}_i}^* - {\mathbr{\theta}}^* \right\|_2^2,
\end{align*}		
where  $\overline{\mathbr{\theta}} = t \mathbr{\theta}^* + (1-t) \mathbr{\theta}^*_i$ for some $t \in [0,1]$. Line~\eqref{p:54} is obtained by applying Corollary~\ref{coroll:likConcentration}, recalling that $\widehat{\lambda}_{\min} \ge \frac{\lambda_{\min}}{2\sqrt{2}}$ and Theorem~\ref{thr:otherConcentration}. Finally, line~\eqref{p:55} derives by introducing the term $\nu = \min\left\{ 1, \frac{\lambda_{\min}}{\sigma^2} \right\}$ and observing that:
\begin{equation*}
	\frac{\left( {\ell}({\mathbr{\theta}_i}^*) - {\ell}({\mathbr{\theta}}^*)  -c(1) \right)
		 \nu}{(d-1)^2} \le \frac{ \left({\ell}({\mathbr{\theta}_i}^*) - {\ell}({\mathbr{\theta}}^*) \right)  n}{16}.
\end{equation*}
Clearly, this result is meaningful as long as $ {\ell}({\mathbr{\theta}_i}^*) - {\ell}({\mathbr{\theta}}^*)   - c(1) \ge 0$.

In order to derive the value of $\delta$ for the $\delta$--correctness, we consider the following derivation:
\begin{align*}
	\Pr \left( \widehat{I}_c \neq I^* \right) &= \Pr \left(\exists i \notin I^* : i \in \widehat{I}_c \vee \exists i \in I^* : i \notin \widehat{I}_c\right) \\
	& \le  \Pr \left(\exists i \notin I^* : i \in \widehat{I}_c \right) + \Pr \left(\exists i \in I^* : i \notin \widehat{I}_c \right)  \\
	& \le \sum_{i \notin  I^*} \Pr \left(i \in \widehat{I}_c \right) + \sum_{i \in I^*} \Pr \left(i \notin \widehat{I}_c \right)  \\
	& = (d-d^*) \alpha + d^* \beta,
\end{align*}
where we employed union bound in the first two passages and the definition of $\alpha$ and $\beta$ in the last one.
%
\end{proof}

\subsection{Additional Proofs and Derivations}\label{apx:additional}

\lemmaRuleIdent*
\begin{proof}
	The uniqueness of $I^*$ is ensured by Assumption~\ref{ass:identifiability}. Let us rewrite the condition of Definition~\ref{eq:problemOriginal} under Assumption~\ref{ass:identifiability}:
	\begin{align}
		\pi^* \in \Pi_{\Theta_{I^*}} & \, \wedge \,\forall i \in {I^*} : \pi^* \notin \Pi_{\Theta_{I^* \setminus \{i\}}} \iff \mathbr{\theta}^* \in \Theta_{I^*} \, \wedge \,\forall i \in {I^*} : \mathbr{\theta}^* \notin \Theta_{I^* \setminus \{i\}}\label{eq:p31} \\
		& \iff \forall i \in I^*: \theta^*_i \neq 0 \, \wedge \,\forall i \in \interval \setminus I^* : \theta_i^* =0 \label{eq:p32}\\
		& \iff I^* = \left\{ i \in \interval \,:\, \theta_i^* \neq 0 \right\},\notag
	\end{align}
	where line~\eqref{eq:p31} follows since there is a unique representation for $\pi^*$ determined by parameter $\mathbr{\theta}^*$ and line~\eqref{eq:p32} is obtained from the definition of $\Theta_I$.
\end{proof}

\lowerBoundObjective*
\begin{proof}
	By second order Taylor expansion of $\ell$ and recalling that $\nabla_{\mathbr{\theta}}  {\ell}({\mathbr{\theta}^*(\mathbr{\omega})}) = \mathbr{0}$, we have:
	\begin{align*}
		{\ell}({\mathbr{\theta}_I^*}(\mathbr{\omega})) - {\ell}({\mathbr{\theta}^*}(\mathbr{\omega})) & \ge \frac{\lambda_{\min}}{2} \left\|{\mathbr{\theta}_I^*}(\mathbr{\omega}) - {\mathbr{\theta}^*}(\mathbr{\omega}) \right\|_2^2\\
		& = \frac{\lambda_{\min}}{2} \left\|\mathbr{\theta}^*(\mathbr{\omega}_0) + \alpha \nabla_{\mathbr{\theta}} J_{\mathcal{M}_{\mathbr{\omega}}} (\mathbr{\theta}^*(\mathbr{\omega}_0)) \rvert_I - \mathbr{\theta}^*(\mathbr{\omega}_0) - \alpha \nabla_{\mathbr{\theta}} J_{\mathcal{M}_{\mathbr{\omega}}} (\mathbr{\theta}^*(\mathbr{\omega}_0)) \right\|_2^2\\
		& = \frac{\lambda_{\min}\alpha^2}{2} \left\| \nabla_{\mathbr{\theta}} J_{\mathcal{M}_{\mathbr{\omega}}} (\mathbr{\theta}^*(\mathbr{\omega}_0)) \rvert_{\overline{I}}   \right\|_2^2.
	\end{align*}
\end{proof}

\section{Detail on Identification Rules with Configurable Environment}\label{apx:configuration}
In this appendix, we report some details about the application of the environment configurability to the problem of policy space identification. We assume to have access to a dataset of trajectories $\mathcal{D} = \{\tau_i\}_{i=1}^n$ independently collected using policy $\pi_{\mathbr{\theta}}$ in the environment $\mathcal{M}_{\mathbr{\omega}_0}$. Each trajectory is a sequence of triples $\{(s_{i,t},a_{i,t},r_{i,t})\}_{t=1}^T$, where $T$ is the trajectory horizon. Let us start with the expression of the gradient estimator:
\begin{equation}
	\widehat{\nabla}_{\mathbr{\theta}} J_{\mathcal{M}_{\mathbr{\omega}/\mathbr{\omega}_0}}(\mathbr{\theta}) = \frac{1}{n} \sum_{i=1}^n \sum_{t=0}^{T-1} \gamma^t r_{i,t} 
\underbracket{\left( \frac{\mu_{\mathbr{\omega}}(s_{i,0})}{\mu_{\mathbr{\omega}_0}(s_{i,0})} \prod_{j=0}^t \frac{p_{\mathbr{\omega}}(s_{i,j+1}|s_{i,j},a_{i,j})}{p_{\mathbr{\omega}_0}(s_{i,j+1}|s_{i,j},a_{i,j})} \right)}_{\text{importance weight}}
\sum_{j=0}^t \nabla_{\mathbr{\theta}} \log \pi_{\mathbr{\theta}} \left( a_{i,j} | s_{i,j} \right).
\end{equation}
The expression is obtained starting from the well--known G(PO)MDP gradient estimator and adapting for off--distribution estimation, by introducing the importance weight~\cite{metelli2018policy}. The estimated 2--\Renyi divergence is obtained from the following expression, which represents the empirical second moment of the importance weight:
\begin{equation}
	\widehat{d}_2 (\mathbr{\omega} \| \mathbr{\omega}_0) = \frac{1}{n} \sum_{i=1}^n \left(\frac{\mu_{\mathbr{\omega}}(s_{i,0})}{\mu_{\mathbr{\omega}_0}(s_{i,0})} \prod_{t=1}^{T} \frac{p_{\mathbr{\omega}}(s_{i,t+1}|s_{i,t},a_{i,t})}{p_{\mathbr{\omega}_0}(s_{i,t+1}|s_{i,t},a_{i,t})} \right)^2.
\end{equation}

Refer to~\citet{metelli2018policy} for the theoretical background behind the choice of this objective function. 

In the following, we report the pseudocodes for the environment configuration procedure in the case of application of Identification Rule~\ref{ir:complete} (Algorithm~\ref{alg:ConfCompleteTest}) and Identification Rule~\ref{ir:simplified} (Algorithm~\ref{alg:ConfLinearTest}). 
\begin{algorithm}[H]
    \captionof{algorithm}{Identification Rule~\ref{ir:complete} (Combinatorial) with Environment Configuration.}
    \label{alg:ConfCompleteTest}
    \small
    \textbf{input}: parameter space $\Theta$, configuration space $\Omega$, critical values $c$, number of configuration attempts $N_{\mathrm{conf}}$
    \begin{algorithmic} 
    	\State Initialize $\mathbr{\omega}_0$ arbitrarily
    	\State Collect $\mathcal{D}_0$ observing $\pi^*_0$ in environment $\mathcal{M}_{\mathbr{\omega}_0}$
    	\State Run the Identification Rule~\ref{ir:complete} on $\mathcal{D}_0$ with $\delta'$ and get $\widehat{\mathcal{I}}_0$
    	\State $\widehat{\mathcal{I}} \leftarrow \widehat{\mathcal{I}}_{0}$
    	\For{$I \subseteq \interval : I \notin \widehat{\mathcal{I}}$}
    		\State $\mathbr{\omega}_{i,0} \leftarrow \mathbr{\omega}_0$
    		\State $\mathcal{D}_{i,0} \leftarrow \mathcal{D}$
    		\For{$j = 1,...,N_{\mathrm{conf}}$}
    			\State Optimize $\mathcal{C}_{I}(\mathbr{\omega}/\mathbr{\omega}_{i,j-1})$ getting $\mathbr{\omega}_{i,j}$
    			\State Collect $\mathcal{D}_{i,j}$ observing $\pi^*_{i,j}$ in environment $\mathcal{M}_{\mathbr{\omega}_{i,j}}$
    			\State Run the Identification Rule~\ref{ir:complete} on $\mathcal{D}_{i,j}$ and obtain $\widehat{\mathcal{I}}_{i,j}$
    			\State $\widehat{\mathcal{I}} \leftarrow \widehat{\mathcal{I}} \cup \widehat{\mathcal{I}}_{i,j}$
    		\EndFor
    	\EndFor
        \State \textbf{return} $\widehat{\mathcal{I}}$
    \end{algorithmic}
\end{algorithm}
\begin{algorithm}[H]
    \captionof{algorithm}{Identification Rule~\ref{ir:simplified} (Simplified) with Environment Configuration.}
    \label{alg:ConfLinearTest}
    \small
    \textbf{input}: parameter space $\Theta$, configuration space $\Omega$, critical value $c$, number of configuration attempts $N_{\mathrm{conf}}$
    \begin{algorithmic} 
    	\State Initialize $\mathbr{\omega}_0$ arbitrarily
    	\State Collect $\mathcal{D}_0$ observing $\pi^*_0$ in environment $\mathcal{M}_{\mathbr{\omega}_0}$
    	\State Run the Identification Rule~\ref{ir:simplified} on $\mathcal{D}_0$ and obtain $\widehat{{I}}_0$
    	\State $\widehat{{I}} \leftarrow \widehat{{I}}_{0}$
    	\For{$i \in \interval : i \notin \widehat{{I}}$}
    		\State $\mathbr{\omega}_{i,0} \leftarrow \mathbr{\omega}_0$
    		\State $\mathcal{D}_{i,0} \leftarrow \mathcal{D}$
    		\For{$j = 1,...,N_{\mathrm{conf}}$}
    			\State Optimize $\mathcal{C}_{\{i\}}(\mathbr{\omega}/\mathbr{\omega}_{i,j-1})$ getting $\mathbr{\omega}_{i,j}$
    			\State Collect $\mathcal{D}_{i,j}$ observing $\pi^*_{i,j}$ in environment $\mathcal{M}_{\mathbr{\omega}_{i,j}}$
    			\State Run the Identification Rule~\ref{ir:simplified} on $\mathcal{D}_{i,j}$ and obtain $\widehat{{I}}_{i,j}$
    			\State $\widehat{{I}} \leftarrow \widehat{{I}} \cup \widehat{{I}}_{i,j}$
    		\EndFor
    	\EndFor
        \State \textbf{return} $\{\widehat{I}\}$
    \end{algorithmic}
\end{algorithm}

\section{Additional Experimental details}
In this appendix, we report the full experimental results, along with the hyperparameters employed.

\subsection{Discrete Grid World}\label{apx:discreteGridworld}

\subsubsection{Hyperparameters}
In the following, we report the hyperparameters used for the experiments on the discrete grid world:
\begin{itemize}
\item Horizon ($T$): 50
\item Discount factor ($\gamma$): 0.98
\item Learning steps with G(PO)MDP: 200
\item Batch size: 250
\item Max-likelihood maximum update steps: 1000
\item Max-likelihood learning rate (using Adam): 0.03
\item Number of configuration attempts per feature ($N_{\mathrm{conf}}$): 3
\item Environment configuration update steps: 150
\item Regularization parameter of the \Renyi divergence ($\zeta$): 0.125
\item Significance of the likelihood-ratio tests ($\delta$): 0.01
\end{itemize}

\subsubsection{Example of configuration and identification in the discrete grid world}
In Figure~\ref{fig:gridworld_graphical_conf}, we show a graphical representation of a single experiment with the grid world environment using its configurability to better identify the policy space. The colors inside the squares indicate the probability mass function associated to the initial state distribution, consisting of the agent's position (blue) and the goal position (red), where sharper colors mean higher probabilities. The colored lines represent the features the agent has access to, they are binary features indicating if the agent is on a certain row or column (blue lines) and if the goal is on a certain row or column (red lines). Note that, to avoid redundancy of representation (and so enforcing the identifiability), the last row and column are not explicitly encoded, but they can be represented by the absence of the other rows and columns. When a line is not shown anymore, it means that it has been rejected, i.e., we think the agent has access to that feature. The agent has access to every feature except for the goal columns, i.e., only to its own position and to the goal row are known.

The configuration of the environment is updated in the images at even position, the identification step is performed at even positions. The environment is configured in order to maximize the influence on the gradient of the first -- not rejected -- feature, considering the blue features first and then the red ones. After the model was configured three times for a feature, and the feature has not been rejected, the model was configured for the next one.

We can see that the general trend of this configuration is to change the parameters in order to spread the initial value of the mass probability functions across a greater number of grid cells. This is an expected behavior since with the initial model configuration, very often an episode starts with the agent in the bottom-left of the grid and the goal in the bottom-right, causing the policy to depend mostly on the position of the agent. In fact, only blue column features are rejected at the first iteration, as we can see in the third image. Instead, distributing the probabilities across the whole grid let an episode starts with the two positions extracted almost uniformly. Eventually, the correct policy space is identified. It is interesting to observe that such is can hardly be obtained without the configuration of the environment, given the initial state distribution shown in the first image.

\newcommand{\mybox}[2]{
	\begin{tikzpicture}
	{%
		\setlength{\fboxsep}{0pt}%
		\setlength{\fboxrule}{0.5pt}%
    	\draw (0, 0) node[inner sep=0] {\fbox{\includegraphics[width=0.21\textwidth]{#1}}};
    	\draw (0, -2.2) node {#2};
	}
	\end{tikzpicture}
}

\begin{figure}[h!]
\raggedleft
{%
\setlength{\fboxsep}{0pt}%
\setlength{\fboxrule}{0.5pt}%
\mybox{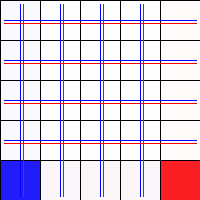}{configuration}\hfill
\mybox{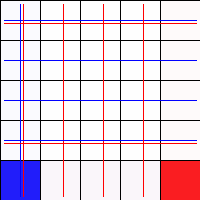}{identification}\hfill
\mybox{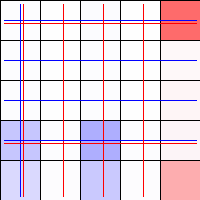}{configuration}\hfill
\mybox{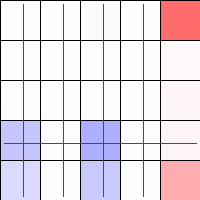}{identification}\hfill
\vspace{0.05\textwidth}
\mybox{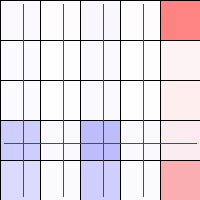}{configuration}\hfill
\mybox{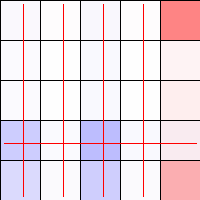}{identification}\hfill
\mybox{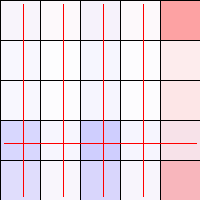}{configuration}\hfill
\mybox{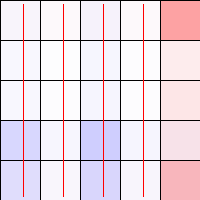}{identification}\hfill
\vspace{0.05\textwidth}
\mybox{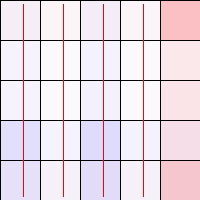}{configuration}\hfill
\mybox{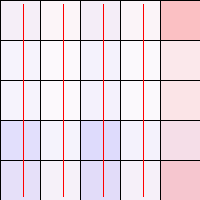}{identification}\hfill
\mybox{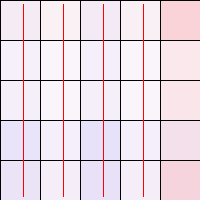}{configuration}\hfill
\mybox{plots/exp3_gridworld/stateImage5A.png}{identification}\hfill
\vphantom{\fbox{\includegraphics[width=0.21\textwidth]{plots/exp3_gridworld/stateImage5A.png}}\hfill}
}%
\caption{Example of configuration and identification in the discrete grid world.}
\label{fig:gridworld_graphical_conf}
\end{figure}

\subsection{Continuous Grid World}\label{apx:continuousGridworld}
In this appendix, we report the experiments performed on the continuous version of the grid world. In this environment, the agent has to reach a goal region, delimited by a circle, starting from an initial position. Both initial position and center of the goal are sampled at the beginning of the episode from a Gaussian distribution with fixed covariance $\mu_{\mathbr{\omega}}$. The supervisor is allowed to change, via the parameters $\mathbr{\omega}$, the mean of this distribution. The agent specifies, at each time step, the speed in the vertical and horizontal direction, by means of a bivariate Gaussian policy with fixed covariance, linear in a set of radial basis functions (RBF) for representing both the current position of the agent and the position of the goal (5$\times$5 both for the agent position and the goal). The feature, and consequently the parameters, that the agent can control are randomly selected at the beginning. In Figure~\ref{fig:contGrid}, we show the results of an experiment analogous to that of the discrete grid world, by comparing $\widehat{\alpha}$ and $\widehat{\beta}$ for the case in which we do not perform environment configuration (no-conf) and the case in which the configuration is performed (conf). Once again, we confirm our findings that configuring the environment allows speeding up the identification process by inducing the agent chaining its policy and, as a consequence, revealing which parameters it can actually control.

\begin{figure}[h!]
\centering
\includegraphics[scale=1.5]{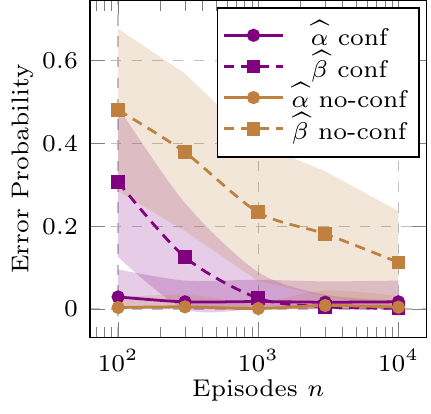}
\caption{$\widehat{\alpha}$ and $\widehat{\beta}$ error for \emph{conf} and \emph{no-conf} cases in the continuous grid world varying the number of episodes $n$. 25 runs 95\% c.i.}\label{fig:contGrid}
\end{figure}

\subsubsection{Hyperparameters}
In the following, we report the hyperparameters used for the experiments on the continuous grid world:
\begin{itemize}
\item Horizon ($T$): 50
\item Discount factor ($\gamma$): 0.98
\item Policy covariance ($\mathbr{\Sigma}$): $0.02^2 \mathbr{I}$
\item Learning steps with G(PO)MDP: 100
\item Batch size: 100
\item Max-likelihood maximum update steps: closed form 
\item Number of configuration attempts per feature ($N_{\mathrm{conf}}$): 3
\item Environment configuration update steps: 100
\item Regularization parameter of the \Renyi divergence ($\zeta$): $1e-6$
\item Significance of the likelihood-ratio tests ($\delta$): 0.01
\end{itemize}

\subsubsection{Example of configuration and identification in the continuous grid world}
In Figure~\ref{fig:cgridworld_graphical_conf}, we show an example of model configuration in the continuous grid world environment. The two filled circles are a graphical representation of the normal distributions from which the initial position of the agent (light blue) and the position of the goal (pink) are sampled at the beginning of each episode. The circumferences correspond to the set of features (RBF) to which the agent has access, among which we want to discover the ones accessible by the agent. Since the policy space is composed by Gaussian policies with mean specified by a linear combination of these features, each one is associated to a parameter. If a circumference is not shown anymore at an iteration step, it means that the hypothesis associated to that feature was rejected, \ie we believe that the agent has access to that feature.

The group of images is an alternated sequence of new environment configurations and parameter identifications. In the first image we can see the initial model with no rejected features. The identification with the initial model yields to the rejection of a certain set of features, which can be seen in the second image. The third image shows the new configuration of the model, in which the mean of the two initial state distributions are moved in order to investigate the remaining features. Then a new test is performed and the result is shown in the fourth image, and so on. In this experiment, the environment was configured in order to maximize the influence of one feature at a time, starting from the blue ones from bottom-left to top-right in row order, and then with the red ones in the same order. Each feature is used to configure the model for a maximum of three times, after that point the next feature is considered.

The only features that were not actually in the agent's set are the red ones on the two top rows. We can see that the mean of the initial position of the agent (a configurable parameter of the environment) always tracked the first available feature yet to be tested, as expected from this experiment. In fact, when the initial position is close enough to those features, the agent often moves around those blue circumferences to reach the goal, making them more important in the definition of the optimal policy. Eventually, the tests reject all the features that are actually accessible by the agent, and only them, yielding to a correct identification of the policy space. The rest of the configurations are not shown, since no more features were rejected. In this experiment, similarly to the discrete grid world case, the use of Conf--MDPs was crucial to obtain this result.

\begin{figure}[h!]
\raggedleft
{%
\setlength{\fboxsep}{0pt}%
\setlength{\fboxrule}{0.5pt}%
\mybox{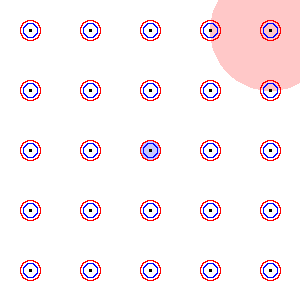}{configuration}\hfill
\mybox{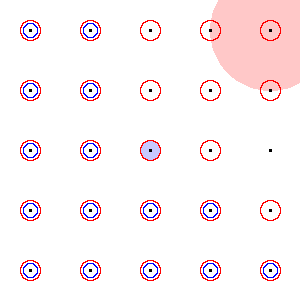}{identification}\hfill
\mybox{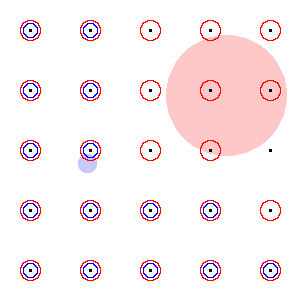}{configuration}\hfill
\mybox{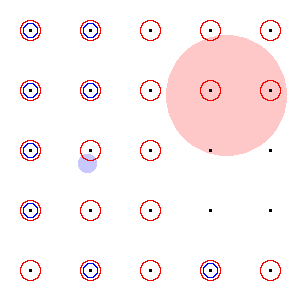}{identification}\hfill
\vspace{0.05\textwidth}
\mybox{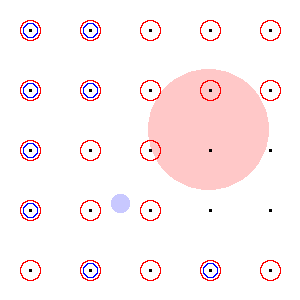}{configuration}\hfill
\mybox{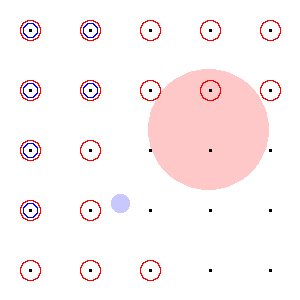}{identification}\hfill
\mybox{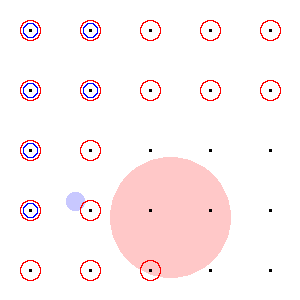}{configuration}\hfill
\mybox{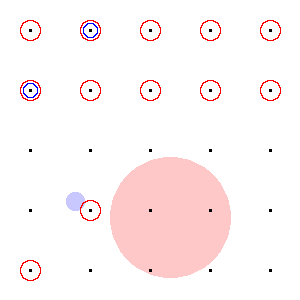}{identification}\hfill
\vspace{0.05\textwidth}
\mybox{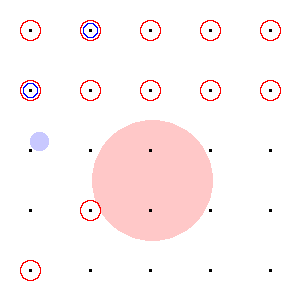}{configuration}\hfill
\mybox{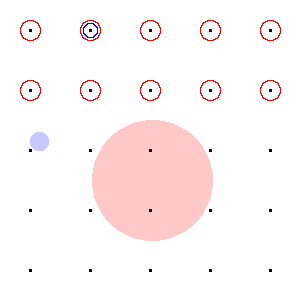}{identification}\hfill
\mybox{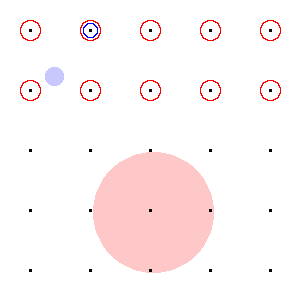}{configuration}\hfill
\mybox{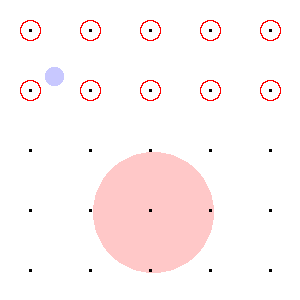}{identification}\hfill
\vspace{0.05\textwidth}
\mybox{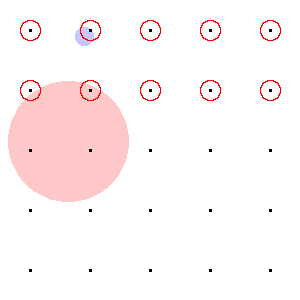}{configuration}\hfill
\mybox{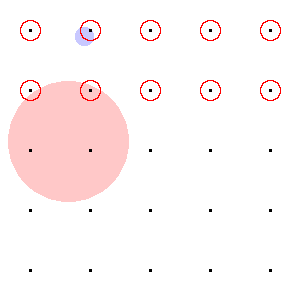}{identification}\hfill
\mybox{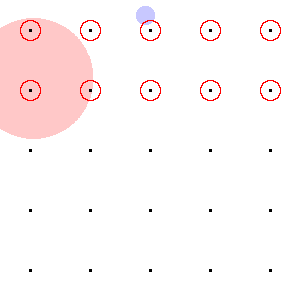}{configuration}\hfill
\mybox{plots/exp1_cgridworld/stateImage7A.png}{identification}\hfill
\vphantom{\fbox{\includegraphics[width=0.21\textwidth]{plots/exp1_cgridworld/stateImage6A.png}}\hfill}
}%
\caption{Example of configuration and identification in the continuous grid world.}
\label{fig:cgridworld_graphical_conf}
\end{figure}

\clearpage
\subsection{Minigolf}\label{apx:experimentsMinigolf}
In the minigolf experiment, the polynomial features obtained from the distance from the goal $x$ and the friction $f$ are the following:
\begin{equation*}
	\mathbr{\phi}(x,f) = \left(1,\, x,\, f,\, \sqrt{x},\, \sqrt{f},\, \sqrt{xf}\right)^T.
\end{equation*}
While agent $\mathscr{A}_1$ perceives all the features, agent $\mathscr{A}_2$ has access to $ \left(1,\, x,\, \sqrt{x} \right)^T$ only.

\subsubsection{Hyperparameters}
In the following, we report the hyperparameters used for the experiments on the minigolf:
\begin{itemize}
\item Horizon ($T$): 20
\item Discount factor ($\gamma$): 0.99
\item Policy covariance ($\mathbr{\Sigma}$): $0.01$
\item Learning steps with G(PO)MDP: 100
\item Batch size: 100
\item Max-likelihood maximum update steps: closed form 
\item Number of configuration attempts per feature ($N_{\mathrm{conf}}$): 10
\item Environment configuration update steps: 100
\item Regularization parameter of the \Renyi divergence ($\zeta$): 0.25
\item Significance of the likelihood-ratio tests ($\delta$): 0.01
\end{itemize}

\subsubsection{Experiment with randomly chosen features}
In the following, we report an additional experiment in the minigolf domain in which the features that the agent can perceive are randomly selected at the beginning, comparing the case in which we do not configure the environment and the case in which environment configuration is performed, and for different number of episodes collected. Although, less visible \wrt to the grid world case, we can see that for some features (\eg $\sqrt{x}$ and $\sqrt{xf}$) the environment configurability is beneficial.

\begin{figure}[h!]
\centering
\subcaptionbox{Without configuration}{\includegraphics[scale=1]{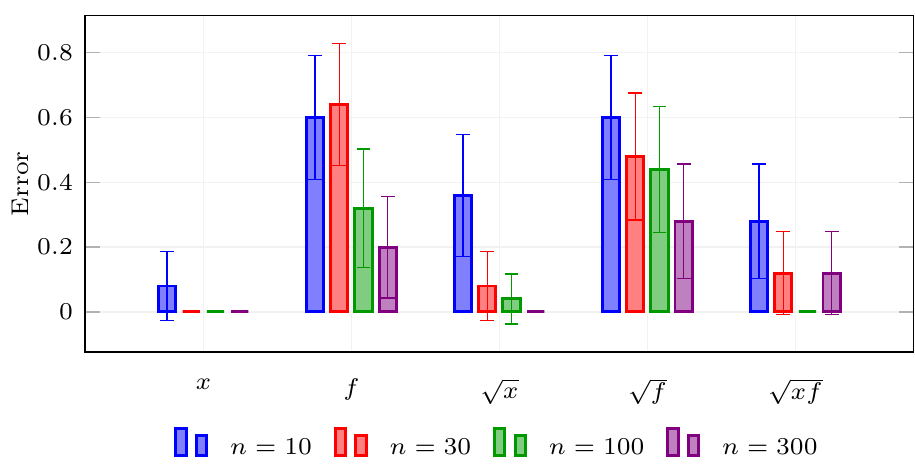}}%
\hfill
\subcaptionbox{With configuration}{\includegraphics[scale=1]{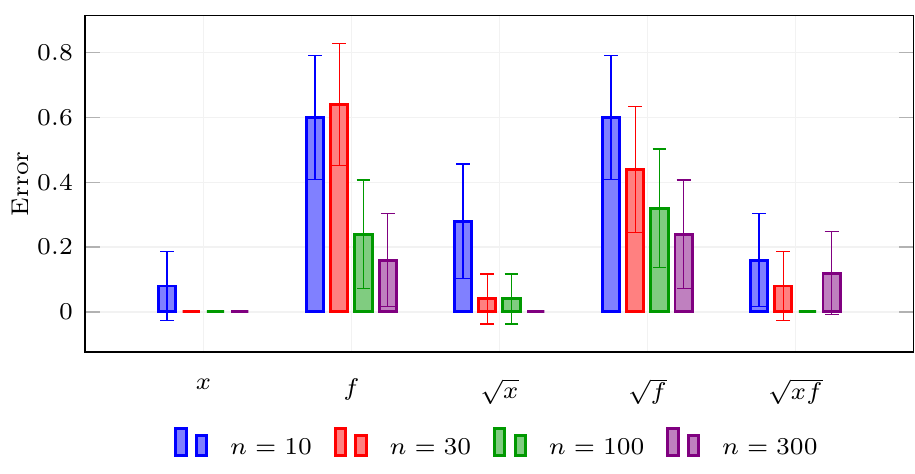}}%
\caption{Experiment with randomly chosen features on the minigolf domain for different number of episodes $n$. 100 runs, 95\% c.i.}
\end{figure}

\subsection{Simulated Car Driving}\label{apx:experimentsCar}
In this environment, an agent has to drive a car to reach the end of the track without running off the road. The control directives are the acceleration and the steering, and are expressed through a two dimensional bounded action space. The car has four sensors oriented in different directions: $-\frac{\pi}{4}$, $-\frac{\pi}{6}$, $\frac{\pi}{6}$, $\frac{\pi}{4}$ \wrt the axis pointing toward the front of the car. The values of these sensors are the normalized distances from the car to the nearest road margin along the direction of the sensor, or the maximum value if the margin is outside the range of the sensor. The complete set of state features is made up by the normalized car speed and the values of the four sensors. In the experiments, the agent has access to the speed and the sensor at angles $\frac{\pi}{6}$ and $\frac{\pi}{4}$. The track consists in a single road segment with a fixed curvature. 
The rewards are given proportionally to the speed of the car, \ie greater speeds yield higher rewards. The episode finishes when the car goes outside the road, and a negative reward is given in this case, when the track is completed, or when a maximum number of time steps is elapsed.

\subsubsection{Hyperparameters}
In the following, we report the hyperparameters used for the experiments on the simulated car driving:
\begin{itemize}
\item Horizon ($T$): 250
\item Discount factor ($\gamma$): 0.996
\item Policy covariance ($\mathbr{\Sigma}$): $0.1 \mathbr{I}$
\item Learning steps with G(PO)MDP: 100
\item Batch size: 50
\item Max-likelihood maximum update steps: 200
\item Max-likelihood learning rate (using Adam): 0.1
\item Significance of the likelihood-ratio tests ($\delta$): 0.1 rescaled by $0.1/5$ for the simplified identification rule and $0.1/32$ for the combinatorial identification rule
\end{itemize}

\end{document}